\newcommand\BibTeX{{\rmfamily B\kern-.05em \textsc{i\kern-.025em b}\kern-.08em
T\kern-.1667em\lower.7ex\hbox{E}\kern-.125emX}}
\newenvironment{lenumerate}[2][]
{\begin{enumerate}[label=(#2\arabic*),leftmargin=0.2in,itemindent=0.15in,#1]}
{\end{enumerate}}
\setlist*[enumerate,1]{label={\itshape\arabic*)}}
\newcommand{\paragraphswithstop}{%
\let\copyparagraph\paragraph%
\renewcommand\paragraph[1]{\copyparagraph{##1.}}%
}
\newsavebox{\boxifnotempty}
\newcommand{\displayifnotempty}[3]{\sbox\boxifnotempty{#2}\setbox0=\hbox{\usebox{\boxifnotempty}\unskip}%
\ifdim\wd0=0pt
\else
 #1\usebox{\boxifnotempty}#3%
\fi%
}
\newcommand{\ifempty}[2]{\setbox0=\hbox{#1\unskip}%
\ifdim\wd0=0pt%
 #2%
\fi%
}
\newcommand{\ifnotempty}[2]{\setbox0=\hbox{#1\unskip}%
\ifdim\wd0>0pt%
 #2%
\fi%
}
\newcommand*\newstoreddef[1]{
  \BeforeClosingMainAux{%
    \immediate\write\@auxout{%
      \string\restoredef{#1}{\csname #1\endcsname}%
    }%
  }%
}
\newcommand*{\restoredef}[2]{
  \expandafter\gdef\csname stored@#1\endcsname{#2}%
}
\newcommand*{\storeddef}[1]{
  \@ifundefined{stored@#1}{0}{\csname stored@#1\endcsname}%
}
\newcommand{\real}[1]{\mathbb{R}^{#1}{}}
\newcommand{\bmat}[1]{\begin{bmatrix}#1\end{bmatrix}}
\newcommand{\transpose}{^\mathrm{T}}
\newcommand{\defeq}{\doteq}
\DeclarePairedDelimiter{\abs}{\lvert}{\rvert}
\DeclarePairedDelimiter{\norm}{\lVert}{\rVert}
\newcommand{\vct}[1]{\mathbf{#1}}
\DeclareMathOperator{\rank}{rank}
\DeclareMathOperator*{\argmin}{\arg\!\min}
\DeclareMathOperator*{\argmax}{\arg\!\max}
\DeclareMathOperator{\trace}{tr}
\DeclareMathOperator{\stack}{stack}
\newcommand{\intersect}{\cap}
\newcommand{\subjectto}{\textrm{subject to }}
\providecommand{\va}{\vct{a}}
\providecommand{\vb}{\vct{b}}
\providecommand{\vc}{\vct{c}}
\providecommand{\vd}{\vct{d}}
\providecommand{\ve}{\vct{e}}
\providecommand{\vp}{\vct{p}}
\providecommand{\vq}{\vct{q}}
\providecommand{\vr}{\vct{r}}
\providecommand{\vt}{\vct{t}}
\providecommand{\vu}{\vct{u}}
\providecommand{\vv}{\vct{v}}
\providecommand{\vw}{\vct{w}}
\providecommand{\vx}{\vct{x}}
\providecommand{\vy}{\vct{y}}
\providecommand{\vz}{\vct{z}}
\providecommand{\mA}{\vct{A}}
\providecommand{\mB}{\vct{B}}
\providecommand{\mC}{\vct{C}}
\providecommand{\mE}{\vct{E}}
\providecommand{\mG}{\vct{G}}
\providecommand{\mI}{\vct{I}}
\providecommand{\mM}{\vct{M}}
\providecommand{\mQ}{\vct{Q}}
\providecommand{\mR}{\vct{R}}
\providecommand{\mS}{\vct{S}}
\providecommand{\mT}{\vct{T}}
\providecommand{\mU}{\vct{U}}
\providecommand{\mV}{\vct{V}}
\providecommand{\mW}{\vct{W}}
\providecommand{\mX}{\vct{X}}
\providecommand{\mY}{\vct{Y}}
\providecommand{\mZ}{\vct{Z}}
\providecommand{\cB}{\mathcal{B}}
\providecommand{\cC}{\mathcal{C}}
\providecommand{\cE}{\mathcal{E}}
\providecommand{\cI}{\mathcal{I}}
\providecommand{\cP}{\mathcal{P}}
\providecommand{\cR}{\mathcal{R}}
\providecommand{\cS}{\mathcal{S}}
\providecommand{\cT}{\mathcal{T}}
\providecommand{\cU}{\mathcal{U}}
\providecommand{\cV}{\mathcal{V}}
\providecommand{\cW}{\mathcal{W}}
\providecommand{\cY}{\mathcal{Y}}
\def\IR{\mathbb R}			
\def\IS{\mathbb S}
\newcommand{\SO}[1]{\mathbf{SO}(#1)}
\newcommand{\newcolorlabel}[2]{%
  \expandafter\newcommand\csname #1\endcsname[1]{%
    \colorbox{#2}{\color{white}\textsf{\textbf{##1}}}}%
}
\newcommand{\newcommenter}[2]{%
  \expandafter\newcommand\csname #1\endcsname[1]{%
    \fcolorbox{#2}{#2}{\color{white}\textsf{\textbf{#1}}}
    {\color{#2}##1}}%
  \expandafter\newcommand\csname at#1\endcsname{%
    \fcolorbox{#2}{#2}{\color{white}\textsf{\textbf{@#1}}}
    {\color{#2}}}%
  \expandafter\newcommand\csname #1hl\endcsname[2]{%
    \colorbox{#2}{\color{white}\textsf{\textbf{#1}}}\sethlcolor{Azure2}\hl{##2}~%
    \expandafter\ifx\csname commentarrow\endcsname\relax$\leftarrow$\else \commentarrow[#2]\fi~%
    {\color{#2}##1}}%
  \expandafter\newcommand\csname #1st\endcsname[2]{%
    \colorbox{#2}{\color{white}\textsf{\textbf{#1}}}\sout{##2}~%
    \expandafter\ifx\csname commentarrow\endcsname\relax$\leftarrow$\else \commentarrow[#2]\fi~%
    {\color{#2}##1}}%
}
\tikzset{
  dim above/.style={to path={\pgfextra{
        \pgfinterruptpath
        \draw[>=latex,|->|] let
        \p1=($(\tikztostart)!1.5em!90:(\tikztotarget)$),
        \p2=($(\tikztotarget)!1.5em!-90:(\tikztostart)$)
        in(\p1) -- (\p2) node[pos=.5,sloped,above]{#1};
        \endpgfinterruptpath
      }
    }
  },
  dim double above/.style={to path={\pgfextra{
        \pgfinterruptpath
        \draw[>=latex,|->|] let
        \p1=($(\tikztostart)!3em!90:(\tikztotarget)$),
        \p2=($(\tikztotarget)!3em!-90:(\tikztostart)$)
        in(\p1) -- (\p2) node[pos=.5,sloped,above]{#1};
        \endpgfinterruptpath
      }
    }
  },
  dim below/.style={to path={\pgfextra{
        \pgfinterruptpath
        \draw[>=latex,|->|] let 
        \p1=($(\tikztostart)!-1em!-90:(\tikztotarget)$),
        \p2=($(\tikztotarget)!-1em!90:(\tikztostart)$)
        in (\p1) -- (\p2) node[pos=.5,sloped,below]{#1};
        \endpgfinterruptpath
      }
    }
  },
}
\tikzset{
    right angle quadrant/.code={
        \pgfmathsetmacro\quadranta{{1,1,-1,-1}[#1-1]}     
        \pgfmathsetmacro\quadrantb{{1,-1,-1,1}[#1-1]}},
    right angle quadrant=1, 
    right angle length/.code={\def\rightanglelength{#1}},   
    right angle length=2ex, 
    right angle symbol/.style n args={3}{
        insert path={
            let \p0 = ($(#1)!(#3)!(#2)$) in     
                let \p1 = ($(\p0)!\quadranta*\rightanglelength!(#3)$), 
                \p2 = ($(\p0)!\quadrantb*\rightanglelength!(#2)$) in 
                let \p3 = ($(\p1)+(\p2)-(\p0)$) in  
            (\p1) -- (\p3) -- (\p2)
        }
    }
}
\newcommand{\pgfextractangle}[3]{%
    \pgfmathanglebetweenpoints{\pgfpointanchor{#2}{center}}
                              {\pgfpointanchor{#3}{center}}
    \global\let#1\pgfmathresult  
}
\newcommand{\commentarrow}[1][Azure4]{\tikz[baseline=-3pt]{\node[shape border uses incircle, fill=#1,rotate=180,single arrow, inner sep=1pt, minimum size=6pt, single arrow head extend=2pt]{};}}
\tikzset{ax/.style={-latex,line width=2pt}}
\tikzset{camera/.style={fill=Sienna1,fill opacity=0.5},%
image plane/.style={draw=RoyalBlue3,line width=2pt}}
\newcommand{\Wframe}{\prescript{\cW}{}}
\newcommand{\Bframe}[1][]{\prescript{\cB_{#1}}{}}
\newcommand{\Mat}[1]{\begin{bmatrix}#1\end{bmatrix}}	
\newcommand{\vect}[1]{\bm{\mathrm{#1}}}								
\def\omat{\vect{0}} 	
\def\ident{\vect{I}} 	
\def\trans{\mathrm{T}}
\newcommand{\aleq}[1]{\begin{align}#1\end{align}} 				
\def\IR{\mathbb R}					
\newlength{\figwidth}
\def\autoref#1{\Fref{#1}}
\def\mkfancyprefix#1#2{%
  \@namedef{fancyref#1labelprefix}{#1}%
  \begingroup\def\x{\endgroup\frefformat{vario}}%
    \expandafter\x\csname fancyref#1labelprefix\endcsname
      {\MakeLowercase{#2}\fancyrefdefaultspacing##1}%
  \begingroup\def\x{\endgroup\Frefformat{vario}}%
    \expandafter\x\csname fancyref#1labelprefix\endcsname
      {#2\fancyrefdefaultspacing##1}%
			\begingroup\def\x{\endgroup\frefformat{plain}}%
    \expandafter\x\csname fancyref#1labelprefix\endcsname
      {\MakeLowercase{#2}\fancyrefdefaultspacing##1}%
  \begingroup\def\x{\endgroup\Frefformat{plain}}%
    \expandafter\x\csname fancyref#1labelprefix\endcsname
      {#2\fancyrefdefaultspacing##1}%
}
\def\IR{\mathbb R}			
\def\IS{\mathbb S}
\def\vec{\mathrm{vec}}
\def\omat{\vect{0}} 
\def\ident{\vect{I}} 
\def\ave{\cE}
\def\trans{{\scriptstyle\ensuremath{\mathsf{T}}}}
\newcommand{\capgreek}[1]{\text{\boldmath{$#1$}}} 
\newcommand{\cmark}{\ding{51}}%
\newcommand{\xmark}{\ding{55}}%
\titleformat{\subsubsection}[block]   
  {\normalfont\normalsize\bfseries}
  {\thesubsubsection}{1em}{}
\title{\LARGE \bf IKSPARK: Obstacle-Aware Inverse Kinematics via Convex Optimization}
\author{Liangting Wu and Roberto Tron
\thanks{The authors are with the Department of Mechanical Engineering, Boston University, 110 Cummington Mall, Boston, MA 02215, USA.
Emails: {\tt\small tomwu@bu.edu, tron@bu.edu}.
The authors gratefully acknowledge the support by NSF award FRR-2212051.
Code available online: \href{https://bitbucket.org/liangtingwu/ikspark}{https://bitbucket.org/liangtingwu/ikspark}.
}
}
\begin{document}

\maketitle
\thispagestyle{empty}
\pagestyle{empty}

\begin{abstract}
Inverse kinematics (IK) is central to robot control and motion planning, yet its nonlinear kinematic mapping makes it inherently nonconvex and particularly challenging under complex constraints. We present IKSPARK (Inverse Kinematics using Semidefinite Programming And RanK minimization), an obstacle-aware IK solver for robots with diverse morphologies, including open and closed kinematic chains with spherical, revolute, and prismatic joints. 
Our formulation expresses IK as a semidefinite programming (SDP) problem with additional rank-1 constraints on symmetric matrices with fixed traces. IKSPARK first solves the relaxed SDP, whose infeasibility certifies infeasibility of the original IK problem, and then recovers a rank-1 solution using iterative rank-minimization methods with proven local convergence. Obstacle avoidance is handled through a convexified formulation of mixed-integer constraints. Extensive experiments show that IKSPARK computes highly accurate solutions across various kinematic structures and constrained environments without post-processing. In obstacle-rich settings, especially fixed workcell environments, IKSPARK achieves substantially higher success rates than traditional nonlinear optimization methods.
\end{abstract}

\section{Introduction}
\begin{figure*}[t]
  \centering
  \subfloat[]{%
    \includegraphics[width=0.4\textwidth]{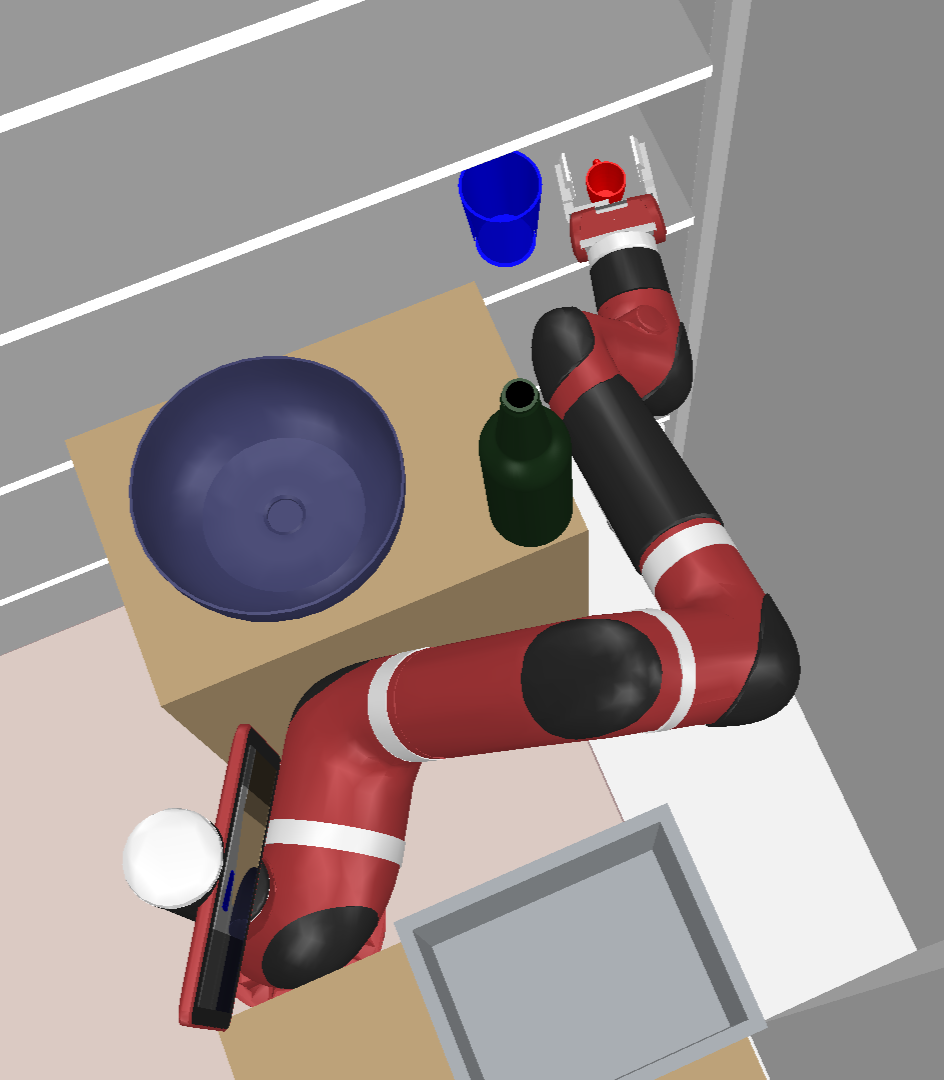}%
    \label{fig:sawyer_grasp_demo}}
  \subfloat[]{%
    \includegraphics[width=0.336\textwidth]{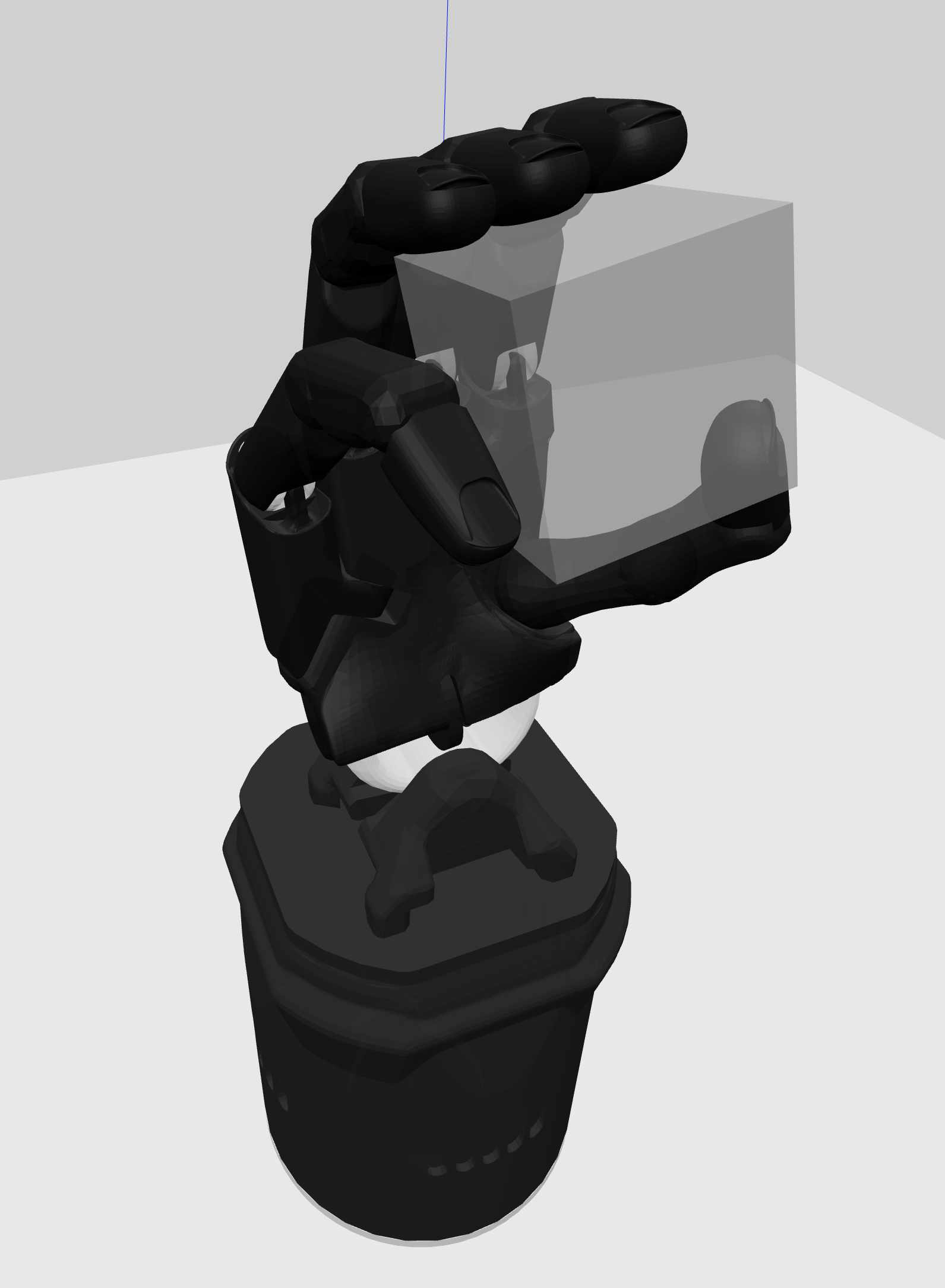}%
    \label{fig:hand_grasp_demo}}
  \caption{Our IK solver can find obstacle-free configurations for Sawyer grasping a mug (left) and Shadow Dexterous Hand holding a cube (right).}
  \label{fig:demos}
\end{figure*}

\noindent Over the past several decades, robot manipulators have gained widespread applications in areas such as manufacturing, medical surgeries, and aerospace. A fundamental problem for robots in these settings is inverse kinematics (IK) \cite{siciliano2009modelling}, where one needs to determine the values of the joint configurations that result in a given desired position and orientation of the end-effector.

Despite its importance, solving the IK problem remains challenging for several reasons:
\begin{enumerate}
\item The kinematic map from joint configurations to end-effector poses is generally nonlinear.
\item Depending on the robot structure and the target pose, the problem may admit no solution, a finite number of distinct solutions, or infinitely many solutions.
\item The problem is often subject to additional nonlinear constraints arising from robot kinematics and task requirements, including joint limits, collision avoidance, and closed kinematic chains.
\item Infeasibility is common in practice, yet certifying that no feasible solution exists is itself challenging.
\end{enumerate}

In this paper, we propose an IK solver named IKSPARK (Inverse Kinematics using Semidefinite Programming And RanK minimization). Instead of using joint angles, we parameterize the robot inverse kinematics problem over the set of rotation matrices $\SO{3}$. We show that, by using this parameterization, we can write the kinematic constraints of the robot as convex constraints of rotation matrices. To overcome the nonlinearity brought by the manifold $\SO{3}$, we introduce a semidefinite relaxation of the kinematic constraints followed by a rank minimization algorithm.

The main contributions of this work are as follows:
\begin{itemize}
\item We parametrize the problem as a function of the rotation of reference frames of each link, allowing us to easily incorporate a variety of constraints and arrangements of links, covering:
\begin{itemize}
    \item spherical joints;
    \item revolute joints with and without angle limits;
    \item prismatic joints;
    \item open/closed kinematic chains.
\end{itemize}
\item We develop a relaxation of the manifold of robot configurations as a combination of linear and semidefinite constraints on constant-trace matrices. Notably, we show that our relaxation is convex and bounded, it contains every kinematically feasible solution, and is tight in the sense that every kinematically feasible solution is on the boundary of the relaxed set. 
Moreover, we can use the relaxation as a sound method to check for kinematic feasibility.
\item The variables in our SDP formulation are carefully chosen; specifically, we use a $4\times 4$ PSD (positive semidefinite) matrix for each revolute joint and an $8\times 8$ PSD matrix for each prismatic joint.
This is in contrast to other SDP-based IK solvers, where either a single large PSD matrix is used for the entire robot or multiple PSD matrices of larger dimensions are used for each joint.
\item We propose a novel rank minimization algorithm to project any solution of the relaxed problem to a solution in the original IK problem. The algorithm is based on the maximization of the largest eigenvalue of matrices with fixed trace over the relaxed set. We provide local convergence guarantees, and we show that, if the algorithm converges to a rank-1 solution, then it will exactly satisfy all the constraints of the original IK problem (including the $\SO{3}$ rotation constraints).
\item We incorporate obstacle-avoidance constraints into the IK problem by formulating them as nonconvex constraints with mixed-integer variables and then convexifying them into linear constraints. Through extensive experiments, we show that our solver can find obstacle-free configurations in a variety of environments with obstacles with a higher success rate than nonlinear optimization methods.
\end{itemize}
With respect to our preliminary work in \cite{wu2023cdc}, we include prismatic joints, use of quaternions to reduce the number of variables, an alternative rank minimization approach for problems with uncertain minimal costs, obstacle avoidance, and a method for motion generation.

\section{Related Work}
Previous work has shown that a finite number of analytical solutions for manipulators with up to 6-DOF exist \cite{lee1988displacement}, and can be derived in algebraic form \cite{raghavan1993inverse,husty2007new}. The popular solver IKFast \cite{diankov2010automated} generalizes this method and automatically computes IK solutions in closed form. However, analytical methods are generally unavailable for robots with higher DOFs. On the other hand, numerical methods have been successful in solving the IK problem, producing numerous efficient inverse-kinematics solvers such as CCD \cite{kenwright2012inverse}, triangulation \cite{muller2007triangulation}, and FABRIK \cite{aristidou2011fabrik}.
These solvers often perturb joint angles iteratively to decrease the distance between the end-effector and the target. Despite their efficiency, kinematic constraints such as collision avoidance, multiple end-effectors, and closed chains are either ignored or require ad-hoc modifications.

Other approaches cast inverse kinematics as a nonlinear optimization problem and solve it numerically, as in \cite{Naour2019kinematics,Beeson2015Trac}.
The robotics toolbox Drake \cite{drake} provides a general framework for modeling and solving obstacle-aware IK problems \cite{tedrake2023manipulation} with powerful nonlinear optimization solvers, including SNOPT \cite{gill2005snopt}, IPOPT \cite{wachter2006ipopt}, and NLOPT \cite{johnson2014nlopt}.
Although these methods are often computationally efficient, they generally cannot guarantee convergence to a global optimum from arbitrary initial guesses.

Instead of solving the nonlinear IK problem directly, several works relax the nonlinear constraints and solve one or multiple approximate convex problems. 
Unlike nonlinear programming, these formulations can typically provide certificates of infeasibility, do not require initial guesses, and can be solved with many convex optimization solvers.
In practice, these methods generally show a slower computational speed than nonlinear programming methods, but they can be more robust to local minima.
For example, \cite{dai2019global} propose \texttt{GlobalInverseKinematics}, a mixed-integer convex optimization approach that can either certify that the problem is infeasible or solve globally for an approximate solution. Closest to our work are \texttt{SDP-IK} \cite{yenamandra2019convex} and \texttt{CIDGIK} \cite{giamou2022convex}, both of which formulate IK as SDPs with additional low-rank constraints. However, the two methods rely on different parameterizations: \texttt{SDP-IK} uses global rigid-body transformations, whereas \texttt{CIDGIK} adopts a distance-geometric formulation. They also treat the rank constraints differently: \texttt{SDP-IK} drops them to obtain a convex relaxation, while \texttt{CIDGIK} introduces a rank-minimization procedure.

Compared with \texttt{SDP-IK} \cite{yenamandra2019convex}, our approach uses a simpler, lower-dimensional formulation. Moreover, unlike \texttt{SDP-IK}, the recovered rotations are already numerically close to valid elements of $\SO{3}$, and therefore do not require projection onto $\SO{3}$ as a post-processing step, thus avoiding projection errors. Compared with \texttt{CIDGIK} \cite{giamou2022convex}, our method uses a parameterization whose dimension grows linearly, rather than quadratically, with the robot DoFs. Specifically, we represent the rotation of each link with a $7\times7$ PSD matrix. We further show that this can be reduced to a $4\times 4$ PSD matrix using a quaternion-based formulation. Furthermore, our method employs a different rank-minimization scheme that maximizes the eigenvalues of positive-semidefinite matrices with constant trace. In addition, we establish local convergence of the proposed method to feasible solutions.
We provide a comparison of our method with several other convex optimization-based IK solvers in Table~\ref{tab:convex_ik_compare}.

Some other work, \cite{li2020robot,yang2019quaternion,yang2020teaser,yang2022certifiable}, investigate semidefinite relaxation of problems beyond IK that involve rotations. The tightness of the semidefinite relaxation techniques for such problems is evaluated in \cite{saunderson2015sdp,bandeira2017tightness,peng2022semidefinite}.

When additional constraints are introduced, IK problems become more challenging. Obstacle avoidance is particularly difficult because of its nonlinearity. Many works integrate obstacle avoidance into Jacobian-based solvers using task-priority methods \cite{di_lillo_safety-related_2018,khatib_task_2020}, where the collision constraint is encoded as a desired task value for the robot to achieve.
Other work enforces obstacle avoidance through nonlinear constraints in nonconvex optimization problems solved using nonlinear programming techniques \cite{weingartshofer2023optimization,yang2024obstacle,marangoz2023dawnik}. 

Closer to our approach,
\cite{dai2019global} and \cite{giamou2022convex} present mixed-integer programming and semidefinite relaxation formulations, respectively, for obstacle avoidance, which are different from the convexification approach used in this paper. 

\begin{table}[htb]
  \centering
  \resizebox{0.48\textwidth}{!}{
  \begin{tabular}{c c c c c}
    \toprule
    & IKSPARK & GlobalIK & SDP-IK & CIDGIK \\
    \midrule
    Optimization type & SDP & MIP & SDP & SDP \\
    Joint limits & \cmark & \cmark & \cmark & \xmark \\
    Obstacle avoidance & \cmark & \cmark & \xmark & \cmark \\
    Self-collisions & \xmark & \xmark & \xmark & \cmark \\
    Prismatic joints & \cmark & \xmark & \xmark & \xmark \\
    Number of variables & $(4\times4)n_l$ & $(30+9m)n_l$ & \makecell{$(9\times 9+$\\$3\times3)3n_j$} & \makecell{$(2n_j+3)$\\$\times(2n_j+3)$} \\
    \bottomrule
  \end{tabular}
  }
  \caption{Comparison of IKSPARK with other inverse kinematics solvers based on convex optimization. Here, $n_j$ denotes the number of revolute joints, $n_l$ the number of links, and $m$ the number of intervals used in the rotation relaxation.}
  \label{tab:convex_ik_compare}
\end{table}

\section{Parameterization}
This section discusses some notations and how to model general kinematic chains using rotations and translations.

\subsection{General notation}
We use $\ident_d$ to denote the identity matrix of dimension $d$, and we use $\ve_i$ to define a standard basis column vector with $1$ in the $i$-th entry and zero elsewhere. We denote the set of $n\times n$ symmetric matrices as $\IS^n$ and the set of $n\times n$ positive semidefinite matrices as $\IS^n_{+}$. For a matrix $\mM$, we denote $\mM(a_1:a_2,b_1:b_2)$ as the block of entries in $\mM$ covering the $a_1$-th to $a_2$-th rows and $b_1$-th to $b_2$-th columns.
We denote $\vec(\mM)$ as the vectorization of matrix $\mM$. The matrix inner product is defined as $\langle\mA,\mB\rangle=\trace(\mA\transpose\mB)$, where $\trace(\cdot)$ is the matrix trace.

We make use of the vectorization property of the Kronecker product $\otimes$: $(\mB^\trans \otimes \mA)\vec(\mX)=\vec(\mA\mX\mB)$, for any $\mA,\mB$ and $\mX$ of appropriate dimensions.

\subsection{Kinematic chains}
We define a world reference frame $\cW$ and associate with each robot link a reference frame $\cB_i$, $i\in\{1,\ldots,n\}$. For each revolute joint, the $z$-axis of the reference frame of its child is aligned with the joint axis.

We represent the robot structure by a graph $G=(\cV,\ave)$, where $\cV$ is the set of link indices and $\ave$ is the set of ordered pairs describing parent-child relations between links. In particular, $(i,j)\in\ave$, with $i,j\in\cV$, if link $i$ is the parent of link $j$. For robots composed only of spherical, revolute, and prismatic joints, $\ave$ can be partitioned into three disjoint subsets, $\ave_s$, $\ave_r$, and $\ave_p$, corresponding to spherical, revolute, and prismatic joints, respectively. We denote by $n_p$ the cardinality of $\ave_p$. 

We define $\cV_t$ and $\cV_r$ as the subsets of $\cV$ whose translations and rotations, respectively, are to be determined. We denote by $n_r$ the cardinality of $\cV_r$. We use the subscripts $base$ and $ee$ to denote the base frame (i.e., a frame rigidly attached to $\cW$) and the end-effector, respectively. We assume there exists a path from the base, $base=p_1$, to the end-effector, $ee=p_n$, given by $\mathcal{P}_{fk}=\{p_1,p_2,\ldots,p_n\}\subseteq\cV$, where $(p_i,p_{i+1})\in\ave$ for all consecutive nodes $p_i,p_{i+1}\in\mathcal{P}_{fk}$. We denote the set of edges along this forward-kinematics path by $\ave_{fk}=\{(p_i,p_{i+1})\mid (p_i,p_{i+1})\in\ave\}$.

\subsection{Modeling kinematic chains using rotations and translations}\label{sec:parameter}
The pose $\{\mR_i\in\SO{3},\mT_i\in\IR^{3}\}$ represents the rigid body transformation (rotation and translation) from the reference frame $\mathcal{B}_i$ to the world frame, i.e., $\mR_i = \Wframe\mR_{\mathcal{B}_i}$ and $\mT_i=\Wframe\mT_{\mathcal{B}_i}$. 
To simplify the notation, we denote the relative rotation $\Bframe[i]\mR_{\mathcal{B}_j}$ and translation $\Bframe[i]\mT_{\cB_j}$ from $\mathcal{B}_j$ to $\mathcal{B}_i$ as ${}^i\mR_j$, and ${}^{i}\mT_j$, respectively.

In this paper, we assign a reference frame $\cB_i$ to each link and parameterize our problem on subsets of $\{\mR_i,\mT_i\}$.
This parameterization allows us to formulate IK as convex optimization problems, as we discuss in Section \ref{sec:opt}.


\begin{figure}[hbt]
  \centering
  \tikzset{every picture/.style={line width=0.75pt}} 

\begin{tikzpicture}[x=0.75pt,y=0.75pt,yscale=-1,xscale=1]

\draw   (349.52,84.65) .. controls (343.82,84.44) and (340.86,83.46) .. (342.9,82.48) .. controls (344.94,81.49) and (351.22,80.86) .. (356.92,81.08) .. controls (362.62,81.29) and (365.59,82.27) .. (363.55,83.25) .. controls (361.5,84.24) and (355.23,84.87) .. (349.52,84.65)(340.54,89) .. controls (323.1,88.34) and (314.64,85.06) .. (321.64,81.67) .. controls (328.64,78.29) and (348.46,76.07) .. (365.9,76.73) .. controls (383.35,77.39) and (391.81,80.67) .. (384.81,84.06) .. controls (377.8,87.44) and (357.98,89.66) .. (340.54,89) ;
\draw    (258.5,118.75) -- (264.5,134.75) ;
\draw    (319.2,104.06) .. controls (331.31,112.46) and (375.04,107.03) .. (393.07,97.39) ;
\draw    (203.5,119.25) -- (212,138.25) ;
\draw   (224.47,110.25) .. controls (220.52,111.72) and (218.22,111.89) .. (219.34,110.64) .. controls (220.45,109.39) and (224.55,107.19) .. (228.49,105.73) .. controls (232.44,104.27) and (234.74,104.09) .. (233.62,105.34) .. controls (232.51,106.59) and (228.41,108.79) .. (224.47,110.25)(219.57,115.75) .. controls (207.5,120.22) and (200.81,120.38) .. (204.63,116.09) .. controls (208.44,111.81) and (221.32,104.71) .. (233.39,100.24) .. controls (245.46,95.76) and (252.15,95.61) .. (248.33,99.89) .. controls (244.52,104.18) and (231.64,111.28) .. (219.57,115.75) ;
\draw    (318.96,84.55) -- (319.2,104.06) ;
\draw    (212,138.25) .. controls (222.62,140.56) and (245.13,129.79) .. (256,119.25) ;
\draw    (249.5,96.75) .. controls (273.96,92.43) and (292.27,101.59) .. (318.96,101.84) ;
\draw    (244.5,103.25) .. controls (268.96,98.93) and (303.8,106.66) .. (329.17,107.65) ;
\draw    (387.39,100.36) -- (387.39,119.01) ;
\draw    (326.2,126.75) .. controls (346.3,130.38) and (378.58,130.05) .. (387.39,119.01) ;
\draw    (256,119.25) .. controls (280,113.25) and (294.25,123.46) .. (326.2,126.75) ;
\draw    (387.39,81.71) .. controls (416.79,80.59) and (413.21,84.66) .. (436.27,89.6) ;
\draw    (393.07,97.39) .. controls (403.5,94.25) and (412.5,96.75) .. (431,104.75) ;
\draw    (195,164.25) .. controls (207.5,161.25) and (212.5,158.25) .. (221.5,156.75) ;
\draw    (382.53,78.66) .. controls (401.5,77.75) and (408.5,77.25) .. (427.58,81.64) ;
\draw [color={rgb, 255:red, 0; green, 24; blue, 255 }  ,draw opacity=1 ][line width=1.5]    (354.75,97.39) -- (354.55,69.02) ;
\draw [shift={(354.53,66.02)}, rotate = 89.6] [color={rgb, 255:red, 0; green, 24; blue, 255 }  ,draw opacity=1 ][line width=1.5]    (14.21,-4.28) .. controls (9.04,-1.82) and (4.3,-0.39) .. (0,0) .. controls (4.3,0.39) and (9.04,1.82) .. (14.21,4.28)   ;
\draw [color={rgb, 255:red, 0; green, 24; blue, 255 }  ,draw opacity=1 ][line width=1.5]    (354.75,97.39) -- (390.07,97.39) ;
\draw [shift={(393.07,97.39)}, rotate = 180] [color={rgb, 255:red, 0; green, 24; blue, 255 }  ,draw opacity=1 ][line width=1.5]    (14.21,-4.28) .. controls (9.04,-1.82) and (4.3,-0.39) .. (0,0) .. controls (4.3,0.39) and (9.04,1.82) .. (14.21,4.28)   ;
\draw [color={rgb, 255:red, 0; green, 24; blue, 255 }  ,draw opacity=1 ][line width=1.5]    (354.75,97.39) -- (377.96,120.61) ;
\draw [shift={(380.08,122.73)}, rotate = 225] [color={rgb, 255:red, 0; green, 24; blue, 255 }  ,draw opacity=1 ][line width=1.5]    (14.21,-4.28) .. controls (9.04,-1.82) and (4.3,-0.39) .. (0,0) .. controls (4.3,0.39) and (9.04,1.82) .. (14.21,4.28)   ;
\draw [color={rgb, 255:red, 17; green, 15; blue, 244 }  ,draw opacity=1 ][line width=1.5]  [dash pattern={on 5.63pt off 4.5pt}]  (354.74,45.43) -- (354.74,146.35) -- (354.74,170.6) ;
\draw    (431,104.75) .. controls (427.87,96.54) and (429.87,96.04) .. (436.27,89.6) ;
\draw    (427.58,81.64) .. controls (426.37,83.04) and (428.87,90.04) .. (436.27,89.6) ;
\draw    (188.5,150.75) .. controls (200.5,147.25) and (209.51,142.27) .. (220.29,138.54) ;
\draw    (221.5,156.75) .. controls (238.3,156.25) and (259.5,145.25) .. (264.5,134.75) ;
\draw    (195,164.25) .. controls (197,157.75) and (188,157.75) .. (188.5,150.75) ;
\draw    (187,144.75) .. controls (196.5,140.25) and (203.5,138.25) .. (210.79,133.54) ;
\draw    (187,144.75) .. controls (187.5,148.25) and (170.5,151.75) .. (188.5,150.75) ;
\draw    (182,158.25) .. controls (193,166.75) and (182,172.25) .. (195,164.25) ;
\draw    (180,151) .. controls (178,155.25) and (179.5,157.25) .. (182,158.25) ;
\draw [color={rgb, 255:red, 208; green, 2; blue, 27 }  ,draw opacity=1 ][line width=1.5]    (233.3,123.46) -- (220.87,95.27) ;
\draw [shift={(219.66,92.52)}, rotate = 66.19] [color={rgb, 255:red, 208; green, 2; blue, 27 }  ,draw opacity=1 ][line width=1.5]    (14.21,-4.28) .. controls (9.04,-1.82) and (4.3,-0.39) .. (0,0) .. controls (4.3,0.39) and (9.04,1.82) .. (14.21,4.28)   ;
\draw [color={rgb, 255:red, 208; green, 2; blue, 27 }  ,draw opacity=1 ][line width=1.5]    (233.3,123.46) -- (259.78,111.91) ;
\draw [shift={(262.53,110.72)}, rotate = 156.44] [color={rgb, 255:red, 208; green, 2; blue, 27 }  ,draw opacity=1 ][line width=1.5]    (14.21,-4.28) .. controls (9.04,-1.82) and (4.3,-0.39) .. (0,0) .. controls (4.3,0.39) and (9.04,1.82) .. (14.21,4.28)   ;
\draw [color={rgb, 255:red, 208; green, 2; blue, 27 }  ,draw opacity=1 ][line width=1.5]    (233.3,123.46) -- (262.52,137.06) ;
\draw [shift={(265.24,138.32)}, rotate = 204.96] [color={rgb, 255:red, 208; green, 2; blue, 27 }  ,draw opacity=1 ][line width=1.5]    (14.21,-4.28) .. controls (9.04,-1.82) and (4.3,-0.39) .. (0,0) .. controls (4.3,0.39) and (9.04,1.82) .. (14.21,4.28)   ;
\draw [color={rgb, 255:red, 208; green, 2; blue, 27 }  ,draw opacity=1 ][line width=1.5]  [dash pattern={on 5.63pt off 4.5pt}]  (210.35,71.92) -- (256.26,175) ;

\draw (331.36,87.46) node [anchor=north west][inner sep=0.75pt]  [color={rgb, 255:red, 0; green, 1; blue, 247 }  ,opacity=1 ] [align=left] {$\displaystyle \mathcal{B}_{j}$};
\draw (212.32,119) node [anchor=north west][inner sep=0.75pt]   [align=left] {$\displaystyle \mathcal{\textcolor[rgb]{0.82,0.01,0.11}{B}}\textcolor[rgb]{0.82,0.01,0.11}{_{i}}$};
\draw (267.33,131) node [anchor=north west][inner sep=0.75pt]   [align=left] {$\displaystyle \textcolor[rgb]{0.82,0.01,0.11}{x_{i}}$};
\draw (266.5,100.33) node [anchor=north west][inner sep=0.75pt]   [align=left] {$\displaystyle \textcolor[rgb]{0.82,0.01,0.11}{y_{i}}$};
\draw (225.5,78.67) node [anchor=north west][inner sep=0.75pt]   [align=left] {$\displaystyle \textcolor[rgb]{0.82,0.01,0.11}{z_{i}}$};
\draw (382.83,119.33) node [anchor=north west][inner sep=0.75pt]   [align=left] {$\displaystyle \textcolor[rgb]{0.03,0.08,0.96}{x_{j}}$};
\draw (394.83,93.33) node [anchor=north west][inner sep=0.75pt]   [align=left] {$\displaystyle \textcolor[rgb]{0.03,0.08,0.96}{y_{j}}$};
\draw (333.17,54.33) node [anchor=north west][inner sep=0.75pt]   [align=left] {$\displaystyle \textcolor[rgb]{0.03,0.08,0.96}{z_{j}}$};

\end{tikzpicture}
  \caption{Two connected links $(i,j)\in\ave_r$, each associated with a reference frame ($\cB_i$ and $\cB_j$).}
  \label{fig:frames}
\end{figure}

A general set of variables to be solved for the IK problem is
\aleq{
  \vx = \{\mT_{i}\in \IR^{3},\mR_{j}\in\SO{3}\mid i\in\cV_t,j\in\cV_r\}.\label{eq:xdefine}
}

\section{Kinematic constraints}
In this section, we model translation constraints between connected links as \emph{linear} equality constraints, derive \emph{linear} constraints for revolute-joint axes and angle limits, and then address the kinematic and translation constraints for prismatic joints.

\subsection{Revolute joints}
For two links connected by a revolute joint, the robot geometry determines their relative transformation. In particular, the links share a common rotation axis, and their relative rotation depends on the joint angle, which is constrained to lie within a given interval. We show that these conditions can be expressed as linear constraints on the rotations.

\subsubsection{Joint translations}\label{sec:jointtranslation}

For two links connected by a joint, $(i,j)\in\ave$, the following relation holds:
\aleq{
\mT_j-\mT_i = \mR_i{}^i\mT_{j}.\label{eq:translation relation}
}
Here, the translation ${}^i\mT_j$ is determined by the robot geometry, whereas the remaining quantities are variables to be solved for. Using this relation, we can express the end-effector translation as a function of the rotations along the edge set $\ave_{fk}$:
\begin{equation}\label{eq:m_te}
  \begin{aligned}
    \mT_{ee} &= \mT_{base}+\sum_{(i,j)\in\ave_{fk}} (\mT_j-\mT_i)\\
             &= \mT_{base}+\sum_{(i,j)\in\ave_{fk}} \mR_i{}^i\mT_j\\
             &=\mT_{base}+\sum_{(i,j)\in\ave_{fk}} ({}^i\mT_j^\trans\otimes\ident_3)\vec(\mR_i).
  \end{aligned}
\end{equation}
If the chain $\ave_{fk}$ contains only revolute or spherical joints, then \eqref{eq:m_te} is linear in the rotations. If prismatic joints are present, ${}^i\mT_j$ is no longer constant; in that case, we will show that $\mT_{ee}$ can still be written as a linear function of the rotations and an additional variable.

These linear expressions allow us to eliminate the translations $(\mT_i,\mT_j)$ algebraically from the problem formulation and solve only for the rotations; this is discussed further in Section \ref{sec:fk_linear}. If the links form a chain and at least one translation $\mT_i$ is known, for example, because the robot base is fixed, then all remaining translations can be recovered from \eqref{eq:translation relation} once the rotations and any additional variables associated with prismatic joints have been determined.

\subsubsection{Revolute joint axis constraints}
For each pair of links $(i,j)\in\ave_r$ that are connected with a revolute joint, the orientations $\mR_i$ and $\mR_j$ are related by the equation
\aleq{
  \mR_{j}=\mR_i\mR_e\mR_{\theta}\label{eq:jointrelation1}
}
where $\mR_{\theta}:\IR\mapsto\SO{3}$ is a function of the joint angle $\theta$ (defined such that $\mR_{\theta}=\ident$ when $\theta=0$), and $\mR_e$ is a parameter defined as the rotation from $\cB_j$ to $\cB_i$ when $\theta=0$. Without loss of generality, we assume that $\mR_{\theta}$ is a rotation about the $z$--axis, meaning that frames $\cB_i$ (after applying the rotation $\mR_e$) and $\cB_j$ share the same $z$--axis, that is:
\aleq{\label{eq:axis constraint}
  \mR_i\mR_e\ve_3-\mR_{j}\ve_3=\omat.
}
Using the vectorization property of the Kronecker product, we have
  \begin{equation}
    (\ve_3^\trans\mR_e^\trans\otimes\ident)\vec(\mR_i)-(\ve_3^\trans\otimes\ident)\vec(\mR_j)=\omat.\label{eq:jointaxis1}
  \end{equation}

\subsubsection{Revolute joint angle limits}\label{sec:jointlimit}
In physical systems, the joint angle $\theta$ in \eqref{eq:jointrelation1} is limited to an interval $[ -\phi_1,\phi_2 ]$. Without loss of generality, we can assume this interval to be symmetric, i.e., $\phi_1=\phi_2=\alpha$ (if not, we can shift the zero angle of $\theta$ so that it is in the middle of the interval). With this assumption, the joint angle constraint becomes 
\begin{equation}
    \abs{\theta} \leq \alpha. \label{eq:angle limit constraint}
\end{equation}

We introduce a formulation of the angle limit constraints that is linear in the rotations 
and exactly captures \eqref{eq:angle limit constraint}, as shown in the following proposition.

\begin{proposition}\label{prop:jointangle}
  For a robot with links $\cV$ connected by relation $\ave$ and variables defined by \eqref{eq:xdefine}, the revolute joint angle limit constraint is satisfied if $\{\mR_i\}$ are rotations, and for every $(\mR_i,\mR_j), (i,j)\in\cE_r$, it holds that 
  \aleq{
    \mR_i\mR_e\ve_1 -\mR_j\ve_1\in \cS\bigl(\sqrt{2-2\cos(\alpha_{ij})}\bigr),\label{eq:jointlimitballbound}
  }
  where $\cS(r)$ is a ball with radius $r$ and centered at the origin.
\end{proposition}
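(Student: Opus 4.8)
The plan is to show that the chord length between the two vectors $\mR_i\mR_e\ve_1$ and $\mR_j\ve_1$ depends only on the joint angle $\theta$, and that requiring this chord to lie in the ball $\cS(\sqrt{2-2\cos\alpha_{ij}})$ is exactly the constraint $\abs{\theta}\le\alpha_{ij}$. The starting point is the revolute joint relation \eqref{eq:jointrelation1}, which holds (for some $\theta$) precisely because the shared-axis constraint forces the relative rotation $(\mR_i\mR_e)\inverse\mR_j$ to fix $\ve_3$ and hence to be a rotation $\mR_\theta$ about the $z$-axis. Using \eqref{eq:jointrelation1} to write $\mR_j\ve_1=\mR_i\mR_e\mR_\theta\ve_1$, I would factor the vector appearing in \eqref{eq:jointlimitballbound} as
\begin{equation}
  \mR_i\mR_e\ve_1-\mR_j\ve_1=\mR_i\mR_e\bigl(\ve_1-\mR_\theta\ve_1\bigr).
\end{equation}

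Next I would use the hypothesis that $\{\mR_i\}$ are rotations: since $\mR_e$ is a fixed rotation and $\mR_i\in\SO{3}$, the product $\mR_i\mR_e$ is orthogonal and preserves the Euclidean norm. Therefore $\norm{\mR_i\mR_e\ve_1-\mR_j\ve_1}=\norm{\ve_1-\mR_\theta\ve_1}$, a quantity independent of $\mR_i$ and $\mR_e$. Writing $\mR_\theta$ explicitly as the rotation about the $z$-axis gives $\mR_\theta\ve_1=[\cos\theta,\ \sin\theta,\ 0]\trans$, and a one-line computation yields $\norm{\ve_1-\mR_\theta\ve_1}^2=(1-\cos\theta)^2+\sin^2\theta=2-2\cos\theta$. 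Hence the ball membership \eqref{eq:jointlimitballbound} is equivalent to the scalar inequality $\sqrt{2-2\cos\theta}\le\sqrt{2-2\cos\alpha_{ij}}$.

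The last and most delicate step is to convert this into $\abs{\theta}\le\alpha_{ij}$. Since both sides are nonnegative and the square root is monotone, the inequality is equivalent to $\cos\theta\ge\cos\alpha_{ij}$, and because cosine is even and strictly decreasing on $[0,\pi]$, this is in turn equivalent to $\abs{\theta}\le\alpha_{ij}$. The one point requiring care is the range of the angles: the chord length $\sqrt{2-2\cos\theta}$ is a faithful (injective) proxy for $\abs{\theta}$ only on $[0,\pi]$, so the equivalence relies on the benign physical assumption $\alpha_{ij}\in[0,\pi]$. Under this assumption the chain of equivalences closes and, in particular, the stated implication (ball membership together with $\mR_i$ being rotations $\Rightarrow$ angle limit satisfied) follows; the argument in fact proves the stronger two-sided equivalence.
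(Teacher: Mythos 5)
Your proof is correct and follows essentially the same route as the paper's: both reduce the ball constraint to the chord-length identity $\norm{\mR_i\mR_e\ve_1-\mR_j\ve_1}^2=2-2\cos\theta$ between the two unit $x$-axis vectors and then invoke monotonicity of the cosine. If anything, your write-up is slightly more careful than the paper's, which explicitly derives only the direction angle-limit $\Rightarrow$ ball-bound and leaves the converse (the direction actually asserted in the proposition) and the assumption $\alpha_{ij}\in[0,\pi]$ implicit, whereas you state the two-sided equivalence and flag the range condition.
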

\begin{proof}
  Since the rotations $\mR_j$ and $\mR_i\mR_e$ in \eqref{eq:axis constraint} share the same $z$-axis and their $x$-- and $y$--axes are on the same plane. The angle $\theta$ can be seen as the angular displacement from the $x$-axis of $\mR_i\mR_e$ to that of $\mR_j$. Therefore, for two vectors $\vw_i = \mR_i\mR_e\ve_1$ and $\vw_j = \mR_j\ve_1$, we have $\theta = \angle(\vw_i,\vw_j)$ and \eqref{eq:angle limit constraint} becomes $\abs{\angle(\vw_i,\vw_j)}\leq\alpha_{ij}$. Substituting \eqref{eq:jointrelation1}, and the expressions for $\vw_i,\vw_j$ into $\norm{\vw_i-\vw_j}^2$ we have $\norm{\vw_i-\vw_j}^2=2-2\cos(\theta)\leq 2-2\cos(\alpha_{ij})$ for $\theta\in[-\alpha_{ij},\alpha_{ij}]$, which gives the bound in \eqref{eq:jointlimitballbound} (see Fig. \ref{fig:jointlimit} for a visualization). 
\end{proof}
It is worth mentioning that \eqref{eq:jointlimitballbound} is algebraically equivalent to \cite[Eq. (13)]{dai2019global}, but in different form. 

\begin{figure}[htb]
  \centering
  \tikzset{every picture/.style={line width=0.75pt}} 

\begin{tikzpicture}[x=0.75pt,y=0.75pt,yscale=-1,xscale=1]

\draw  [draw opacity=0][fill={rgb, 255:red, 144; green, 19; blue, 254 }  ,fill opacity=0.37 ] (349.46,71.28) .. controls (359.56,86.21) and (365.49,104.37) .. (365.49,123.95) .. controls (365.49,142.96) and (359.91,160.63) .. (350.34,175.31) -- (275.95,123.95) -- cycle ;
\draw  [draw opacity=0][fill={rgb, 255:red, 248; green, 231; blue, 28 }  ,fill opacity=0.33 ] (131.31,123.92) .. controls (131.31,94.44) and (155.21,70.54) .. (184.69,70.54) .. controls (214.17,70.54) and (238.07,94.44) .. (238.07,123.92) .. controls (238.07,153.4) and (214.17,177.3) .. (184.69,177.3) .. controls (155.21,177.3) and (131.31,153.4) .. (131.31,123.92) -- cycle ;
\draw   (132.23,123.92) .. controls (132.23,94.44) and (156.13,70.54) .. (185.61,70.54) .. controls (215.09,70.54) and (238.99,94.44) .. (238.99,123.92) .. controls (238.99,153.4) and (215.09,177.3) .. (185.61,177.3) .. controls (156.13,177.3) and (132.23,153.4) .. (132.23,123.92) -- cycle ;
\draw  [color={rgb, 255:red, 249; green, 156; blue, 2 }  ,draw opacity=1 ] (242.09,123.92) -- (231.31,157.12) -- (203.06,177.64) -- (168.15,177.64) -- (139.91,157.12) -- (129.12,123.92) -- (139.91,90.71) -- (168.15,70.19) -- (203.06,70.19) -- (231.31,90.71) -- cycle ;
\draw [color={rgb, 255:red, 74; green, 144; blue, 226 }  ,draw opacity=1 ]   (184.69,123.92) -- (380.9,123.92) ;
\draw [color={rgb, 255:red, 65; green, 117; blue, 5 }  ,draw opacity=1 ]   (275.55,124.59) -- (354.65,161.09) ;
\draw [shift={(357.37,162.35)}, rotate = 204.78] [fill={rgb, 255:red, 65; green, 117; blue, 5 }  ,fill opacity=1 ][line width=0.08]  [draw opacity=0] (8.93,-4.29) -- (0,0) -- (8.93,4.29) -- cycle    ;
\draw [color={rgb, 255:red, 208; green, 2; blue, 27 }  ,draw opacity=1 ]   (184.69,123.92) -- (272.55,123.92) ;
\draw [shift={(275.55,123.92)}, rotate = 180] [fill={rgb, 255:red, 208; green, 2; blue, 27 }  ,fill opacity=1 ][line width=0.08]  [draw opacity=0] (8.93,-4.29) -- (0,0) -- (8.93,4.29) -- cycle    ;
\draw [color={rgb, 255:red, 65; green, 117; blue, 5 }  ,draw opacity=1 ] [dash pattern={on 0.84pt off 2.51pt}]  (275.55,123.92) -- (348.84,71.43) ;
\draw  [draw opacity=0] (316.12,94.38) .. controls (322.03,102.48) and (325.57,112.41) .. (325.73,123.15) -- (275.55,123.92) -- cycle ; \draw  [color={rgb, 255:red, 65; green, 117; blue, 5 }  ,draw opacity=1 ] (316.12,94.38) .. controls (322.03,102.48) and (325.57,112.41) .. (325.73,123.15) ;  
\draw [color={rgb, 255:red, 65; green, 117; blue, 5 }  ,draw opacity=1 ] [dash pattern={on 0.84pt off 2.51pt}]  (275.55,123.92) -- (348.76,174.31) ;
\draw   (185.61,123.92) .. controls (185.61,74.24) and (225.87,33.98) .. (275.55,33.98) .. controls (325.22,33.98) and (365.49,74.24) .. (365.49,123.92) .. controls (365.49,173.59) and (325.22,213.86) .. (275.55,213.86) .. controls (225.87,213.86) and (185.61,173.59) .. (185.61,123.92) -- cycle ;
\draw [color={rgb, 255:red, 65; green, 117; blue, 5 }  ,draw opacity=1 ] [dash pattern={on 0.84pt off 2.51pt}]  (200.79,175.06) -- (275.55,124.59) ;
\draw [color={rgb, 255:red, 65; green, 117; blue, 5 }  ,draw opacity=1 ] [dash pattern={on 0.84pt off 2.51pt}]  (201.46,72.69) -- (275.55,123.92) ;
\draw [color={rgb, 255:red, 65; green, 117; blue, 5 }  ,draw opacity=1 ]   (275.55,124.59) -- (197.33,86.31) ;
\draw [shift={(194.64,84.99)}, rotate = 26.08] [fill={rgb, 255:red, 65; green, 117; blue, 5 }  ,fill opacity=1 ][line width=0.08]  [draw opacity=0] (8.93,-4.29) -- (0,0) -- (8.93,4.29) -- cycle    ;
\draw [color={rgb, 255:red, 144; green, 19; blue, 254 }  ,draw opacity=1 ]   (185.61,123.92) -- (193.96,87.91) ;
\draw [shift={(194.64,84.99)}, rotate = 103.06] [fill={rgb, 255:red, 144; green, 19; blue, 254 }  ,fill opacity=1 ][line width=0.08]  [draw opacity=0] (8.93,-4.29) -- (0,0) -- (8.93,4.29) -- cycle    ;

\draw (338.26,94.07) node [anchor=north west][inner sep=0.75pt]    {$\textcolor[rgb]{0.25,0.46,0.02}{\alpha }\textcolor[rgb]{0.25,0.46,0.02}{_{ij}}$};
\draw (162.3,72.3) node [anchor=north west][inner sep=0.75pt]  [color={rgb, 255:red, 65; green, 117; blue, 5 }  ,opacity=1 ]  {$-\mathbf{w}_{j}$};
\draw (340,138.3) node [anchor=north west][inner sep=0.75pt]  [color={rgb, 255:red, 65; green, 117; blue, 5 }  ,opacity=1 ]  {$\mathbf{w}_{j}$};
\draw (217.9,123.92) node [anchor=north west][inner sep=0.75pt]    {$\textcolor[rgb]{0.82,0.01,0.11}{\mathbf{w}_{i}}$};
\draw (141.1,92.6) node [anchor=north west][inner sep=0.75pt]    {$\textcolor[rgb]{0.56,0.07,1}{\mathbf{w}}\textcolor[rgb]{0.56,0.07,1}{_{i}}\textcolor[rgb]{0.56,0.07,1}{-}\textcolor[rgb]{0.56,0.07,1}{\mathbf{w}}\textcolor[rgb]{0.56,0.07,1}{_{j}}$};

\end{tikzpicture}
  \caption{The joint limit between two links $(i,j)\in\ave_r$ can be written as an angle limit between two unit vectors $\vw_i = \mR_i\mR_e\ve_1$ and $\vw_j = \mR_j\ve_1$ (purple sector), which can be further bounded by a ball (painted yellow) on $\vw_i-\vw_j$. The ball can then be approximated by linear inequalities (orange polygon).}
  \label{fig:jointlimit}
\end{figure}

\subsection{Prismatic Joints}\label{sec:prismatic}
For two links $(i,j)$ connected by a prismatic joint $(i,j)\in\ave_p$, the physical limit of the joint restricts that \begin{enumerate*}
    \item the orientation is preserved throughout the sliding of the links, and \item the relative distance of the two links is bounded.
\end{enumerate*}
A visualization of these relations is shown in Fig. \ref{fig:prismatic}, where the two links are sliding along a common axis with a bounded displacement.
The orientation is preserved when $\mR_i = \mR_j$.
Meanwhile \eqref{eq:translation relation} holds and can be written as 
\begin{equation}\label{eq:prismatic1}
\begin{aligned}
    \mT_j &= \mT_i+(\tau_l+\tau_i(\tau_u-\tau_l))\mR_i\mR_p\ve_3 
\end{aligned}
\end{equation}
where $\mR_p$ is a known orientation such that the $z$-axes of $\cB_i$ and $\cB_j$ are in the same direction, and $\tau_i\in [0,1]$ is a \emph{bounded} variable that represents the extension of the joint, and $\tau_l,\tau_u$ are the lower- and upper-limits of the joint extension. To simplify, we can assume without loss of generality that $\mR_p=\mI$ (if not, we can perform a change of variable $\tilde{\mR}_i=\mR_i\mR_p$ and the actual $\mR_i$ can be recovered by multiplying $\mR_p^\trans$). In this way, \eqref{eq:prismatic1} becomes $\mT_j = \mT_i+(\tau_l+\tau_i(\tau_u-\tau_l))\mR_i\ve_3$. We then give the following definition of prismatic joints.

\begin{figure}[htb]
  \centering
  \input{prismatic_concept}
  \caption{Two links connected by a prismatic joint, where the two associated reference frames follow a bounded displacement along the common $z$-axis.}
  \label{fig:prismatic}
\end{figure}

\begin{definition}\label{def:prismatic}
    For a robot with links $\cV$ connected by relation $\ave$ and variables defined by \eqref{eq:xdefine}, a prismatic joint $(i,j)\in\ave_p$ is defined by the constraints:
    \begin{equation}\label{eq:prismatic_trans}
    \begin{aligned}
        &\mR_i=\mR_j, \\
        &\mR_i,\mR_j\in \SO{3}\\
        &\mT_j = \mT_i+(\tau_l+\tau_i(\tau_u-\tau_l))\mR_i^{(3)},\\
        &\tau_i\in [0,1],
    \end{aligned}
    \end{equation}
where $\mR_i^{(3)}$ is the third column of $\mR_i$. 
\end{definition}

With \eqref{eq:prismatic_trans}, we can rewrite \eqref{eq:m_te} as 
\begin{equation}\label{eq:ee trans forward kinematics}
\begin{aligned}
    \mT_{ee} =&\mT_{base}+\sum_{(i,j)\in \ave_{fk}\cap(\ave_{r}\cup \ave_{s})} ({}^i\mT_j^\trans\otimes\ident_3)\vec(\mR_i)\\
    &+\sum_{(i,j)\in\ave_{fk}\cap\ave_{p}}(\tau_l+\tau_i(\tau_u-\tau_l))\mR_i^{(3)}.
\end{aligned}
\end{equation}
Equation \eqref{eq:ee trans forward kinematics} means that the end-effector position is a function linear in rotations and bilinear in $\{\tau_i\}$ and $\{\mR_i\}$. 
We note that the spherical joints are included in \eqref{eq:ee trans forward kinematics}, and their translational contribution takes the same form as that of the revolute joints. This is because a spherical joint can be seen as a revolute joint without a fixed joint axis. 
\begin{remark}\label{rem:closedloop}
  Observe that \eqref{eq:ee trans forward kinematics} enables us to impose additional structural constraints on the robot, such as closed kinematic chains. For example, consider a situation where two manipulators are working collaboratively with their end-effectors rigidly attached. For each of them, the end-effector position is a function of its rotations $\{\mR_i\}$ and $\{\tau_i\}$. To fulfill the cooperation requirements, we can simply let these two functions be equal to each other, resulting in a constraint on the closed chain.
\end{remark}

\section{Modeling and Relaxation of the Feasible Set}\label{sec:relaxation}
In this section, we introduce how to model and relax the feasible set defined by the group of kinematic constraints. Our goal is to develop linear or semidefinite constraints.

\subsection{Relaxation of the feasible set for revolute joints}
Observe that the condition for the joint axis in \eqref{eq:jointaxis1} is linear in the vectorized rotations.
We therefore define
\aleq{
  \vu=\stack(\{\vec(\mR_{i})\}_{i\in\cV_r}).\label{eq:udefine}
}
and concatenate \eqref{eq:jointaxis1}  for each $(i,j)\in\ave_r$, obtaining the constraint
\begin{equation}
  \mA_{\textrm{axis}}\vu=\vb_{\textrm{axis}}.
\end{equation}

Next, from Proposition \ref{prop:jointangle}, the joint angle limit constraint requires that for every pair $(i,j)\in\ave_r$, $(\mR_i,\mR_j)$ satisfies the ball bound \eqref{eq:jointlimitballbound}, which can be approximated using linear inequalities, namely, a polyhedron. Specifically, for each $(i,j)\in\ave_r$, we choose multiple points on the ball that bounds $\vw_i-\vw_j$ in \eqref{eq:jointlimitballbound}, and the polyhedron is defined as the polyhedron formed by all faces tangent to the ball at the selected points. Because $\vw_i$ and $\vw_j$ are linear in $\vu$, the linear inequalities for all $(i,j)\in\ave_r$ can then be concatenated as
\begin{equation}
  \mA_{\textrm{angle}}\vu\leq\vb_{\textrm{angle}}.
\end{equation}

\subsection{Relaxation of $\SO{3}$}\label{sec:so3relax}
The definition of $\vu$ requires that each $\mR_{i}\in\SO{3}$, i.e.,
\aleq{
  \mR_{i}^\trans\mR_{i}=\ident_3 \text{ and }
  \det(\mR_{i}) = +1.\label{eq:rotationvalid}
}
These constraints are nonlinear in $\vu$. We propose a novel way to relax $\SO{3}$ using convex constraints.
For $\mR_{i}=\Mat{\mR_{i}^{(1)} &\mR_{i}^{(2)} &\mR_{i}^{(3)}}$, equation \eqref{eq:rotationvalid} is equivalent to
\begin{equation}\label{eq:rotationvalidseparate}
  \begin{aligned}
    \norm{\mR_{i}^{(1)}}&=1\\
    \norm{\mR_{i}^{(2)}}&=1\\
    \mR_{i}^{(1)}\cdot \mR_{i}^{(2)}&=0\\
    \mR_{i}^{(1)}\times \mR_{i}^{(2)}&=\mR_{i}^{(3)}
  \end{aligned}
\end{equation}
For every rotation $\mR_i,i\in\cV_r$, we define a new embedding variable 
\begin{equation}\label{eq:Ystructure}
  \begin{aligned}
    \mY_{i} = \Mat{\mR_{i}^{(1)}\\\mR_{i}^{(2)}\\1}\Mat{\mR_{i}^{(1)}\\\mR_{i}^{(2)}\\1}^\trans\in\IR^{7\times7}.
  \end{aligned}
\end{equation}

Observe that $\mY_i$ is a symmetric rank-1 matrix. Its upper-left $6\times 6$ block contains all pairwise products among the entries of $\mR_i^{(1)}$ and $\mR_i^{(2)}$, namely those associated with $(\mR_i^{(1)},\mR_i^{(1)})$, $(\mR_i^{(1)},\mR_i^{(2)})$, and $(\mR_i^{(2)},\mR_i^{(2)})$. The last column of $\mY_i$ contains $\mR_i^{(1)}$, $\mR_i^{(2)}$, and $1$.
For brevity, we use the shorthand 
\begin{equation}
  \mY:=\{\mY_i\}_{i\in\cV_r}\in\IR^{7\times 7\times n_r}
\end{equation}
in the remainder of the paper.

\begin{definition}\label{def:g definition}
We define the linear transformation $g(\mY): \mathbb{R}^{7\times 7\times n_r} \to \mathbb{R}^{9n_r}$ as follows:
\begin{enumerate}
    \item For each $\mY_i$, extract the first two $3\times 1$ subvectors from its last column, denoted by $\vy_{1i}$ and $\vy_{2i}$.
    \item Construct $\vy_{3i}$, corresponding to $\vy_{1i}\times \vy_{2i}$, using the appropriate entries from the upper-left $6\times 6$ block of $\mY_i$. Explicitly,
    \begin{equation}\label{eq:cross product linear fun}
        \vy_{3i} =
        \Mat{
        \mY_i(2,6)-\mY_i(3,5)\\
        \mY_i(3,4)-\mY_i(1,6)\\
        \mY_i(1,5)-\mY_i(2,4)}.
    \end{equation}
    \item For all $i\in\cV_r$, concatenate $\vy_{1i}$, $\vy_{2i}$, and $\vy_{3i}$ vertically in sequence.
\end{enumerate}
\end{definition}

The structure in \eqref{eq:Ystructure} offers several advantages:
\begin{enumerate}
    \item The first three equations in \eqref{eq:rotationvalidseparate} are linear in $\mY_i$ and can therefore be combined into a single linear equality constraint.
    \item The cross product $\mR_i^{(1)} \times \mR_i^{(2)}$ is linear in $\mY_i$, so $\mR_i^{(3)}$ is linear in $\mY$ via \eqref{eq:cross product linear fun}.
    \item Each column of $\mR_i$ is linear in $\mY_i$, and the mapping $g$ extracts these columns linearly. Hence, with $\mY=\{\mY_i\}_{i\in\cV_r}$, we have $\vu = g(\mY)$, which implies that any constraint linear in $\vu$ is also linear in $\mY$.
\end{enumerate}

\begin{definition}
    In order to enforce the relations in \eqref{eq:rotationvalidseparate} and the structure defined in \eqref{eq:Ystructure}, we define the constraint
    \begin{equation}\label{eq:A structure}
    \mA_{\textrm{structure}}\vec(\mY)=\vb_{\textrm{structure}}
    \end{equation}
    that imposes the following structure on $\mY$: for each $\mY_{i}$,
  \begin{enumerate}
  \item $\trace(\mY_i(1:3,1:3))=\trace(\mY_i(4:6,4:6))=1$;
  \item $\trace(\mY_i(1:3,4:6))=0$;
  \item $\mY_i(7,7)=1$.
  \end{enumerate}
\end{definition}

Let $\hat{g}:\real{7\times 7}\to\real{9}$ denote the linear transformation induced by Definition \ref{def:g definition} on each $7\times 7$ lifted matrix, namely, the operations described in steps 1 and 2 of that definition. The manifold constraint $\mR_i\in\SO{3}$ can then be encoded through the lifted matrix $\mY_i$ by imposing the additional constraints stated in the following proposition.
\begin{proposition}\label{prop:SO3}
    A real $3\times 3$ matrix $\hat{\mR}$ is on the set $\SO{3}$ if and only if $\vec(\hat{\mR})=\hat{g}(\hat{\mY})$, where $\hat{\mY}\in\IS^7_+$ satisfies \eqref{eq:A structure} and $\rank(\hat{\mY})=1$.
\end{proposition}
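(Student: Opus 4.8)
The plan is to prove both implications by exploiting the rank-one factorization of $\hat{\mY}$ and tracking how the three blocks of the structure constraint \eqref{eq:A structure} translate, one by one, into the orthonormality identities \eqref{eq:rotationvalidseparate}. The key observation that makes everything go through is that on a rank-one, PSD matrix the top-left $6\times 6$ block stores exactly the pairwise products of the entries appearing in the last column, so the cross-product formula \eqref{eq:cross product linear fun} used inside $g$ genuinely computes $\hat{\mR}^{(1)}\times\hat{\mR}^{(2)}$.

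For the forward direction, assume $\hat{\mR}\in\SO{3}$ and take $\hat{\mY}$ to be the matrix built from $\hat{\mR}$ via \eqref{eq:Ystructure}, i.e.\ the outer product $\vz\vz^\trans$ with $\vz=[\hat{\mR}^{(1)\trans},\hat{\mR}^{(2)\trans},1]^\trans$; this is PSD and rank one by construction. I would then read the three parts of \eqref{eq:A structure} directly off the first three identities of \eqref{eq:rotationvalidseparate}: the trace of the top-left $3\times 3$ block equals $\norm{\hat{\mR}^{(1)}}^2=1$, the trace of the $(4{:}6,4{:}6)$ block equals $\norm{\hat{\mR}^{(2)}}^2=1$, the trace of the $(1{:}3,4{:}6)$ block equals $\hat{\mR}^{(1)}\cdot\hat{\mR}^{(2)}=0$, and the $(7,7)$ entry is $1$. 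Finally I would check $g(\hat{\mY})=\vec(\hat{\mR})$: by Definition \ref{def:g definition}, $g$ extracts $\vy_1=\hat{\mR}^{(1)}$ and $\vy_2=\hat{\mR}^{(2)}$ from the last column and forms $\vy_3$ through \eqref{eq:cross product linear fun}, which on this rank-one $\hat{\mY}$ equals $\hat{\mR}^{(1)}\times\hat{\mR}^{(2)}=\hat{\mR}^{(3)}$ by the last identity of \eqref{eq:rotationvalidseparate}.

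For the converse, I would start from the rank-one PSD hypothesis and write $\hat{\mY}=\vz\vz^\trans$ for some $\vz\in\IR^7$. The constraint $\hat{\mY}(7,7)=1$ forces $\vz_7^2=1$, and since $\vz$ and $-\vz$ yield the same $\hat{\mY}$ I may assume $\vz_7=1$ and partition $\vz=[\va^\trans,\vb^\trans,1]^\trans$. The three trace conditions of \eqref{eq:A structure} then read $\norm{\va}^2=\norm{\vb}^2=1$ and $\va\cdot\vb=0$, so $\va,\vb$ are orthonormal. Applying $g$ to this $\hat{\mY}$ gives $\vec(\hat{\mR})=[\va^\trans,\vb^\trans,(\va\times\vb)^\trans]^\trans$, i.e.\ $\hat{\mR}=[\va\;\vb\;\va\times\vb]$, and I would conclude that any such frame satisfies $\hat{\mR}^\trans\hat{\mR}=\ident_3$ and $\det(\hat{\mR})=\norm{\va\times\vb}^2=1$.

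The main subtlety — and the reason the relaxation is tight — lies in the converse: one must ensure the constraints force not just $\hat{\mR}^\trans\hat{\mR}=\ident_3$ but also the orientation $\det(\hat{\mR})=+1$. This determinant condition is never imposed explicitly; it comes for free precisely because $g$ defines the third column as the cross product $\va\times\vb$ rather than as an independent variable, which automatically selects the right-handed frame. The only bookkeeping I must handle with care is the sign ambiguity of the factorization (cleanly resolved by the $(7,7)$ entry) and verifying that the specific index pattern in \eqref{eq:cross product linear fun} does reproduce the cross product entrywise on a rank-one matrix.
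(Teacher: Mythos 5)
Your proof is correct and follows essentially the same route as the paper's: both directions rest on the rank-one factorization $\hat{\mY}=\vz\vz^\trans$, reading the trace constraints in \eqref{eq:A structure} as the orthonormality identities, and using the fact that $g$ reconstructs the third column as $\vy_1\times\vy_2$ so that $\det(\hat{\mR})=+1$ holds automatically. If anything, you are slightly more careful than the paper in resolving the sign ambiguity of the factorization via the $(7,7)=1$ entry (the paper writes the last component as $1$ without comment), which is a welcome but minor refinement.
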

\begin{proof}
  For any $\hat{\mY}$ that satisfies \eqref{eq:A structure}, $\hat{\mY}\succeq 0$ and $\rank(\hat{\mY})=1$, we can write $\hat{\mY}$ as
  \aleq{
    \hat{\mY} = \Mat{\vy_1\\ \vy_2\\1}\Mat{\vy_1\\ \vy_2\\1}^\trans,\  \vy_1,\vy_2\in\IR^3.
  }
  The structural constraint \eqref{eq:A structure} acts on the entries of $\hat{\mY}$ such that
  \begin{equation}\label{eq:ontrace}
    \begin{aligned}
      \left\{
      \begin{array}{l}
        \trace(\vy_1\vy_1^\trans)=\trace(\vy_2\vy_2^\trans)=1\\
        \trace(\vy_1\vy_2^\trans)=0
      \end{array}
      \right.
    \end{aligned}
  \end{equation}
  which is equivalent to $\norm{\vy_1}=\norm{\vy_2}=1$ and $\vy_1^\trans\vy_2=0$. We form a new matrix $\tilde{\mR}=\Mat{\vy_1 &\vy_2 &\vy_1\times\vy_2}$. When $\rank(\hat{\mY})=1$, the linear operation in step 2 of Definition \ref{def:g definition} recovers $\vy_1\times\vy_2$ exactly and therefore $\tilde{\mR}=\hat{\mR}$. It is clear that $\tilde{\mR}$ satisfies \eqref{eq:rotationvalidseparate} and thus \eqref{eq:rotationvalid} and $\hat{\mR}=\tilde{\mR}\in\SO{3}$. 

  On the other hand, for any $\hat{\mR}\in\SO{3}$, we can always use \eqref{eq:Ystructure} to construct a corresponding rank-1 $\hat{\mY}$ that satisfies \eqref{eq:A structure}, $\hat{\mY}\succeq0$, and $\rank(\hat{\mY})=1$.
\end{proof}

\subsection{Relaxation of prismatic joints}
Paralleling $\vu$, let us denote $\capgreek{\tau}:=\{\tau_i\mid i\in\cV_p\}$, where $\cV_p:=\{i\mid(i,j)\in\ave_p\}$.
For a prismatic joint, the constraint \eqref{eq:prismatic_trans} is bi-linear in $\vu$ and $\capgreek{\tau}$ because of the term $\tau_i\mR_i^{(3)}$. To be able to include this constraint in the SDP problem, for the parent of each prismatic joint $i\in\cV_p$, we introduce a new embedding variable 
\begin{equation}\label{eq:Y_itau definition}
    \begin{aligned}
       \mY_{\tau i} &= \Mat{\sqrt{\tau_i}\mR_{i}^{(3)}\\\sqrt{1-\tau_i}\mR_{i}^{(3)}\\\sqrt{\tau_i}\\\sqrt{1-\tau_i}}\Mat{\sqrt{\tau_i}\mR_{i}^{(3)}\\\sqrt{1-\tau_i}\mR_{i}^{(3)}\\\sqrt{\tau_i}\\\sqrt{1-\tau_i}}^\trans\in\IR^{8\times8},
    \end{aligned}
\end{equation}
Similar to $\mY$, we use the shorthand $\mY_\tau$ for $\{\mY_{\tau i}\}_{i\in\cV_p}\in\IR^{8\times8\times n_p}$.
We will show that the prismatic constraint \eqref{eq:prismatic_trans} is linear in $\mY_{\tau i}$ under additional conditions.
We define constraints 
    \begin{equation}\label{eq:linearconstraintsYtau}
        \begin{aligned}
            \mA_{p0,eq}\vec(\mY_{\tau}) = \vb_{p0,eq}\\
        \mA_{p,ieq}\vec(\mY_{\tau}) \leq \vb_{p,ieq}\\
        \mA_{p1,eq}\vec(\mY)+\mA_{p2,eq}\vec(\mY_{\tau}) = \vb_{p1,eq}
        \end{aligned}
    \end{equation}
to restrict the following linear relations of $\mY_{\tau i}$ and $\mY_i$ entries: for $(i,j)\in\ave_p$,
\begin{enumerate}
    \item the trace of $\mY_{\tau i}$ equals 2;\label{itm:ytau1}
    \item $\trace(\mY_{\tau i}(1:3,1:3))=\mY_{\tau i}(7,7)$ and $\trace(\mY_{\tau i}(4:6,4:6))=\mY_{\tau i}(8,8)$;\label{itm:ytau2}
      \item $\mY_{\tau i}(4:6,7)=\mY_{\tau i}(1:3,8)$;\label{itm:ytau3}
      \item $\trace(\mY_{\tau i}(1:3,4:6))=\mY_{\tau i}(7,8)$;\label{itm:ytau4}
      \item $\mY_{\tau i}(7,7)\in [0,1]$;\label{itm:ytau5}
      \item $\mY_{\tau i}(7,8)\geq 0$;\label{itm:ytau6}
      \item $\mY_{\tau i}(1:3,7)+\mY_{\tau i}(4:6,8)=\text{r.h.s. of } \eqref{eq:cross product linear fun}$.\label{itm:ytau7}
\end{enumerate}
The relations \ref{itm:ytau1}-\ref{itm:ytau4} encode the algebraic identities satisfied by the construction in \eqref{eq:Y_itau definition} when $\norm{\mR^{(3)}}=1$, including
\begin{equation}
    \begin{aligned}
        \norm{\sqrt{\tau_i}\mR_{i}^{(3)}}_2^2+\norm{\sqrt{1-\tau_i}\mR_{i}^{(3)}}_2^2&+\norm{\sqrt{\tau_i}}_2^2+\norm{\sqrt{1-\tau_i}}_2^2=2\\
        \norm{\sqrt{\tau_i}\mR_{i}^{(3)}}_2^2&=\norm{\sqrt{\tau_i}}_2^2\\
        \norm{\sqrt{1-\tau_i}\mR_{i}^{(3)}}_2^2&=\norm{\sqrt{1-\tau_i}}_2^2\\
        \sqrt{1-\tau_i}\mR_i^{(3)}\cdot\sqrt{\tau_i}&=\sqrt{\tau_i}\mR_i^{(3)}\cdot\sqrt{1-\tau_i}\\
        \sqrt{\tau_i(1-\tau_i)}\trace(\mR_i^{(3)}\mR_i^{(3)\trans})&=\sqrt{\tau_i(1-\tau_i)}
    \end{aligned}
\end{equation}
Constraint \ref{itm:ytau5} enforces that $\tau_i\in [0,1]$. 

Constraints \ref{itm:ytau4} and \ref{itm:ytau6} are to make sure that the part $\mY_{\tau i}(1:3,4:6)$ is restricted by a constraint such that the SDP solver does not assign all-zeros to these entries.

Constraint \ref{itm:ytau7} is a linear constraint on $\mY_{\tau i}$ and $\mY_{i}$ where the left-hand side is an analogue of $\tau_i\mR_i^{(3)}+(1-\tau_i)\mR_i^{(3)}$ while the right-hand side is a linear function of $\mY_{i}$ as discussed in Definition \ref{def:g definition}. This constraint is to yield that the $\mR^{(3)}_i$ extracted from $\mY_{\tau i}$ is exactly $\mR_i^{(1)}\times\mR_i^{(2)}$ from $\mY_{i}$.

Paralleling Definition \ref{def:g definition}, where the columns of rotations $\vu$ are extracted linearly from $\mY$, we define the following linear function to extract $\capgreek{\tau}$ from $\mY_\tau$.
\begin{definition}
  \begin{equation}\label{eq:gtau def}
    g_\tau(\mY_\tau) := \{\mY_{\tau i}(7,7)\}_{i\in\cV_p}
  \end{equation}
\end{definition}

With the above definitions, we can develop the following proposition, the proof of which is in the appendix.

\begin{proposition}\label{prop:prismatic trans Y_tau}
    Equations \eqref{eq:prismatic_trans} hold if and only if
    \begin{equation}\label{eq:prismatic_trans_Y_tau}
        \mT_j = \mT_i+\tau_l\mR^{(3)}_i+(\tau_u-\tau_l)\mY_{\tau i}(1:3,7),
    \end{equation}
    $(\mY_i,\mY_{\tau i})$ satisfies \eqref{eq:linearconstraintsYtau}, $\mY_i, \mY_j, \mY_{\tau i}\in\IS_+$, $\mY_i,\mY_j$ each satisfies \eqref{eq:A structure}, $\mY_i=\mY_j$, and $\rank(\mY_{i})=\rank(\mY_j)=\rank(\mY_{\tau i}) = 1$.
\end{proposition}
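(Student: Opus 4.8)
The plan is to prove the two implications separately, in both cases using Proposition~\ref{prop:SO3} to handle the rotation part and a rank-one factorization of $\mY_{\tau i}$ to handle the prismatic extension $\tau_i$ and the third column $\mR_i^{(3)}$. Throughout I abbreviate $\vr:=\mR_i^{(3)}$.

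For the forward implication, assume \eqref{eq:prismatic_trans} holds. Since $\mR_i\in\SO{3}$, Proposition~\ref{prop:SO3} supplies a rank-one $\mY_i\in\IS_+$ satisfying \eqref{eq:A structure} with $\vec(\mR_i)=g(\mY_i)$, and I build $\mY_{\tau i}$ directly from \eqref{eq:Y_itau definition}, which is PSD and rank-one by construction. Reading the blocks of \eqref{eq:Y_itau definition} and using $\norm{\vr}=1$, I would verify constraints \ref{itm:ytau1}--\ref{itm:ytau7} by substitution: $\trace\mY_{\tau i}=\tau_i+(1-\tau_i)+\tau_i+(1-\tau_i)=2$; $\trace(\mY_{\tau i}(1:3,1:3))=\tau_i=\mY_{\tau i}(7,7)$ and $\trace(\mY_{\tau i}(4:6,4:6))=1-\tau_i=\mY_{\tau i}(8,8)$; $\mY_{\tau i}(4:6,7)=\sqrt{\tau_i(1-\tau_i)}\vr=\mY_{\tau i}(1:3,8)$; $\trace(\mY_{\tau i}(1:3,4:6))=\sqrt{\tau_i(1-\tau_i)}=\mY_{\tau i}(7,8)\ge0$; and $\mY_{\tau i}(1:3,7)+\mY_{\tau i}(4:6,8)=\tau_i\vr+(1-\tau_i)\vr=\vr$, which equals the r.h.s.\ of \eqref{eq:cross product linear fun} because a rank-one $\mY_i$ makes that expression the exact cross product $\mR_i^{(1)}\times\mR_i^{(2)}=\mR_i^{(3)}$. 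Finally $\mY_{\tau i}(1:3,7)=\tau_i\vr$, so \eqref{eq:prismatic_trans_Y_tau} collapses to $\mT_j=\mT_i+(\tau_l+\tau_i(\tau_u-\tau_l))\mR_i^{(3)}$, the third line of \eqref{eq:prismatic_trans}, and $\mR_i=\mR_j$ is carried over directly.

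For the converse, assume the right-hand conditions. Because $\mY_i$ is rank-one, PSD, and satisfies \eqref{eq:A structure}, Proposition~\ref{prop:SO3} yields $\mR_i\in\SO{3}$; together with $\mR_i=\mR_j$ this recovers the first two lines of \eqref{eq:prismatic_trans} and makes the r.h.s.\ of \eqref{eq:cross product linear fun} the unit vector $\vr$. Since $\mY_{\tau i}\succeq0$ has rank one I factor $\mY_{\tau i}=\vz\vz^\trans$ with $\vz=[\va^\trans\ \vb^\trans\ c\ d]^\trans$, $\va,\vb\in\IR^3$, and translate the linear constraints: \ref{itm:ytau2} gives $\norm{\va}^2=c^2$ and $\norm{\vb}^2=d^2$; \ref{itm:ytau1} then gives $c^2+d^2=1$; \ref{itm:ytau3} gives $c\vb=d\va$; and \ref{itm:ytau7} gives $c\va+d\vb=\vr$. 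Setting $\tau_i:=c^2=\mY_{\tau i}(7,7)$, constraint \ref{itm:ytau5} gives $\tau_i\in[0,1]$, the last line of \eqref{eq:prismatic_trans}.

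The crux is to show $\mY_{\tau i}(1:3,7)=\tau_i\,\mR_i^{(3)}$ exactly. When $c\neq0$, $c\vb=d\va$ gives $\vb=(d/c)\va$, and substituting into $c\va+d\vb=\vr$ with $c^2+d^2=1$ forces $(1/c)\va=\vr$, i.e.\ $\va=c\vr$; hence $\mY_{\tau i}(1:3,7)=c\va=c^2\vr=\tau_i\vr$. The degenerate case $c=0$ gives $\tau_i=0$ and $\va=\omat$, so the identity holds trivially. Constraint \ref{itm:ytau6} ($\mY_{\tau i}(7,8)=cd\ge0$, tied to $\trace(\mY_{\tau i}(1:3,4:6))$ through \ref{itm:ytau4}) is what fixes the relative sign of $c$ and $d$, so that $\vz$ agrees with the canonical factor of \eqref{eq:Y_itau definition}. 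Substituting $\mY_{\tau i}(1:3,7)=\tau_i\mR_i^{(3)}$ into \eqref{eq:prismatic_trans_Y_tau} then reproduces the third line of \eqref{eq:prismatic_trans}, closing the equivalence. I expect this reconstruction step — arguing that the rank-one factor is collinear with $\mR_i^{(3)}$ with the correct scaling, while cleanly disposing of the $c=0$ and sign cases — to be the main obstacle, since the remaining work is direct block-wise verification.
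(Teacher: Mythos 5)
Your proof is correct, and your converse direction takes a genuinely more direct route than the paper's. The paper handles that direction by first proving (in its appendix) a canonical-form lemma: every rank-one $\mY_{\tau i}$ satisfying \eqref{eq:linearconstraintsYtau} factors as $\bigl[\sqrt{t}\,\vy_1^\trans\ \sqrt{1-t}\,\vy_1^\trans\ \pm\sqrt{t}\ \pm\sqrt{1-t}\bigr]^\trans$ times its transpose with matching signs --- a derivation that leans on constraints \ref{itm:ytau4} and \ref{itm:ytau6} to force $ab=cd\geq 0$, hence $\vy_1=\vy_2$ and the sign agreement --- and only then invokes \ref{itm:ytau7} together with Proposition \ref{prop:SO3}, splitting into two sign cases to conclude $\mY_{\tau i}(1:3,7)=t\,\mR_i^{(3)}$. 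You bypass the canonical form entirely: from \ref{itm:ytau3} ($c\vb=d\va$), \ref{itm:ytau7} ($c\va+d\vb=\vr$), and $c^2+d^2=1$ you solve directly for $\va=c\vr$ when $c\neq 0$, and the $c=0$ case is trivial, so no sign analysis is needed at all. In fact, your parenthetical crediting \ref{itm:ytau6} (via \ref{itm:ytau4}) with fixing the relative sign of $c$ and $d$ is never used in your own derivation; your argument shows the equivalence already follows from \ref{itm:ytau1}, \ref{itm:ytau2}, \ref{itm:ytau3}, \ref{itm:ytau5}, \ref{itm:ytau7} alone, which is a sharper statement than the paper's (the paper even remarks that \ref{itm:ytau6} ``is useful in the proof'' of this proposition --- your proof shows it is dispensable there, though it still tightens the relaxed set $\bar{\cY}$). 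What the paper's lemma buys is a complete structural characterization of the feasible rank-one $\mY_{\tau i}$, reusable beyond this proposition; what your route buys is brevity and the elimination of the case split. Your forward direction (construction via \eqref{eq:Y_itau definition} followed by block-wise verification of \ref{itm:ytau1}--\ref{itm:ytau7}) is the same as the paper's, just with the verification the paper leaves implicit written out.
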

\begin{proof}
    Please find the proof in \ref{appendix:proof_prismatic_trans}.
\end{proof}

In \eqref{eq:ee trans forward kinematics}, the contribution of each prismatic joint to the end-effector position can be expressed linearly in $\mY_\tau$ using Proposition \ref{prop:prismatic trans Y_tau} as
\begin{equation}\label{eq:prismatic replace bilinear}
\begin{aligned}
(\tau_l+\tau_i(\tau_u-\tau_l))\mR_i^{(3)}
  = \tau_l\mR_i^{(3)} + (\tau_u-\tau_l)\mY_{\tau i}(1\!:\!3,7).
\end{aligned}
\end{equation}

\begin{remark}
  The number of variables in the relaxation \eqref{eq:Y_itau definition} can be further reduced by exploiting the problem-specific sparsity of $\mY_{\tau i}$. For example, \cite{wu2025certifiably} uses three $4\times 4$ submatrices, reducing the number of variables to 48. In this paper, we retain the form in \eqref{eq:Y_itau definition} for simplicity of exposition.
\end{remark}

\subsection{Relaxation of the feasible set and its properties}
We have expressed the kinematic constraints in terms of the rotation vectorization $\vu$ and the prismatic joint extensions $\capgreek{\tau}$, and introduced the lifted variables $\mY$ and $\mY_\tau$. Under the rank-1 condition, these constraints are linear in $\mY$ and $\mY_\tau$. We now formalize this relaxation of the robot kinematics and study its properties.

We begin by collecting the kinematic constraints on $\vu$ into the set $\cU_{\textrm{kine}}$:
\begin{equation}\label{eq:u kine def}
  \cU_{\textrm{kine}} := \left\{
  \vu \;\middle|\;
  \begin{array}{l}
    \mA_{\textrm{axis}}\vu = \vb_{\textrm{axis}},\\
    \mA_{\textrm{angle}}\vu \le \vb_{\textrm{angle}},\\
    \mA_{\textrm{parallel}}\vu = \vb_{\textrm{parallel}}
  \end{array}
  \right\}.
\end{equation}
Here, the constraint $\mA_{\textrm{parallel}}\vu = \vb_{\textrm{parallel}}$ enforces $\mR_i = \mR_j$ for $(i,j)\in \ave_p$. The definition in \eqref{eq:u kine def} can be readily extended to incorporate other constraints on the rotation matrices, such as fixed or convexly constrained positions or orientations of arbitrary links.

\begin{definition}\label{def:kine feas set relax}
  Let the \textbf{original feasible set} $\cU$ denote the set of pairs $\{(\vu,\capgreek{\tau})\mid \vu= \stack(\{\vec(\mR_{i})\mid i\in\cV_r\}), \text{\boldmath{$\tau$}}:=\{\tau_i\mid i\in\cV_p\}\}$ satisfying both kinematics and manifold constraints:
  \begin{subequations}\label{eq:def ik feas set}
    \begin{align}
      &\text{[Kinematics, linear]}&\vu \in \cU_{\textrm{kine}} \label{eq:kine}\\[3pt]
      &\multirow{2}{*}{\text{[Manifold, nonlinear]}}
    & \mR_i \in \SO{3}, \forall i\in\cV_r \label{eq:ik manifold}\\
    & & \tau_i \in [0,1], \forall i\in\cV_p .
    \end{align}
  \end{subequations}
  Let $\cY:=\{(\mY,\mY_\tau)\}$ denote the corresponding \textbf{lifted feasible set} induced by
  \begin{equation}
      \begin{aligned}
          \mY &:= \{\mY_i\mid\mY_i=y(\vu),\quad (\vu,\capgreek{\tau})\in\cU\}_{i\in\cV_r} \text{ and }\\ 
          \mY_{\tau} &:= \{\mY_{\tau i}\mid\mY_{\tau i}=y_{\tau}(\vu,\capgreek{\tau}),\quad(\vu,\capgreek{\tau})\in\cU\}_{i\in\cV_p},
      \end{aligned}
  \end{equation}
  where $y$ and $y_\tau$ are functions of $\vu$ and $\capgreek{\tau}$ that formulate the matrices in \eqref{eq:Ystructure} and \eqref{eq:Y_itau definition}, respectively.

  Let $\bar{\cY}$ denote the \textbf{relaxed lifted set} defined by 
  $\bar{\cY}:=(\bar{\mY},\bar{\mY}_\tau)$, where $\bar{\mY}:=\{\bar{\mY}_i\}_{i\in\cV_r}, \bar{\mY}_{\tau}:=\{\bar{\mY}_{\tau i}\}_{i\in\cV_p}$, and
  \begin{subequations}\label{eq:defsdprelaxset}
    \begin{align}
      &\text{[Kinematics, linear]} &g(\bar{\mY})\in\cU_{\textrm{kine}}\label{eq:ubar kine}\\[3pt]
      &\multirow{2}{*}{\text{[Manifold, PSD]}}  &\bar{\mY} = \{\bar{\mY}_i\in\IS^{7}_+\mid i\in\cV_r\},\label{eq:ubar psd}\\
      & &\bar{\mY}_{\tau} = \{\bar{\mY}_{\tau i}\in\IS^{8}_+\mid i\in\cV_p\},\label{eq:ubar psd ytau}\\[3pt]
      &\multirow{2}{*}{\text{{[Manifold, linear]}}}  &\mA_{\textrm{structure}}\vec(\bar{\mY})=\vb_{\textrm{structure}},\label{eq:ubar structure}\\
      & &\bar{\mY},\bar{\mY}_{\tau}\text{ satisfy } \eqref{eq:linearconstraintsYtau}.\label{eq:ubar yytau}
    \end{align}
  \end{subequations}

  Its projection onto the original variables, the \textbf{projected relaxed set}, is
  \begin{equation}
  \bar{\cU}:=\{g(\bar{\mY}),g_\tau(\bar{\mY}_\tau)\mid \bar{\mY},\bar{\mY}_\tau\in\bar{\cY}\}.   
  \end{equation}
  
  Finally, let $\cR_1$ be the \textbf{rank-1 subset}: $\cR_1:=\{(\mY,\mY_\tau)\mid \rank(\mY_i)=\rank(\mY_{\tau j})=1,\forall i\in\cV_r, j\in\cV_p\}$.
\end{definition}
Intuitively, $\cU$ is the set of kinematically feasible solutions, defined by linear constraints on $\vu$ and $\capgreek{\tau}$ together with the nonlinear requirement that the rotations lie on $\SO{3}$. Lifting these variables yields the set $\cY$, whose elements are implicitly rank-1, that is, $\mY,\mY_\tau \in \cR_1$. Dropping the rank-1 constraints gives the relaxed lifted set $\bar{\cY}$, which can then be projected back to the original variable space through the linear maps $g$ and $g_\tau$ to obtain $\bar{\cU}$.
We show some useful results about these sets.

\begin{proposition}
  The set $\bar{\cY}$ is bounded.
\end{proposition}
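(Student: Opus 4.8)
The plan is to exploit the fact that every factor in a relaxed configuration is positive semidefinite with a \emph{constant} trace, and that a PSD matrix with bounded trace automatically has a bounded Frobenius norm. First I would record the elementary fact: if $\mM\in\IS^n_+$ with eigenvalues $\lambda_1,\dots,\lambda_n\ge 0$, then $\frob{\mM}^2=\trace(\mM^2)=\sum_i\lambda_i^2\le\bigl(\sum_i\lambda_i\bigr)^2=\trace(\mM)^2$, so that $\frob{\mM}\le\trace(\mM)$. (Equivalently, $0\le\mM_{ii}\le\trace(\mM)$ and $\abs{\mM_{ij}}\le\sqrt{\mM_{ii}\mM_{jj}}\le\trace(\mM)$ entrywise.) Hence a uniform bound on the trace of each factor immediately yields a uniform bound on its norm.

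Next I would show that the trace of each factor is pinned to a constant by the defining constraints of $\bar{\cY}$. For each revolute/spherical factor $\bar{\mY}_i$, $i\in\cV_r$, the structural equality \eqref{eq:ubar structure} enforces $\trace(\bar{\mY}_i(1:3,1:3))=\trace(\bar{\mY}_i(4:6,4:6))=1$ together with $\bar{\mY}_i(7,7)=1$, so that $\trace(\bar{\mY}_i)=3$ for \emph{every} element of the set. For each prismatic factor $\bar{\mY}_{\tau i}$, $i\in\cV_p$, constraint~\ref{itm:ytau1} of \eqref{eq:linearconstraintsYtau}, imposed through \eqref{eq:ubar yytau}, forces $\trace(\bar{\mY}_{\tau i})=2$. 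This is exactly the role anticipated in Remark~\ref{rem:primatic redundant entries}: the otherwise-redundant entries of $\mY_{\tau i}$ are present precisely so that the trace is a fixed constant.

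Combining the two steps, every $\bar{\mY}_i$ obeys $\frob{\bar{\mY}_i}\le 3$ and every $\bar{\mY}_{\tau i}$ obeys $\frob{\bar{\mY}_{\tau i}}\le 2$, uniformly over $\bar{\cY}$. Since $\cV_r$ and $\cV_p$ are finite index sets, stacking the finitely many factors gives a single uniform bound on the norm of any element $(\bar{\mY},\bar{\mY}_\tau)\in\bar{\cY}$, which establishes boundedness.

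I do not anticipate a substantive obstacle: the relaxation was engineered so that the diagonal trace blocks are fixed, and positive semidefiniteness then converts the trace bound into a norm bound. The only point that genuinely requires care is confirming that the constant-trace equalities are part of the definition of $\bar{\cY}$ for \emph{all} factors (not merely for the rank-one representatives in $\cY$) — that is, that \eqref{eq:ubar structure} and \eqref{eq:ubar yytau} are enforced on the relaxed set — after which the conclusion is immediate. I would note in passing that each factor in fact ranges over a compact spectrahedron $\{\mM\in\IS^n_+:\trace(\mM)=c\}$ intersected with an affine subspace, so the set is closed as well, although only boundedness is claimed here.
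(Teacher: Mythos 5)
Your proof is correct and takes essentially the same route as the paper's: both derive boundedness from positive semidefiniteness combined with the fixed traces enforced on the relaxed set ($\trace(\bar{\mY}_i)=3$ via \eqref{eq:ubar structure} and $\trace(\bar{\mY}_{\tau i})=2$ via \eqref{eq:ubar yytau}). The difference is only in packaging: the paper bounds the entries through the eigendecomposition $U\Sigma U^\trans$ with eigenvalues confined to $[0,3]$, whereas you express the same spectral fact as $\frob{\mM}\le\trace(\mM)$, which is a cleaner and slightly more careful rendering of the identical idea.
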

\begin{proof}
  For each \(\mY_i\succeq 0\), the structural constraints imply $\trace(\mY_i)=3$.
    Hence all eigenvalues of \(\mY_i\) are nonnegative and bounded by \(3\).
    Therefore \(\|\mY_i\|_F\leq 3\). Similarly, (24) imposes
    \(\trace(\mY_{\tau i})=2\), so \(\|\mY_{\tau i}\|_F\leq 2\).
    Since there are finitely many blocks, \(\bar{\mathcal Y}\) is bounded.
\end{proof}

\begin{proposition}\label{prop:u subset ubar}
  The set $\bar{\cU}$ contains every element of $\cU$, i.e., $\cU\subseteq \bar{\cU}$.
\end{proposition}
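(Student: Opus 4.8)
The plan is to exhibit, for each pair $(\vu,\capgreek{\tau}) \in \cU$, an explicit preimage in $\bar{\cY}$ whose image under the map $g$ (together with the extraction of the $(7,7)$ entries of the $\mY_{\tau i}$) reproduces $(\vu,\capgreek{\tau})$. Since $\bar{\cU}$ is \emph{defined} as $\textrm{image}(\bar{\cY})$, producing such a preimage is exactly what is needed to conclude $(\vu,\capgreek{\tau}) \in \bar{\cU}$, and hence $\cU \subset \bar{\cU}$.

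First I would fix an arbitrary $(\vu,\capgreek{\tau}) \in \cU$ and form the lifted matrices $\mY_i = y(\vu)$ via \eqref{eq:Ystructure} and $\mY_{\tau i} = y_\tau(\vu,\capgreek{\tau})$ via \eqref{eq:Y_itau definition}; these are precisely the elements of $\cY$ attached to $(\vu,\capgreek{\tau})$. Each is a rank-one outer product $\vv\vv^\trans$, hence symmetric positive semidefinite, which immediately discharges \eqref{eq:ubar psd} and \eqref{eq:ubar psd ytau}. The central step is the equality $g(\mY) = \vu$. Because $(\vu,\capgreek{\tau})$ satisfies \eqref{eq:ik manifold}, every $\mR_i \in \SO{3}$, so Proposition~\ref{prop:SO3} applies to each block: the rank-one $\mY_i$ built from $\mR_i$ satisfies the structural constraint \eqref{eq:A structure} (giving \eqref{eq:ubar structure}) and obeys $g(\mY_i) = \vec(\mR_i)$. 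In particular the cross-product step (step~2 of Definition~\ref{def:g definition}) recovers $\mR_i^{(1)}\times\mR_i^{(2)} = \mR_i^{(3)}$ exactly, which is where the rank-one property is essential. Concatenating over $i \in \cV_r$ yields $g(\mY) = \vu$.

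With $g(\mY) = \vu$ in hand, the affine constraints defining $\bar{\cY}$ transfer directly to those defining $\cU$: \eqref{eq:ubar axis}, \eqref{eq:ubar angle}, and \eqref{eq:ubar parallel} reduce to $\mA_{\textrm{axis}}\vu = \vb_{\textrm{axis}}$, $\mA_{\textrm{angle}}\vu \leq \vb_{\textrm{angle}}$, and $\mA_{\textrm{parallel}}\vu = \vb_{\textrm{parallel}}$, all of which hold by \eqref{eq:ik axis}--\eqref{eq:ik parallel}. For the prismatic block I would invoke the ``only if'' direction of Proposition~\ref{prop:prismatic trans Y_tau}: since $\tau_i \in [0,1]$ and the prismatic relations \eqref{eq:prismatic_trans} hold for the data defining $(\vu,\capgreek{\tau})$, the explicitly constructed rank-one $\mY_{\tau i} \succeq 0$ satisfies every linear relation in \eqref{eq:linearconstraintsYtau}, establishing \eqref{eq:ubar yytau}. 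This shows $(\mY,\mY_\tau) \in \bar{\cY}$. Finally I read off the image: $g(\mY) = \vu$ as shown, and by the construction \eqref{eq:Y_itau definition} we have $\mY_{\tau i}(7,7) = (\sqrt{\tau_i})^2 = \tau_i$, so the extracted pair is exactly $(\vu,\capgreek{\tau})$, completing the inclusion.

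I expect the only genuinely delicate point to be the equality $g(\mY) = \vu$, i.e., verifying that $g$ truly inverts the lifting on rank-one inputs, and in particular that the third rotation column is recovered through the cross-product formula \eqref{eq:cross product linear fun}. This is precisely the content already established in Proposition~\ref{prop:SO3}, so once it is cited the remainder is routine bookkeeping that pairs each constraint of \eqref{eq:defsdprelaxset} with its counterpart in \eqref{eq:def ik feas set}. A secondary care point is ensuring the constructed $\mY_{\tau i}$ meets \emph{all} of items~\ref{itm:ytau1}--\ref{itm:ytau7}, but these are exactly the identities verified in the displayed computation preceding Proposition~\ref{prop:prismatic trans Y_tau}, so no new work is required.
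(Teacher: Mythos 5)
Your proposal is correct and follows essentially the same route as the paper's own (much terser) proof: lift $(\vu,\capgreek{\tau})\in\cU$ to rank-one matrices via \eqref{eq:Ystructure} and \eqref{eq:Y_itau definition}, then check these satisfy every constraint of $\bar{\cY}$ and map back to $(\vu,\capgreek{\tau})$. You simply spell out the constraint-by-constraint verification (via Proposition~\ref{prop:SO3} and the identities behind \eqref{eq:linearconstraintsYtau}) that the paper leaves implicit.
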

\begin{proof}
This is a consequence of the fact that we can build $\mY,\mY_\tau\in\cY$ that satisfy all the constraints of $\bar{\cY}$ for any $(\vu,\capgreek{\tau})\in\cU$ using \eqref{eq:Ystructure} and \eqref{eq:Y_itau definition}.
\end{proof}
\begin{proposition}\label{prop:intersect u_bar rank1}
  The set $\bar{\cU}$ exactly matches the set $\cU$ if $\bar{\cU}$ is not only an image of $\bar{\cY}$, but also an image of $\bar{\cY}\intersect\cR_1$, i.e., 
  $\bar{\cU}=\{g(\bar{\mY}),g_\tau(\bar{\mY}_\tau)\mid \bar{\mY},\bar{\mY}_\tau\in\bar{\cY}\intersect\cR_1\}$.
\end{proposition}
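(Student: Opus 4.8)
The plan is to read the statement as the set equality $\textrm{image}(\bar{\cY}\intersect\cR_1)=\cU$ and to prove it by double inclusion, using only the two characterizations already in hand: Proposition \ref{prop:SO3} for the rotation blocks $\mY_i$ and Proposition \ref{prop:prismatic trans Y_tau} for the prismatic blocks $\mY_{\tau i}$. Since Proposition \ref{prop:u subset ubar} already gives $\cU\subset\bar{\cU}$, the real content is to sharpen that construction so it lands in $\cR_1$, and then to establish the reverse inclusion.

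For $\cU\subseteq\textrm{image}(\bar{\cY}\intersect\cR_1)$, I would take an arbitrary pair $(\vu,\capgreek{\tau})\in\cU$, so that the associated $\mR_i\in\SO{3}$ ($i\in\cV_r$) and $\tau_i\in[0,1]$ ($i\in\cV_p$) satisfy \eqref{eq:ik axis}--\eqref{eq:ik manifold}. Applying the maps \eqref{eq:Ystructure} and \eqref{eq:Y_itau definition} produces $\mY_i=y(\vu)$ and $\mY_{\tau i}=y_\tau(\vu,\capgreek{\tau})$, which are rank-1 by construction; this is precisely the construction behind Proposition \ref{prop:u subset ubar}, with the added observation that the outputs lie in $\cR_1$. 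The ``only if'' directions of Propositions \ref{prop:SO3} and \ref{prop:prismatic trans Y_tau} certify that these matrices are PSD and satisfy \eqref{eq:ubar structure} and \eqref{eq:linearconstraintsYtau}, so $(\mY,\mY_\tau)\in\bar{\cY}\intersect\cR_1$; since $g(\mY)=\vu$ and $\mY_{\tau i}(7,7)=\tau_i$, the image recovers $(\vu,\capgreek{\tau})$.

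For the reverse inclusion I would take any $(\bar{\mY},\bar{\mY}_\tau)\in\bar{\cY}\intersect\cR_1$ and check that its image satisfies every defining constraint of $\cU$. Each $\bar{\mY}_i$ is rank-1, PSD, and satisfies \eqref{eq:ubar structure} (which is \eqref{eq:A structure}), so the ``if'' direction of Proposition \ref{prop:SO3} yields $g(\bar{\mY}_i)=\vec(\mR_i)$ with $\mR_i\in\SO{3}$, giving \eqref{eq:ik manifold}; writing $\vu=g(\bar{\mY})$, the relaxed linear constraints \eqref{eq:ubar axis}, \eqref{eq:ubar angle}, \eqref{eq:ubar parallel} are then literally \eqref{eq:ik axis}, \eqref{eq:ik angle}, \eqref{eq:ik parallel}. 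For each prismatic joint, the rank-1-ness of $\bar{\mY}_{\tau i}$ together with \eqref{eq:linearconstraintsYtau} lets me invoke the ``if'' direction of Proposition \ref{prop:prismatic trans Y_tau} to recover \eqref{eq:prismatic_trans}, while constraint \ref{itm:ytau5} gives $\tau_i=\bar{\mY}_{\tau i}(7,7)\in[0,1]$. Hence the image lies in $\cU$, completing the equality.

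I expect the subtle step to be the prismatic part of the reverse inclusion, where the two rank conditions must be used \emph{jointly} rather than separately. Rank-1-ness of $\bar{\mY}_{\tau i}$ alone only gives the parametrized form of Proposition \ref{prop:prismatic trans Y_tau} (with an unknown unit vector $\vy_1$ and scalar $t$); it is the coupling constraint \ref{itm:ytau7} together with $\rank(\bar{\mY}_i)=1$, via Proposition \ref{prop:SO3}, that forces $\vy_1=\mR_i^{(3)}$ and thereby identifies the extension variable correctly. Making that coupling explicit -- and confirming the sign ambiguity noted in Proposition \ref{prop:prismatic trans Y_tau} is resolved consistently -- is the one place where routine constraint-chasing does not suffice on its own.
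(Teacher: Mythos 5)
Your proof is correct and takes essentially the same approach as the paper's: both reduce the claim to the if-and-only-if characterizations in Propositions \ref{prop:SO3} and \ref{prop:prismatic trans Y_tau}, together with the fact that the linear constraints \eqref{eq:ubar axis}, \eqref{eq:ubar angle}, \eqref{eq:ubar parallel} coincide with \eqref{eq:ik axis}, \eqref{eq:ik angle}, \eqref{eq:ik parallel} once $\vu=g(\bar{\mY})$ is exact on rank-1 matrices. The paper merely organizes this as constraint-group equivalences rather than an explicit double inclusion, so the logical content is identical.
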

\begin{proof}
    Using Proposition \ref{prop:SO3} we know that for any $\vu=\stack(\{\vec(\mR_{i})\mid i\in\cV_r\})\in\cU$, we have $\mR_i\in\SO{3},\forall i\in\cV_r$ if and only if every corresponding $\mY_i$ as a function of $\mR_i$ through \eqref{eq:Ystructure} satisfies \eqref{eq:ubar structure}, $\mY_i\succeq0$, and $\rank(\mY_i)=1$. Therefore the equivalence holds between \eqref{eq:ik manifold} and $\{$\eqref{eq:ubar psd}, \eqref{eq:ubar psd ytau}, \eqref{eq:ubar structure}$\}\intersect \cR_1$. According to Proposition \ref{prop:prismatic trans Y_tau}, for every $(i,j)\in\ave_p$, $\mR_i=\mR_j\in\SO{3}$, $\tau_i\in[0,1]$ hold if and only if $\mY,\mY_\tau$ satisfy \eqref{eq:linearconstraintsYtau} (enforced by \eqref{eq:ubar yytau}), $\mR_i=\mR_j$ (enforced by \eqref{eq:ubar kine}), and $\rank(\mY_{\tau i})=1$.
    The equivalence holds between the constraints on revolute joints, i.e., $\eqref{eq:kine}$ and $\eqref{eq:ubar kine}\intersect \cR_1$ and all the constraints in $\bar{\cY}$ and $\bar{\cU}$ are covered. 
\end{proof}
\begin{proposition}\label{prop:u subset boundary}
  The set $\cY$ is a subset of the boundary of $\bar{\cY}$, i.e., $\cY\subseteq\partial \bar{\cY}$.
\end{proposition}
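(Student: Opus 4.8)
The plan is to exhibit a single scalar inequality that holds throughout $\bar{\cU}$ and becomes an equality precisely on $\cU$; an active constraint of this kind immediately forces every point of $\cU$ onto $\partial\bar{\cU}$. The natural candidate is the Euclidean norm of the first recovered column, $\norm{\vy_{1i}}$, which by Definition \ref{def:g definition} is $\vy_{1i}=\bar{\mY}_i(1:3,7)$ and is therefore literally one of the coordinates of $\vu=g(\bar{\mY})$.

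First I would show that $\norm{\vy_{1i}}\le 1$ holds at every point of $\bar{\cU}$. Fix $i\in\cV_r$ and consider the $4\times 4$ principal submatrix of $\bar{\mY}_i$ indexed by rows and columns $\{1,2,3,7\}$,
\begin{equation*}
  M=\Mat{\bar{\mY}_i(1:3,1:3) & \vy_{1i}\\ \vy_{1i}^\trans & 1},
\end{equation*}
which is PSD because $\bar{\mY}_i\succeq 0$ by \eqref{eq:ubar psd} and $\bar{\mY}_i(7,7)=1$ by \eqref{eq:ubar structure}. Since the bottom-right entry is positive, the Schur complement gives $\bar{\mY}_i(1:3,1:3)-\vy_{1i}\vy_{1i}^\trans\succeq 0$; taking the trace and using $\trace(\bar{\mY}_i(1:3,1:3))=1$ from \eqref{eq:ubar structure} yields $\norm{\vy_{1i}}^2\le 1$. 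Thus $\bar{\cU}$ is contained in the closed region $C=\{\,\norm{\vy_{1i}}\le 1\,\}$ (in the ambient space of $\vu$ together with the $\tau$-coordinates).

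Next I would check that the bound is active on $\cU$ and conclude. For $\vu\in\cU$ every $\mR_i\in\SO{3}$, so $\vy_{1i}=\mR_i^{(1)}$ is a unit column and $\norm{\vy_{1i}}=1$, placing $\vu$ on $\partial C$. Then for any small $\delta>0$ the point obtained from $\vu$ by scaling its $\vy_{1i}$ block by $(1+\delta)$ lies within an arbitrarily small neighborhood of $\vu$ yet has $\norm{\vy_{1i}}=1+\delta>1$, hence lies outside $C\supseteq\bar{\cU}$. Therefore no neighborhood of $\vu$ is contained in $\bar{\cU}$, so $\vu\notin\mathrm{int}(\bar{\cU})$; since $\vu\in\bar{\cU}$ by Proposition \ref{prop:u subset ubar}, we get $\vu\in\partial\bar{\cU}$, and as $\vu$ was arbitrary, $\cU\subset\partial\bar{\cU}$.

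The point requiring the most care is that the inequality be phrased as a constraint on the \emph{image} set $\bar{\cU}$ rather than on $\bar{\cY}$: this is exactly why it matters that $\vy_{1i}$ is one of the coordinates output by the linear map $g$, so that $\norm{\vy_{1i}}\le 1$ carves out a region in the same ambient space in which $\bar{\cU}$ and $\partial\bar{\cU}$ are defined. The $\tau$-coordinates are inert in the argument, since the escaping direction acts only on the $\vy_{1i}$ block. One could instead run the identical reasoning through the eigenvalue characterization (each $\bar{\mY}_i$ has fixed trace $3$ by \eqref{eq:ubar structure}, and rank one is equivalent to the maximal value $\lambda_1=3$), but the norm bound is the most direct route to the boundary claim.
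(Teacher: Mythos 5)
Your proof is correct, and it takes a genuinely different route from the paper's. The paper invokes Proposition \ref{prop:intersect u_bar rank1} to note that any $\vu=g(\mY)\in\cU$ comes from rank-1 blocks $\mY_i$, each having a zero eigenvalue, and then perturbs $\mY_i$ along a direction $\mM$ exiting the PSD cone, asserting that the perturbed image $g(\mY_i+t\mM)$ exits $\bar{\cU}$. You instead build a scalar valid inequality on the image space itself: the Schur complement of the $\{1,2,3,7\}$ principal submatrix of $\bar{\mY}_i$ (PSD by \eqref{eq:ubar psd}, with $\bar{\mY}_i(7,7)=1$ and $\trace(\bar{\mY}_i(1:3,1:3))=1$ by \eqref{eq:ubar structure}) forces $\norm{\vy_{1i}}\leq 1$ on all of $\bar{\cU}$; this bound is active on $\cU$ since there $\vy_{1i}=\mR_i^{(1)}$ is a unit vector, and scaling the $\vy_{1i}$ block by $1+\delta$ produces arbitrarily close points violating the bound. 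Your version buys something concrete: because $g$ is many-to-one, the paper's step ``$\mY_i+t\mM\notin\IS_+$ hence $g(\mY_i+t\mM)\notin\bar{\cU}$'' is not automatic --- in principle some other matrix in $\bar{\cY}$ could map to the same perturbed image point --- whereas your inequality is phrased directly in the coordinates of $\vu$ (since $\vy_{1i}=\bar{\mY}_i(1:3,7)$ is literally a block of $g(\bar{\mY})$), so it excludes every possible preimage at once and the boundary conclusion is airtight. The paper's argument is shorter and stays closer to the eigenvalue/rank structure that drives the rank-minimization algorithm, but as written it leaves that injectivity-type gap; your Schur-complement route is more elementary and self-contained.
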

\begin{proof}
  For any $(\mY,\mY_\tau)\in\cY$, each $\mY_{i}$ or $\mY_{\tau i}$ is rank-1 and has a zero eigenvalue. Therefore, there exists a matrix $\mM$ such that $\tilde{y}_i(t)=\mY_{i}+t\mM\notin \IS_+$ 
  for any $t>0$. Thus $(\mY,\mY_\tau)$ is on the boundary of $\bar{\cY}$.
\end{proof}

These propositions show that the relaxation is bounded and contains the original lifted feasible set on its boundary. Moreover, when the projected relaxed solution is rank-1, it recovers the original feasible set exactly. These results provide the foundation for formulating IK as optimization problems.

\section{Optimization}\label{sec:opt}
This section first introduces how to build an inverse kinematics optimization problem, then discusses how to relax this problem using the relaxed set defined in Section \ref{sec:relaxation}, and finally introduces a rank minimization algorithm to find low-rank solutions, with local convergence guarantees toward the exact solutions of the original IK problem.
\subsection{Forward kinematics as a linear function}\label{sec:fk_linear}
Forward kinematics maps the robot configuration to the end-effector pose. In our rotation-based parameterization, the end-effector orientation is directly represented by the rotation matrix $\mR_{ee}$. Equation \eqref{eq:ee trans forward kinematics} expresses the end-effector position as a linear function of $\vu$ and the bilinear terms $\{\tau_i\mR_i^{(3)}\mid (i,j)\in\ave_{fk}\cap\ave_{p}\}$. Since $\vu=g(\mY)$ is linear in $\mY$, and the bilinear terms can be represented linearly in $(\mY,\mY_\tau)$ via \eqref{eq:prismatic replace bilinear}, the forward kinematics becomes linear in $\mY$ and $\mY_\tau$:
\begin{equation}\label{eq:forward kine linear expression}
\begin{aligned}
(\vec(\mR_{ee}),\mT_{ee})
&:= \left(\mE_{ee}\vu,\hat{l}_{ee}(\vu,\capgreek{\tau})\right)\\
&= \left(\mE_{ee}g(\mY),l_{ee}(\mY,\mY_\tau)\right).
\end{aligned}
\end{equation}
Here, $\mE_{ee}\in\IR^{9\times 9n_r}$ is a selection matrix such that $\vec(\mR_{ee})=\mE_{ee}\vu$, and
\begin{equation}\label{eq:fk linear terms}
\begin{aligned}
\hat{l}_{ee}(\vu,\capgreek{\tau})
&= \text{r.h.s of \eqref{eq:ee trans forward kinematics}} \\
l_{ee}(\mY,\mY_\tau)
&= \mT_{base} + \mA_t\vec(\mY)+\mB_t\vec(\mY_\tau).
\end{aligned}
\end{equation}
Thus, the end-effector translation $l_{ee}$ is linear in $(\mY,\mY_\tau)$. The full expressions of $\mA_t$ and $\mB_t$ are given in \ref{sec:full f}.
\begin{remark}\label{rem:fk_any_point}
  By \eqref{eq:forward kine linear expression}, the pose of any point rigidly attached to the robot is a linear function of $(\mY,\mY_\tau)$, provided there is a path $\cP_{fk}$ from the base to that point.
\end{remark}

\subsection{Inverse kinematics problem}\label{sec:probform}
The inverse kinematics problem aims to find the optimal and feasible $\vx^*$ such that the end-effector $ee$, reaches a desired location $\mT_{goal}$ and orientation $\mR_{goal}$. This objective can be encoded as 
\begin{equation}
\begin{aligned}
    f&:=\norm{f_{r,ee}}_2^2+\norm{f_{t,ee}}^2_2\text{, where}\label{eq:eecost}\\
    f_{r,ee}&:=\vec(\mR_{ee})-\vec(\mR_{goal})\text{, and}\\
    f_{t,ee}&:=\mT_{ee}-\mT_{goal}.
\end{aligned}
\end{equation}
We then substitute \eqref{eq:forward kine linear expression} into the cost to get
\begin{subequations}
    \begin{align}
        f_{r,ee}(\mY)&=\mE_{ee}g(\mY)-\vec(\mR_{goal})\\
f_{t,ee}(\mY,\mY_\tau)&=l_{ee}(\mY,\mY_\tau)-\mT_{goal}\\
  f(\mY,\mY_\tau) &= \norm{f_{r,ee}(\mY)}_2^2+\norm{f_{t,ee}(\mY,\mY_\tau)}_2^2.\label{eq:ee2normcostfk}
    \end{align}
\end{subequations}
We note that the cost $f$ is quadratic in $(\mY,\mY_\tau)$. See \ref{sec:full f} for a full expression of $f$.
Our inverse kinematics problem is then defined as
\begin{prob}[1][Inverse kinematics]\label{prob:IK1}
  \begin{subequations}\label{eq:ikinitialproblem}
    \begin{align}
      &\min_{\mY,\mY_{\tau}} && f(\mY,\mY_\tau)\\
      &\subjectto && \mY,\mY_\tau\in\cY \label{eq:IK1 u}
    \end{align}
  \end{subequations}
\end{prob}
As defined in Definition \ref{def:kine feas set relax}, $\cY$ is the exact lifted feasible set of the kinematic constraints. Therefore, any $(\mY,\mY_\tau)\in\cY$ satisfying $f(\mY,\mY_\tau)=0$ yields $\vu=g(\mY)$ and $\capgreek{\tau}=g_\tau(\mY_\tau)$ such that $(\mR_{ee},\mT_{ee})=(\mR_{goal},\mT_{goal})$, that is, a configuration whose end-effector pose exactly matches the target.

\subsection{Rank constrained problem and relaxations}
Using Proposition \ref{prop:intersect u_bar rank1}, we can equivalently write the inverse kinematics problem as
\begin{prob}[2a][Rank constrained inverse kinematics]\label{prob:IK2a}
  \begin{subequations}\label{eq:ikrcproblem}
    \begin{align}
      &\min_{\mY,\mY_{\tau}} && f(\mY,\mY_\tau)\\
      &\subjectto && \mY,\mY_\tau\in\bar{\cY} \label{eq:IK2 Ytau}\\
      & &&\rank(\mY_{i}) = 1,\ i \in \cV_r \label{eq:IK2 Y rank}\\
      & &&\rank(\mY_{\tau i}) = 1,\ i \in \cV_p \label{eq:IK2 Ytau rank}
    \end{align}
  \end{subequations}
\end{prob}
This problem has a quadratic objective function and convex constraints except for \eqref{eq:IK2 Y rank} and \eqref{eq:IK2 Ytau rank}, which are nonconvex. We define the following relaxed problem, which is obtained from Problem \ref{prob:IK2a} with the omission of the rank constraints.
\begin{prob}[2b][Relaxed inverse kinematics]\label{prob:IK2b}
  \begin{subequations}\label{eq:ikrcproblem_relax}
    \begin{align}
      &\min_{\mY,\mY_{\tau}} && f(\mY,\mY_\tau)\\
      &\subjectto && \mY,\mY_\tau\in\bar{\cY} \label{eq:IK2b u}
    \end{align}
  \end{subequations}
\end{prob}
\begin{remark}
  The cost $f(\mY,\mY_\tau)$ is a sum of squared Euclidean norms of affine functions and is therefore convex. Hence, Problem \ref{prob:IK2b} is a convex optimization problem with a convex objective and linear and semidefinite constraints. Although it is not written in standard SDP form, it can be reformulated as an SDP with a linear objective, a transformation handled automatically by modeling systems such as CVX \cite{cvx,gb08}.
\end{remark}
\begin{remark}\label{rem:infeasibility certify}
If Problem~\ref{prob:IK2b} is infeasible, then
Problem~\ref{prob:IK2a} is also infeasible. Moreover, if an optimal
solution \((\mY^*,\mY_\tau^*)\) of Problem~\ref{prob:IK2b} satisfies
\((\mY^*,\mY_\tau^*)\in\cY\), then it is also optimal for
Problem~\ref{prob:IK2a}. 

For a prescribed target pose, the infeasibility of exact pose matching can be certified by solving the relaxed feasibility problem obtained by adding the linear forward-kinematics constraints
\[
    f_{r,ee}(\mY,\mY_\tau)=\mathbf{0}, \qquad
    f_{t,ee}(\mY,\mY_\tau)=\mathbf{0}
\]
to the relaxed feasible set \(\bar{\cY}\). If this relaxed feasibility
problem is infeasible, then no feasible solution of the original
rank-constrained IK problem can exactly reach the target pose.
\end{remark}
Remark \ref{rem:infeasibility certify} provides us a way to certify the infeasibility of the IK problem. If 
Problem \ref{prob:IK2b} is infeasible, we are certain that there exists no feasible solution in the original feasible set $\cU$. Conversely, if a solution to the non-relaxed problem matches the optimal objective of the relaxed problem, then it is globally optimal; however, in the case of IK, this can be more easily detected by having cost equal 0.

\subsection{Rank minimization via eigenvalue maximization}\label{sec:rankmin}
This paper proposes two ways to project solutions of the relaxed problem to the set of rank-1 matrices by manipulating their eigenvalues. The first can result in faster convergence, but only works under the assumption that optimal objective of the IK problem matches the optimal objective of the relaxation (Assumption~\ref{as:optimalilty in feas set}). The second is slower in practice but requires no assumption. We introduce the first method in this section.

 For the rank minimization, we will make use of the gradient of an eigenvalue with respect to the entries of the corresponding matrix. Specifically, consider a matrix $\mA\in\IS^{n}$ that has multiplicity 1, and let the eigenvalues of $\mA$ be $\lambda_1>\dots>\lambda_n$. We are interested in finding $\frac{\partial \lambda_l}{\partial \vec(\mA)}$, for the $l$-th largest eigenvalue $\lambda_l$. Lemma \ref{lem:eigendiff} summarizes a result from \cite{magnus1985} (see that paper for a proof).
\begin{lemma}[Gradient of eigenvalues]\label{lem:eigendiff}
  Given $\mX_0\in\IS^{n}$, let $\vv_l$ and $\lambda_l$ be a pair of \emph{normalized} eigenvector and eigenvalue of $\mX_0$. For functions $\lambda(\mX)=\lambda_l$ and $v(\mX)=\vv_l$ defined on neighborhood $N(\mX_0)\subset \IR^{n\times n}$ of $\mX_0$, the gradient of $\lambda(\mX)$ at $\mX_0$ is given by
  \aleq{
    \frac{\partial \lambda}{\partial \vec(\mX)}=\vv_l\otimes\vv_l.\label{eq:grad_eigenvalue}
  }
\end{lemma}

\begin{lemma}\label{lem:lambda1convex}
The largest eigenvalue as a function $\lambda_{1}(\mX)$ is convex in $\mX\in\IS^n$.
\end{lemma}
\begin{proof}
     It is easy to see that for every $\vv$ the function $f(\mX)=\vv^\trans\mX\vv$ is convex in $\mX$ and therefore $\lambda_{1}(\mX) = \sup_{\norm{\vv}_2=1} \vv^\trans \mX \vv$ is convex in $\mX$.
\end{proof}
We have shown that the largest eigenvalue $\lambda_1$ is convex. When it is simple, its gradient is given by Lemma 1. In addition, if the matrix is PSD and has a fixed trace, we can develop a condition for the matrix to be rank-1 using the following Proposition.
\begin{proposition}\label{prop:rank_trace}
  Consider PSD matrix $\mM\in\IS^m_+$, with eigenvalues $\lambda_1\geq\dots\geq\lambda_m$. Consider the function $\lambda_1(\mM)=\lambda_1$ 
  subject to the constraint $\trace(\mM)=c$ and $c>0$, $\hat{\mM}$ is a rank-1 matrix if and only if $\hat{\mM}\in\argmax(\lambda_1(\mM))$.
\end{proposition}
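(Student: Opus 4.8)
The plan is to reduce the whole statement to one elementary spectral fact: for a PSD matrix with fixed trace, the largest eigenvalue is bounded above by the trace, with equality exactly in the rank-one case. First I would fix the feasible set $\cF=\{\mM\in\IS^m_+:\trace(\mM)=c\}$ and observe that for any $\mM\in\cF$ the eigenvalues satisfy $\lambda_i\geq 0$ and $\sum_{i=1}^m\lambda_i=c$. These two facts immediately give the bound $\lambda_1(\mM)\leq\sum_{i=1}^m\lambda_i=c$, and equality holds if and only if $\lambda_2=\dots=\lambda_m=0$. Because $c>0$, equality forces $\lambda_1=c>0$ while all remaining eigenvalues vanish, which is precisely the condition $\rank(\mM)=1$. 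So the global maximum value of $\lambda_1$ over $\cF$ is $c$, and it is attained exactly on the rank-one matrices.

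Next I would convert this characterization into the two implications of the biconditional. For the forward direction, if $\mM_0$ is rank-one then it has a single nonzero eigenvalue, so $\lambda_1(\mM_0)=\trace(\mM_0)=c$; since $c$ is the global upper bound just established, $\mM_0$ attains the maximum and hence lies in $\argmax_{\mM\in\cF}\lambda_1(\mM)$. For the reverse direction, if $\mM_0$ maximizes $\lambda_1$ over $\cF$, then $\lambda_1(\mM_0)$ equals the maximal value $c$ (which is indeed attained, e.g.\ by any $c\,\vv\vv^\trans$ with $\norm{\vv}=1$), and the equality case of the bound then yields $\rank(\mM_0)=1$.

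I would add a brief remark that the maximum is attained, so that $\argmax$ is nonempty: the set $\cF$ is compact—closed and bounded, since fixing the trace confines the eigenvalues to $[0,c]$—and $\lambda_1(\cdot)$ is continuous, so the Weierstrass theorem applies. I expect no genuine obstacle here, as the entire argument rests on the single bound $\lambda_1\leq\trace$; the one point worth stating carefully is the role of $c>0$, which is what distinguishes a true rank-one matrix from the zero matrix and guarantees the maximizing set consists exactly of the rank-one elements of $\cF$. I would also note that this characterization does not invoke the convexity of $\lambda_1$ from Lemma~\ref{lem:lambda1convex}; that convexity is instead what makes the actual maximization, carried out elsewhere via the eigenvalue gradients of Lemma~\ref{lem:eigendiff}, a maximization of a convex function over a convex set.
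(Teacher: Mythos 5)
Your proof is correct and follows essentially the same route as the paper's: both rest on the single fact that for a PSD matrix the trace equals the sum of the nonnegative eigenvalues, so $\lambda_1(\mM)\leq c$ with equality exactly when $\rank(\mM)=1$, and both directions of the biconditional follow from this equality case. Your added observations—that the maximum value $c$ is actually attained (so the $\argmax$ is nonempty) and that $c>0$ is what rules out the zero matrix—tighten points the paper leaves implicit, but they do not change the underlying argument.
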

\begin{proof}
  The trace of a matrix is also the sum of all of its eigenvalues, which are all non-negative when the matrix is positive semidefinite. For the ``if'' direction, when the constant $\trace(\mM)=c$, we have $\lambda_i\leq c$, and the condition $\hat{\mM} \in \argmax(\lambda_1(\mM))$ is achieved when $\lambda_1(\hat{\mM})=c$, implying $\lambda_{2}(\hat{\mM}),\dots,\lambda_m(\hat{\mM})=0$, and hence $\rank(\hat{\mM})=1$. For the ``only if'' direction, under the constant $\trace(\mM)=c$, $\rank(\hat{\mM})=1$ implies that the only positive eigenvalue $\lambda_1(\hat{\mM})$ equals $c$, and since $0\leq\lambda_i(\mM)\leq c$ we have $\hat{\mM}\in\argmax(\lambda_1(\mM))$.
\end{proof}

\begin{assumption}\label{as:optimalilty in feas set}
    The optimal value of the relaxed Problem \ref{prob:IK2b} is zero, and equals that of the rank-constrained Problem \ref{prob:IK2a}: 
    \begin{equation}
        \min_{(\mY,\mY_\tau)\in\cY}f(\mY,\mY_\tau)=\min_{(\mY,\mY_\tau)\in\bar{\cY}}f(\mY,\mY_\tau)=0
    \end{equation}
    and there exists an optimal solution of Problem \ref{prob:IK2b} satisfying the rank-1 constraints.
\end{assumption}

Recall that the structural constraints \eqref{eq:A structure} and \eqref{eq:linearconstraintsYtau} enforce that the traces of $\mY$ and $\mY_\tau$ are constants.
Under Assumption \ref{as:optimalilty in feas set} and using Proposition \ref{prop:rank_trace} we can rewrite Problem \ref{prob:IK2a} as the following problem:
\begin{prob}[2c][Eigenvalue maximization]\label{prob:IK2c}
  \begin{subequations}\label{eq:ikrcproblem_max_lambda1}
    \begin{align}
      &\max_{\mY,\mY_{\tau}} && \sum_{i\in\cV_r}\lambda_{1}(\mY_{i})+\sum_{j\in\cV_p}\lambda_{1}(\mY_{\tau j})\label{eq:ik2c_obj}\\
      &\subjectto && (\mY,\mY_\tau) \in \argmin(f(\mY,\mY_\tau))\label{eq:ik2cargmin}\\
      & &&\mY,\mY_\tau\in \bar{\cY} \label{eq:ik2c_u}
    \end{align}
  \end{subequations}
\end{prob}
Notice that it is assumed in the above problem that the optimal solution exists in the relaxed set, meaning that there exists an end-effector pose that reaches the desired pose. 

We propose a gradient-based approach to solve Problem \ref{prob:IK2c}. The idea is to increase the largest eigenvalue, and the other eigenvalues will decrease because the traces of our variables are fixed. To begin with, we define the following operator to simplify the formulation.
\begin{equation}
    Z(\mA,\vv) = \vec(\mA)^\trans (\vv\otimes\vv)=\vv\transpose \mA \vv=\trace(\vv\vv\transpose \mA)
\end{equation}

To find a solution for Problem \ref{prob:IK2c}, we first find a solution to Problem \ref{prob:IK2b} and then minimize the rank iteratively. In each step, the following problem is solved.
\begin{prob}[3][Eigenvalue increase update]\label{prob:IK3}
  \begin{subequations}\label{eq:ikrcproblem_eigen}
    \begin{align}     
       &\max_{\mU^k,\mU^{k}_{\tau}} &&\sum_{i\in\cV_r} Z(\mU^k_{i},\mV^{k-1,(1)}_{i}){+}\sum_{j\in\cV_p} Z(\mU^{k}_{\tau j},\mV^{k-1,(1)}_{\tau j}) \label{eq:ik3obj}\\
      &\textrm{s.t.} && f_{r,ee}(\mY^{k-1}+\mU^k,\mY_\tau^{k-1}+\mU_\tau^k)=\omat,\notag\\ 
      & &&f_{t,ee}(\mY^{k-1}+\mU^k,\mY_\tau^{k-1}+\mU_\tau^k)=\omat,\label{eq:ik3grad}\\
      & &&\mY^{k-1}+\mU^k,\mY^{k-1}_{\tau}+\mU^{k}_{\tau}\in\bar{\cY}                                     \label{eq:ik3 u}
    \end{align}
  \end{subequations}
\end{prob}

The variable $\mU^k$ denotes the update to $\mY^{k-1}$, namely, $\mY_i^k=\mY_i^{k-1}+\mU_i^k$, and $\mU_\tau^k$ is defined analogously for $\mY_\tau^{k-1}$. The objective \eqref{eq:ik3obj} maximizes the sum of inner products between the updates and the gradients of the largest eigenvalues. In particular, by Lemma \ref{lem:eigendiff},
\begin{equation}
  \begin{aligned}
  \sum_{i\in\cV_r}\vec&(\mU_i^k)^\trans
  \frac{\partial \lambda_1(\mY_i^{k-1})}{\partial \vec(\mY_i^{k-1})}\\
  &{=}
  \sum_{i\in\cV_r}\vec(\mU_i^k)^\trans
  (\mV_i^{k-1,(1)}\otimes \mV_i^{k-1,(1)}),
  \end{aligned}
\end{equation}
where $\mV_i^{k-1,(1)}$ is the normalized eigenvector of $\mY_i^{k-1}$ associated with $\lambda_1$. Thus, \eqref{eq:ik3obj} drives each update in the direction that most increases the sum of the largest eigenvalues of $\mY_i^k$ and $\mY_{\tau j}^k$. Since the traces of these matrices are fixed, increasing their largest eigenvalues decreases the remaining eigenvalues, thereby driving the matrices toward rank-1.
The constraint \eqref{eq:ik3grad} enforces that the updates lie in the zero level set of $f$ at $(\mY^0,\mY_\tau^0)$, where $(\mY^0,\mY_\tau^0)$ is obtained by solving Problem \ref{prob:IK2b}. By Assumption \ref{as:optimalilty in feas set}, and since Problem \ref{prob:IK2b} is convex, we have $(\mY^0,\mY_\tau^0)\in\argmin f(\mY,\mY_\tau)$. Hence, the updated pair $(\mY^k,\mY_\tau^k)$ remains a minimizer of $f(\mY,\mY_\tau)$. Finally, \eqref{eq:ik3 u} ensures that $(\mY^k,\mY_\tau^k)$ stays in the feasible set $\bar{\cY}$.

\subsection{Adaptive rank minimization}\label{sec:alternative rank min}
The rank minimization process in the previous section relies on Assumption \ref{as:optimalilty in feas set}. In practice, there exist IK problems where we don't know if there are feasible solutions that result in the optimal cost of the relaxed problem. For example, in some tracking problems, we want to find the joint configurations that lead the end-effector to be \emph{as close as possible} to a target that is too far for the robot to reach. In this case, the optimal cost of the original IK problem is non-zero and needs to be determined. To be able to account for such situations, we introduce the following adaptive rank minimization algorithm.

We define the following function of $\mY$
\begin{equation}\label{eq:function V}
    \begin{aligned}
        w(\mY) = \sum_{i}^{n_r} 3-\lambda_1(\mY_{i})
    \end{aligned}
\end{equation}

\begin{lemma}\label{lem:V concavity}
For any two $\mY^k,\mY^{k+1}$ it holds that
\begin{equation}\label{eq:V convavity}
w(\mY^{k+1})\leq w(\mY^k) -\sum_i^{n_r}\langle\nabla \lambda_{1}(\mY^k_{i}),\mY^{k+1}_i-\mY^k_{i}\rangle,
\end{equation}
where $\nabla\lambda_1(\mY^k_{i})=\partial \lambda_{1}(\mY^{k}_{i})/\partial \vec(\mY^{k}_{i})$.
\end{lemma}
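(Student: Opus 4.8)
The plan is to recognize \eqref{eq:V convavity} as the first-order (supporting-hyperplane) characterization of convexity applied blockwise to the largest eigenvalue, whose convexity was already established in Lemma \ref{lem:lambda1convex}. Since the constant $3$ in the definition \eqref{eq:function V} of $W$ cancels in the difference, I would first rewrite
\[
W(\mY^{k+1}) - W(\mY^k) = \sum_{i}^{n_r}\bigl(\lambda_1(\mY^k_i) - \lambda_1(\mY^{k+1}_i)\bigr),
\]
so that, after multiplying by $-1$, the claim reduces to the blockwise inequality
\[
\lambda_1(\mY^{k+1}_i) \geq \lambda_1(\mY^k_i) + \langle \nabla\lambda_1(\mY^k_i),\, \mY^{k+1}_i - \mY^k_i\rangle
\]
summed over $i\in\cV_r$.

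I would then establish this blockwise inequality directly from the variational form used in the proof of Lemma \ref{lem:lambda1convex}, which avoids any appeal to differentiability. Let $\vv_i$ be the normalized eigenvector of $\mY^k_i$ associated with $\lambda_1$. Because $\lambda_1(\mX) = \sup_{\norm{\vv}_2=1}\vv^\trans\mX\vv \geq \vv_i^\trans\mX\vv_i$ for every symmetric $\mX$, while $\lambda_1(\mY^k_i) = \vv_i^\trans\mY^k_i\vv_i$ holds with equality, I can write
\[
\lambda_1(\mY^{k+1}_i) \geq \vv_i^\trans\mY^{k+1}_i\vv_i = \vv_i^\trans\mY^k_i\vv_i + \vv_i^\trans(\mY^{k+1}_i-\mY^k_i)\vv_i = \lambda_1(\mY^k_i) + \langle\vv_i\vv_i^\trans,\, \mY^{k+1}_i-\mY^k_i\rangle.
\]
By Lemma \ref{lem:eigendiff} the rank-one matrix $\vv_i\vv_i^\trans$ (equivalently $\vv_i\otimes\vv_i$ in vectorized form, as in the operator $Z$) is exactly $\nabla\lambda_1(\mY^k_i)$, which identifies the final inner product with $\langle\nabla\lambda_1(\mY^k_i), \mY^{k+1}_i-\mY^k_i\rangle$ and yields the blockwise bound. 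Summing over $i\in\cV_r$ and substituting back into the difference of $W$ gives \eqref{eq:V convavity}.

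The argument is essentially routine, so I expect no serious obstacle; the only point requiring care is that $\lambda_1$ need not be differentiable when its largest eigenvalue is repeated, in which case $\nabla\lambda_1$ must be read as the subgradient $\vv_i\vv_i^\trans$. The variational inequality above sidesteps this entirely, since it uses only that $\vv_i\vv_i^\trans$ is a supporting functional of the convex map $\lambda_1$ at $\mY^k_i$; the remaining work is merely sign bookkeeping and matching the paper's vectorized inner-product convention to the matrix trace inner product $\langle\cdot,\cdot\rangle$.
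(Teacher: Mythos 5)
Your proposal is correct, and it rests on the same key fact as the paper's own proof---the convexity of $\lambda_{1}$ established in Lemma \ref{lem:lambda1convex}---but it establishes the first-order inequality by a different mechanism. The paper argues abstractly: $W$ is a sum of concave functions $3-\lambda_{1}(\mY_i)$, hence concave, and then invokes the gradient inequality $W(\mY^{k+1})\leq W(\mY^k)+\langle\nabla W(\mY^k),\mY^{k+1}-\mY^k\rangle$ for concave functions; this step implicitly requires $W$ to be differentiable at $\mY^k$, and Lemma \ref{lem:eigendiff} only supplies the gradient when the eigenvalues of $\mY^k_i$ are distinct, so strictly speaking the paper's argument is justified only when the largest eigenvalue is simple. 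You instead derive the supporting inequality $\lambda_{1}(\mY^{k+1}_i)\geq \lambda_{1}(\mY^k_i)+\vv_i^\trans(\mY^{k+1}_i-\mY^k_i)\vv_i$ directly from the variational characterization $\lambda_{1}(\mX)=\sup_{\norm{\vv}_2=1}\vv^\trans\mX\vv$, which holds for any choice of normalized top eigenvector $\vv_i$ and needs no differentiability: $\vv_i\vv_i^\trans$ is simply a subgradient (supporting functional) of $\lambda_{1}$ at $\mY^k_i$. This buys you a slightly stronger and more self-contained result---it covers the case of a repeated largest eigenvalue, which genuinely occurs in this algorithm (e.g., at the relaxed solution $\mY^0$ before rank minimization)---at the cost of fixing the convention that $\nabla\lambda_{1}(\mY^k_i)$ in \eqref{eq:V convavity} denotes $\vv_i\vv_i^\trans$ for the particular eigenvector the algorithm uses. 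Beyond that, the two arguments coincide: the constant $3$ cancels, and the rest is the same sign bookkeeping and summation over $i\in\cV_r$.
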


\begin{proof}
    From Lemma \ref{lem:lambda1convex}, we have that $\lambda_{1}(\mY_i)$ is a convex function of the entries of $\mY$, hence $3-\lambda_{1}(\mY_i)$ is a concave function, and the sum of concave functions is still concave. As a consequence of the properties of concave functions, we have
    \begin{equation}
w(\mY^{k+1})\leq w(\mY^k) + \langle\nabla_{\mY^k} w(\mY^{k}),\mY^{k+1}-\mY^{k}\rangle.
\end{equation}
Figure \ref{fig:concave} visualizes this relation. By substituting \eqref{eq:function V}, we get \eqref{eq:V convavity}.
\end{proof}
\begin{figure}[H]
  \centering
  \includegraphics[width=0.98\linewidth]{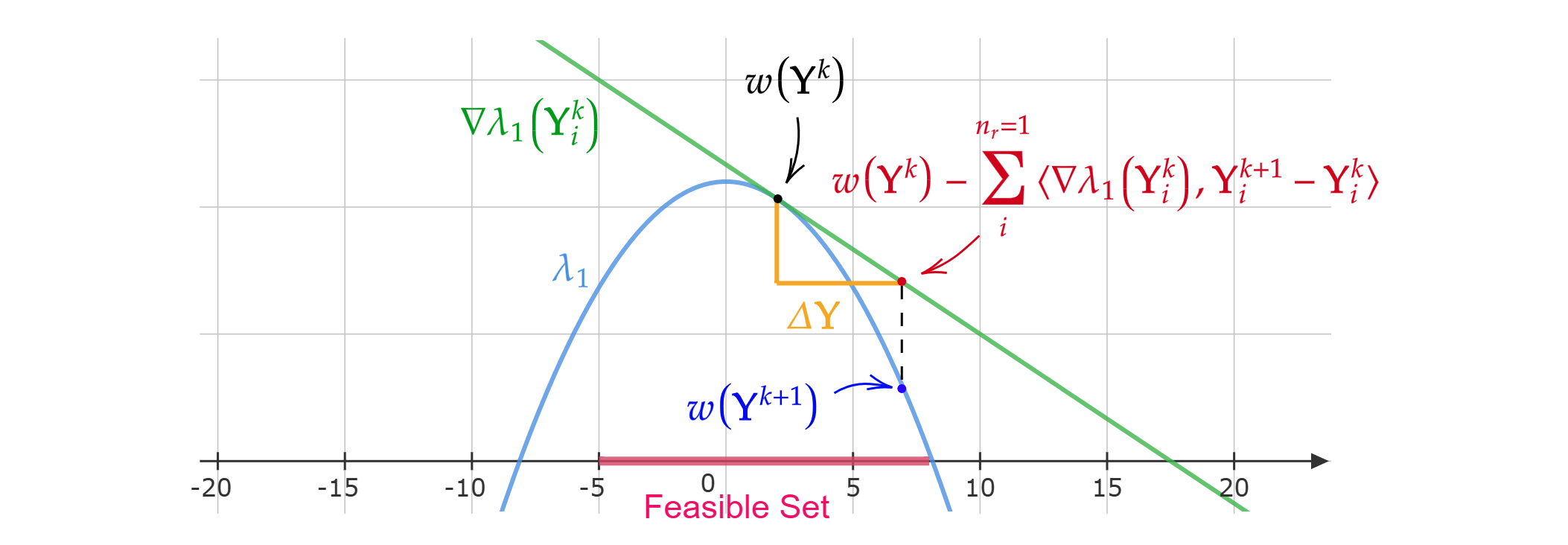}
  \caption{Suppose $n_r = 1$ and $3-\lambda_1$ is a concave quadratic function. This figure shows the relation in Lemma \ref{lem:V concavity} resulting from the concavity of the function. }
  \label{fig:concave}
\end{figure}

\begin{proposition}\label{prop:W geo decrease}
Given a sequence $\{\mY^k\}$ and a sequence of constants $\{c^{(k)}\}$ (the superscript denotes the sequence index), if the condition 
\begin{equation}\label{eq:lin approx c Vk}
    w(\mY^k) -\sum_i^{n_r}\langle\nabla \lambda_{1}(\mY^k_{i}),\mY^{k+1}_i-\mY^k_{i}\rangle\leq c^{(k)} w(\mY^k)
\end{equation}
is satisfied for every $k$, then 
\begin{equation}\label{eq:min c lin}
    w(\mY^k)\leq (\prod_{\ell=1}^k c^{(\ell)}) w(\mY^0)
\end{equation}
for every $k$.
\end{proposition}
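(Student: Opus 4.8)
The plan is to combine Lemma \ref{lem:V concavity} with the standing hypothesis \eqref{eq:lin approx c Vk} to obtain a single-step contraction, and then finish by an elementary induction on $k$. First I would chain the two inequalities already available: Lemma \ref{lem:V concavity} bounds $W(\mY^{k+1})$ above by the linearized quantity $W(\mY^k)-\sum_i^{n_r}\langle\nabla\lambda_{1}(\mY^k_{i}),\mY^{k+1}_i-\mY^k_{i}\rangle$, while \eqref{eq:lin approx c Vk} bounds exactly that same quantity above by $cW(\mY^k)$. Composing them gives the per-step estimate $W(\mY^{k+1})\le cW(\mY^k)$, valid for every $k$. This is the only nontrivial analytic input; everything after it is bookkeeping.

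The second step is the induction establishing \eqref{eq:min c lin}. The base case $k=0$ is the identity $W(\mY^0)\le c^0W(\mY^0)$. For the inductive step I would assume $W(\mY^k)\le c^kW(\mY^0)$ and multiply this by $c$, then invoke the per-step bound to get $W(\mY^{k+1})\le cW(\mY^k)\le c\cdot c^kW(\mY^0)=c^{k+1}W(\mY^0)$, closing the induction.

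The one point requiring care is the sign of $c$: multiplying the inductive hypothesis by $c$ preserves the inequality direction only when $c\ge 0$. I expect this to be the main (mild) obstacle, and I would dispatch it by first observing that $W$ is nonnegative on the relaxed set. Indeed, each $\mY_i$ is PSD with $\trace(\mY_i)=3$ by \eqref{eq:ubar structure}, so $\lambda_{1}(\mY_i)\le 3$ and each summand $3-\lambda_{1}(\mY_i)$ in \eqref{eq:function V} is nonnegative, giving $W(\mY^k)\ge 0$ for all $k$. Combined with the per-step bound $0\le W(\mY^{k+1})\le cW(\mY^k)$, this forces $c\ge 0$ whenever some $W(\mY^k)>0$ (and when $W(\mY^k)=0$ the conclusion $W(\mY^{k+1})=0$ is immediate from nonnegativity), so the multiplication step in the induction is legitimate in every case and \eqref{eq:min c lin} follows for all $k$.
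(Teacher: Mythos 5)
Your proposal is correct and follows essentially the same route as the paper's proof: chain Lemma \ref{lem:V concavity} with \eqref{eq:lin approx c Vk} to get the per-step contraction $W(\mY^{k+1})\leq cW(\mY^k)$, then conclude by induction. Your additional care about the sign of $c$ (using $W\geq 0$ on the relaxed set to rule out the problematic $c<0$ case) is a point the paper leaves implicit, but it does not change the argument's structure.
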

\begin{proof}
Eq. \eqref{eq:lin approx c Vk} combined with Lemma \ref{lem:V concavity} implies that $w(\mY^{k+1})\leq c^{(k)}w(\mY^k)$. The claim then follows by induction.
\end{proof}

\begin{theorem}\label{thm:V convergence}
Assume there exists \(\bar{c}<1\).
If, for every $k$, \eqref{eq:lin approx c Vk} is satisfied and $c^{(k)}\in[0,\bar{c}]$, then $\lim_{k\to\infty} w(\mY^k)=0$.
\end{theorem}
\begin{proof}
    Since $\mY^k_{i}\succeq 0$ and $\trace(\mY^k_{i})=3$, we have $\lambda_{1}(\mY^k_{i})\leq 3$, and $w(\mY^k)\geq 0$. Combined with Proposition \ref{prop:W geo decrease} we know that $0\leq w(\mY^k)\leq 3n_r (\Pi_{1}^k c^{(k)})$. The claim is then followed by the squeeze theorem.
\end{proof}
Equivalently, if we replace $\mY_i$ with $\mY_{\tau i}$ and change the number 3 into 2 in \eqref{eq:function V}, Theorem \ref{thm:V convergence} also holds because the trace of $\mY_{\tau i}$ is always 2. We develop the following problem to be solved iteratively in the rank minimization process.

\begin{prob}[4][Adaptive rank-reduction update]\label{prob:IK4}
  \begin{subequations}
    \begin{align}
      &\min_{\mU^k,\mU^{k}_{\tau}} f(\mY^{k-1}+\mU^{k},\mY^{k-1}_{\tau}+\mU^{k}_{\tau})\label{eq:adaptive rank min obj}\\
          &\subjectto \notag\\
          &\sum_{i\in\cV_r}\vec(\mU^k_{i})^\trans\nabla\lambda_{1}(\mY^{k-1}_i) {\geq} \sum_{i\in\cV_r}(c^{(k)}{-}1)(\lambda_{1}(\mY^{k-1}_i){-}3) \label{eq:lin approx c Y}\\
          &\sum_{i\in\cV_p}\vec(\mU^{k}_{\tau i})^\trans\nabla\lambda_{1}(\mY^{k-1}_{\tau i}) {\geq} \sum_{i\in\cV_p}(c^{(k)}{-}1)(\lambda_{1}(\mY^{k-1}_{\tau i}){-}2)\label{eq:lin approx c Ytau}\\
          &\mY^{k-1}+\mU^k, \mY^{k-1}_{\tau}+\mU_\tau^{k}\in \bar{\cY}\label{eq:IK4 Y}
    \end{align}
  \end{subequations}
\end{prob}

In the above problem, \eqref{eq:lin approx c Y} and \eqref{eq:lin approx c Ytau} are derived by substituting $w(\mY)$ and $w(\mY_\tau)$ into \eqref{eq:lin approx c Vk}, respectively. By enforcing \eqref{eq:lin approx c Y} and \eqref{eq:lin approx c Ytau} and according to Theorem \ref{thm:V convergence}, we have $\lim_{k\rightarrow\infty}\lambda_1(\mY^k_i)=3,\forall i \in \cV_r$ and $\lim_{k\rightarrow\infty}\lambda_1(\mY^k_{\tau i})=2,\forall i\in \cV_p$. By updating $\mY$ and $\mY_\tau$ iteratively using Problem \ref{prob:IK4}, we can move $\mY$ and $\mY_\tau$ toward rank-1 matrices while allowing the cost to increase, thus enabling us to solve for problems where Assumption 1 does not hold. However, in Problem \ref{prob:IK4} the factor $c^{(k)}$ has to be chosen properly. If $c^{(k)}$ becomes too small, the process becomes too ``aggressive'' and there might not exist updates $\mU^k$ and $\mU_\tau^k$ such that $\mY^{k-1}+\mU^k, \mY^{k-1}_{\tau}+\mU_\tau^{k}\in \bar{\cY}$. To account for that, we can tune $c^{(k)}$ automatically and adaptively during the rank minimization process. We initialize with $c^{(k)}\leftarrow c_0$ and a counter $p=0$, and solve Problem \ref{prob:IK4} to find a feasible solution. If the problem is infeasible, meaning that $c^{(k)}$ is too aggressive, then we update $c^{(k)}\leftarrow c_p$ with the following rule until we find a feasible solution of Problem \ref{prob:IK4}:
\begin{equation}\label{eq:adaptive c}
    \begin{aligned}
        p&\gets p+1,\\
    c_p&= 1-(1-c_0)^{p+1}.
    \end{aligned}
\end{equation}
The purpose of using this adaptive update is to find a $c^{(k)}$ that is neither too aggressive nor too conservative in terms of increasing $\lambda_1$. In Section \ref{sec:convergence}, we show that, if $p\rightarrow \infty$, then the corresponding $\mY^k,\mY^k_\tau$ is a local maximizer of Problem \ref{prob:IK2c}. Practically, we can set a large number as an upper-bound for $p$, and if this upper-bound is reached without finding a feasible solution, then the rank minimization has moved the matrices to a point close to a local maximizer of Problem \ref{prob:IK2c}.
\subsection{Algorithm}

The complete algorithm that solves the inverse kinematics Problem \ref{prob:IK1} is summarized in Algorithm \ref{alg:ik}.
\begin{algorithm}[h]
  \caption{Iterative SDP Inverse Kinematics Solver}\label{alg:ik}
  \hspace*{\algorithmicindent} \textbf{Input} $\mT_{goal}$, $\mR_{goal},\epsilon_1,\epsilon_2,k_{max}$\\
  \hspace*{\algorithmicindent} \textbf{Output} $\vx^*$
  \begin{algorithmic}[1]
    \State Solve the convex relaxation (Problem \ref{prob:IK2b}) to get an initial solution $\mY^0,\mY_{\tau}^0$, initialize $(\mU^0,\mU_\tau^0)=+\infty$, and set $k=1$.\label{step:init}
    \While{($\exists \lambda_{1}(\mY_i)\leq3-\epsilon_1$ or $\exists\lambda_{1}(\mY_{\tau i})\leq2-\epsilon_1$) and $\norm{(\mU^{k-1},\mU_\tau^{k-1})}_{F}\geq \epsilon_2$ and $k\leq k_{max}$}\label{step:startwhile}
    \State \parbox[t]{200pt}{For each $\mY^{k-1}_{i}$, compute the largest eigenvalue $\lambda_{1}(\mY_i)$ and the corresponding normalized eigenvector $\mV^{k-1,(1)}_{i}$. Repeat the same for $\mY^{k-1}_{\tau j}$ to get $\lambda_{1}(\mY_{\tau j})$ and $\mV^{k-1,(1)}_{\tau j}$.}
    \State \parbox[t]{200pt}{Compute updates $\mU^{k}$ and $\mU^{k}_{\tau}$ by solving Problem~\ref{prob:IK3} or Problem \ref{prob:IK4}.}\label{step:update}
    \State \parbox[t]{200pt}{Update $\mY^k_{i} = \mY^{k-1}_{i}+\mU^k_{i}$ for all $i\in\cV_r$ and update $\mY^k_{\tau j} = \mY^{k-1}_{\tau j}+\mU^k_{\tau j}$ for all $j\in\cV_p$ and set $k = k+1$.}
    \EndWhile\label{step:endwhile}
    \State Extract the rotations $\{\mR_{i}\}$ from $g(\mY^{k-1})$.\label{step:getrotations}
    \State Extract $\{\tau_i\mid i\in \cV_p\}$ from $g_\tau(\mY^{k-1}_{\tau})$.\label{step:gettau}
    \State Recover the translations $\{\mT_{i}\}$ using \eqref{eq:translation relation} and \eqref{eq:prismatic_trans}.
    \State \Return $\vx^*$ defined in \eqref{eq:xdefine}.
  \end{algorithmic}
\end{algorithm}

\subsection{Convergence analysis}\label{sec:convergence}
We next analyze the convergence of the proposed algorithm and establish several properties of the underlying optimization problems.
\begin{proposition}\label{prop:2bglobal}
  When Assumption \ref{as:optimalilty in feas set} holds, the optimal solution $\mY^*,\mY^*_\tau$ of the IK Problem \ref{prob:IK1} is a global minimizer of the relaxed Problem \ref{prob:IK2b}.
\end{proposition}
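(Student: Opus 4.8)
The plan is to combine the inclusion of feasible sets with the global-optimality structure of the cost. Concretely, I would show that $\{\mY^*,\mY_\tau^*\}$ is simultaneously feasible for Problem~\ref{prob:IK2b} and attains a value of $f$ that no feasible point of Problem~\ref{prob:IK2b} can undercut, which by definition makes it a global minimizer.

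First I would record the relevant structure of the objective. From its definition in \eqref{eq:ee2normcostfk} (equivalently \eqref{eq:eecost}), $f$ is a sum of two squared Euclidean norms, so $f(\mY,\mY_\tau)\ge 0$ for every argument and, more to the point, Assumption~\ref{as:optimalilty in feas set} identifies $\{\mY^*,\mY_\tau^*\}$ as a member of $\argmin f$, i.e.\ a point attaining the \emph{unconstrained} global minimum of $f$. Hence $f(\mY,\mY_\tau)\ge f(\mY^*,\mY_\tau^*)$ holds for all $\mY,\mY_\tau$, and in particular for every pair satisfying the constraint \eqref{eq:IK2b u} of Problem~\ref{prob:IK2b}.

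Second I would verify feasibility: $\{\mY^*,\mY_\tau^*\}\in\bar{\cY}$ is precisely the second clause of Assumption~\ref{as:optimalilty in feas set}. For consistency it is worth noting that this does not conflict with $\{\mY^*,\mY_\tau^*\}$ being optimal for Problem~\ref{prob:IK1}, since Proposition~\ref{prop:u subset ubar} gives $\cY\subseteq\bar{\cY}$; thus the optimizer of Problem~\ref{prob:IK1} is in any case a feasible point of the relaxation. Combining the two steps, $\{\mY^*,\mY_\tau^*\}$ lies in $\bar{\cY}$ and its objective value lower-bounds $f$ over all of $\bar{\cY}$, so it is a global minimizer of Problem~\ref{prob:IK2b}, as claimed.

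The argument has no serious obstacle; the one point requiring care is the reading of Assumption~\ref{as:optimalilty in feas set} and the notation $\argmin$. I must make explicit that ``optimal'' is to be understood as attaining the unconstrained global minimum of $f$ (the end-effector exactly reaching the target pose), because it is exactly this that lets the value $f(\mY^*,\mY_\tau^*)$ serve as a valid lower bound over the strictly larger relaxed set $\bar{\cY}$ rather than only over $\cY$. Once that is pinned down, the result is the standard ``a feasible point attaining the unconstrained optimum is a constrained optimum'' sandwich.
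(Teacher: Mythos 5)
Your proof is correct, and it is actually more complete than the one in the paper. The paper disposes of this proposition in a single line: the optimal $\mY^*,\mY^*_\tau$ of Problem~\ref{prob:IK1} is a global minimizer of Problem~\ref{prob:IK2b} ``because Problem~\ref{prob:IK2b} is convex.'' That appeal to convexity is not, by itself, a valid argument --- convexity of the relaxed problem only promotes local minimizers of \emph{that} problem to global ones; it says nothing about why a point carried over from Problem~\ref{prob:IK1} should be a minimizer of Problem~\ref{prob:IK2b} at all. The missing content is exactly what you supplied: (i) feasibility, i.e.\ $\{\mY^*,\mY^*_\tau\}\in\bar{\cY}$, which follows either from the second clause of Assumption~\ref{as:optimalilty in feas set} or from $\cY\subset\bar{\cY}$ (Proposition~\ref{prop:u subset ubar}); and (ii) the value bound, i.e.\ reading $\argmin$ in Assumption~\ref{as:optimalilty in feas set} as the \emph{unconstrained} minimum of $f$ (consistent with how the paper later relaxes that constraint to $\nabla f=\omat$), so that $f(\mY^*,\mY^*_\tau)$ lower-bounds $f$ over the strictly larger set $\bar{\cY}$. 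Your sandwich argument needs no convexity whatsoever; convexity of $f$ would only be needed under the alternative reading where the assumption supplies stationarity ($\nabla f=\omat$) rather than membership in the unconstrained $\argmin$, in which case convexity is what upgrades the first-order condition to global unconstrained optimality. Either way, your write-up makes explicit the one point on which the whole proposition turns --- the interpretation of $\argmin$ --- which the paper glosses over.
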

\begin{proof}
  Under Assumption 1, the optimal value of Problem \ref{prob:IK2b} is zero.
Since $f(\mY,\mY_\tau)\ge 0$ for all feasible $(\mY,\mY_\tau)$, any feasible
point attaining $f=0$ is a global minimizer of Problem \ref{prob:IK2b}.
\end{proof}
For a convex problem like Problem \ref{prob:IK2b}, the set of its optimal solutions is convex, hence connected. This enables the move from $(\mY^0,\mY_\tau^0)$ to $(\mY^*,\mY_\tau^*)$ within $\bar{\cY}$.

The next two propositions analyze the convergence of Algorithm \ref{alg:ik}; their proofs are given in the appendix.
\begin{proposition}\label{prop:local converge eig max}
  When Assumption \ref{as:optimalilty in feas set} holds and $\{\mY^k,\mY^k_\tau\}$ is updated using Algorithm \ref{alg:ik} with Problem \ref{prob:IK3} solved in step \ref{step:update}, it holds that $\{\mY^k,\mY^k_\tau\}\rightarrow \{\tilde{\mY}^*,\tilde{\mY}^*_\tau\}$ as $k\rightarrow +\infty$, 
  where $\tilde{\mY}^*,\tilde{\mY}^*_\tau$ is a local maximizer of Problem \ref{prob:IK2c}.
\end{proposition}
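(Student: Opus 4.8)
The plan is to recognize Problem \ref{prob:IK3} as one step of a minorize--maximize (MM) scheme for the convex maximization in Problem \ref{prob:IK2c}, and then to run a monotone-ascent argument to identify the limit. First I would establish that the two problems optimize over the same set. Since $f$ is a convex quadratic whose global (unconstrained) minimum is attained inside $\bar{\cY}$ under Assumption \ref{as:optimalilty in feas set}, the constraint $\{\mY,\mY_\tau\}=\argmin f$ in \eqref{eq:ik2cargmin} is equivalent to the affine condition $\nabla f=\omat$ appearing in \eqref{eq:ik3grad}. Hence both problems work over $\cC:=\{(\mY,\mY_\tau)\in\bar{\cY}:\nabla f=\omat\}$, which is convex (the intersection of the convex set $\bar{\cY}$ with an affine subspace) and compact (a closed subset of $\bar{\cY}$, which is bounded), and nonempty by Assumption \ref{as:optimalilty in feas set}.

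Next I would set $\Phi(\mY,\mY_\tau)=\sum_{i\in\cV_r}\lambda_{1}(\mY_i)+\sum_{j\in\cV_p}\lambda_{1}(\mY_{\tau j})$. By Lemma \ref{lem:lambda1convex} each summand is convex, so $\Phi$ is convex, and by Lemma \ref{lem:eigendiff} its gradient is block-diagonal with blocks $\mV^{k-1,(1)}_i\otimes\mV^{k-1,(1)}_i$. The objective \eqref{eq:ik3obj} is precisely $\langle\nabla\Phi(\mY^{k-1}),\mU^k\rangle=\langle\nabla\Phi(\mY^{k-1}),\mY^k-\mY^{k-1}\rangle$, i.e. the first-order Taylor form of $\Phi$ at $\mY^{k-1}$; by convexity this linear form minorizes $\Phi$ and is tangent at $\mY^{k-1}$. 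Since $\mY^k$ maximizes it over $\cC$ and $\mY^{k-1}\in\cC$, we get $\langle\nabla\Phi(\mY^{k-1}),\mY^k-\mY^{k-1}\rangle\ge 0$, and the minorization inequality then yields $\Phi(\mY^k)\ge\Phi(\mY^{k-1})$. Because $\lambda_{1}(\mY_i)\le\trace(\mY_i)=3$ and $\lambda_{1}(\mY_{\tau j})\le 2$, the function $\Phi$ is bounded above on $\cC$, so the monotone sequence $\{\Phi(\mY^k)\}$ converges and the increments $\Phi(\mY^{k+1})-\Phi(\mY^k)\to 0$.

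I would then convert this into convergence of the iterates. As $\{\mY^k,\mY^k_\tau\}\subset\cC$ is compact, a convergent subsequence $\mY^{k_j}\to\tilde{\mY}^*$ exists; passing to the limit in the MM step (using continuity of $\lambda_{1}$ and, near a rank-one point, of $\nabla\lambda_{1}$, which is licensed because the leading eigenvalue is then simple, with $\lambda_{1}\approx 3$ and $\lambda_{2}\approx 0$, so $\Phi$ is $C^1$ there) together with the vanishing increments shows the limit satisfies the stationarity condition $\langle\nabla\Phi(\tilde{\mY}^*),\mY-\tilde{\mY}^*\rangle\le 0$ for all $(\mY,\mY_\tau)\in\cC$, the first-order condition for $\tilde{\mY}^*$ to locally maximize $\Phi$ over $\cC$. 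For the boundary claim I would argue that a non-constant convex function cannot attain a local maximum in the relative interior of a convex set: were $\tilde{\mY}^*$ interior, averaging $\tilde{\mY}^*\pm\epsilon d$ and using convexity would force $\Phi$ to be flat in every feasible direction, contradicting $\nabla\lambda_{1}=\tilde{\vv}\otimes\tilde{\vv}\neq\omat$. Thus $\tilde{\mY}^*$ lies on the relative boundary of $\bar{\cY}$, and mirroring the out-of-PSD perturbation construction in the proof of Proposition \ref{prop:u subset boundary} (as invoked for the global case in Proposition \ref{prop:2c optimal is on boundary}) places $\tilde{\vu}^*=g(\tilde{\mY}^*)$ on $\partial\bar{\cU}$.

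The hardest parts will be twofold. First, upgrading subsequential convergence to convergence of the whole sequence $\{\mY^k,\mY^k_\tau\}$: this generally requires the limit point to be isolated or a strict local maximizer, e.g. via strict complementarity so that $\langle\nabla\Phi(\tilde{\mY}^*),\mY-\tilde{\mY}^*\rangle<0$ for all feasible $\mY\neq\tilde{\mY}^*$, which makes the fixed point locally attracting and rules out the convex-maximization pathology where a vanishing feasible directional derivative permits a later increase. Second, guaranteeing smoothness of $\lambda_{1}$ along the trajectory, since the gradient formula of Lemma \ref{lem:eigendiff} fails when the leading eigenvalue is repeated; both issues are benign in a neighborhood of a rank-one solution but must be assumed for the strictly \emph{local} statement, and the boundary membership $\tilde{\vu}^*\in\partial\bar{\cU}$ for a possibly non-rank-one local maximizer is the step requiring the most care.
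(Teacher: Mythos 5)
Your proposal is correct and follows the same essential route as the paper's proof: the paper likewise replaces the constraint \eqref{eq:ik2cargmin} by the affine condition $\nabla f(\mY,\mY_\tau)=\omat$ (calling the result Problem 2d), invokes Lemma \ref{lem:lambda1convex} to view the result as maximization of a convex function over a bounded convex set, and treats Problem \ref{prob:IK3} as the linearized ascent step for that problem. The difference is one of rigor rather than of route. The paper's proof states only that Algorithm \ref{alg:ik} ``can be seen as a gradient approach'' to Problem 2d and then, from boundedness of $\bar{\cY}$ alone, asserts convergence of $\{\mY^k,\mY^k_\tau\}$ to a local maximizer on the boundary, followed by a contradiction argument for local maximality that presupposes the limit exists; your minorize--maximize formalization supplies the missing mechanism, namely that with $\Phi:=\sum_{i\in\cV_r}\lambda_1(\mY_i)+\sum_{j\in\cV_p}\lambda_1(\mY_{\tau j})$ one has $\Phi(\mY^k)\geq\Phi(\mY^{k-1})+\langle\nabla\Phi(\mY^{k-1}),\mY^k-\mY^{k-1}\rangle\geq\Phi(\mY^{k-1})$ (the last inequality because $\mU^k=\omat$ is feasible for Problem \ref{prob:IK3}), together with the trace bound $\Phi\leq 3n_r+2n_p$ and compactness to extract a convergent subsequence with first-order stationarity in the limit; your relative-interior argument for the boundary claim is likewise more explicit than the paper's appeal to Propositions \ref{prop:intersect u_bar rank1} and \ref{prop:u subset boundary}, which are stated for rank-one points. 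The two difficulties you flag at the end are genuine, and you should note that they are unresolved in the paper's proof as well: the paper nowhere upgrades subsequential convergence to convergence of the whole sequence (it is simply written as $\{\mY^k,\mY^k_\tau\}\rightarrow\{\tilde{\mY}^*,\tilde{\mY}^*_\tau\}$), and Lemma \ref{lem:eigendiff} defines $\nabla\lambda_1$ only where the leading eigenvalue is simple, which need not hold at the initial iterate $\mY^0$ returned by Problem \ref{prob:IK2b} or along the early trajectory, where the objective of Problem \ref{prob:IK3} is then built from what is really only a subgradient direction. In short, your ``hard parts'' are not defects of your write-up relative to the paper; they are precisely the points at which the paper's own proof is incomplete.
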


\begin{proposition}\label{prop:local converge eig max adaptive}
    When Problem \ref{prob:IK1} is feasible and $\{\mY^k,\mY^k_\tau\}$ is updated using Algorithm \ref{alg:ik} with Problem \ref{prob:IK4} solved in step \ref{step:update}, as $k$ increases, there exist two situations: either $k\to\infty$ and $p$ is bounded (we find a $c_p$ that works for all the iterations), or $k$ is bounded and $p\to\infty$ (we get stuck on a $k$ because we cannot find a ``stepsize" small enough). In either way, it holds that
    \begin{itemize}
        \item $\{\mY^k,\mY^k_\tau\}\rightarrow\{ \tilde{\mY}^*,\tilde{\mY}^*_\tau\}$;
        \item $\tilde{\mY}^*,\tilde{\mY}^*_\tau\in\bar{\cY}$ is a local maximizer of the sum of the largest eigenvalues \eqref{eq:ik2c_obj}.
    \end{itemize}
\end{proposition}

Finally, proposition \ref{prop:2c optimal is on boundary} connects the optimal solution of the eigenvalue optimization to the optimal solutions of the original IK problem.
\begin{proposition}\label{prop:2c optimal is on boundary}
  Under Assumption \ref{as:optimalilty in feas set}, every globally optimal solution $\mY^*,\mY^*_\tau$ of Problem \ref{prob:IK2c} is also an optimal solution of Problem \ref{prob:IK2a}, and $(\vu^* = g(\mY^*),\capgreek{\tau}^*=g_\tau(\mY^*_{\tau})) \in \cU$.
\end{proposition}
\begin{proof}
  By Assumption~\ref{as:optimalilty in feas set}, the feasible set of Problem~\ref{prob:IK2c} contains
at least one point satisfying the rank-1 constraints. At such a point,
the objective in \eqref{eq:ik2c_obj} equals the sum of the fixed traces. Since no
feasible point can exceed this value by Proposition~\ref{prop:rank_trace}, every global
maximizer must also attain this value. Hence $(\mY^*,\mY_\tau^*)\in\cR_1$.
  Since $(\mY^*,\mY^*_\tau)\in\argmin(f(\mY,\mY_\tau))$, 
  $(\mY^*,\mY^*_\tau)$ is an optimal solution to Problem \ref{prob:IK2a}. Using Proposition \ref{prop:intersect u_bar rank1} we have $(\vu^*,\capgreek{\tau}^*)\in \textrm{image}(\bar{\cY}\intersect\cR_1)=\cU$.
\end{proof}

\section{Obstacle Avoidance}\label{sec:obs_avoid}
In this section, we extend the proposed IK formulation to handle obstacle avoidance by representing the obstacle-free workspace as a union of convex polyhedra and the robot as a collection of spherical collision bodies. We then derive a relaxed convex formulation based on perspective relaxation and show how it can be integrated with the rank-minimization procedure. 

\subsection{Representing the free region using a set of convex polyhedra}
We model the obstacle-free workspace as a union of convex polyhedra 
\begin{equation}
    \cC_c:=\{\cC_i\mid \cC_i:=\{\vp\mid \mA_i\vp+\vb_i\geq0\},\forall i=1,\dots,n_c\}. 
\end{equation}
Such a collection can be generated by a 3-D decomposition algorithm, such as IRIS \cite{deits2015computing}, which inflates polyhedra from seed points. 
We model the robot as a set of spherical collision bodies $\cS_b:=\{\cS_j \mid \cS_j:=\cS(\vc_j,r_j),\ \forall j=1,\dots,n_b\}$ attached to the robot, with center positions $\{\vc_j\}$ and radii $\{r_j\}$. 
Let $\cI_c$ and $\cI_b$ denote the index sets $\{1,\dots,n_c\}$ and $\{1,\dots,n_b\}$, respectively.
The obstacle-avoidance constraint can then be stated as follows:

\begin{equation}\label{eq:obstacl_explicit}
\begin{aligned}
    \text{For every } \cS_j\in\cS_b&\text{, there exists a }\cC_i\in\cC_c \text{ such that}\\
    &\cS_j\subseteq\cC_i,
\end{aligned}
\end{equation}
i.e., each spherical collision body needs to completely fit in one of the collision-free polyhedra.

We define the following sets.
\begin{definition}[Free space of centers]\label{def:Pij}
For each obstacle-free polyhedron $\cC_i$ and collision body $\cS_j=\cS(\vc_j,r_j)$, define
\begin{equation}
  \cP_{c,ij}:=\{\vp\in\real{3}\mid \cS(\vp,r_j)\subseteq \cC_i\}.
\end{equation}

Define the corresponding sets of collections of continuous and binary variables
\begin{equation}
{
  \begin{aligned}
  \cP_c&\defeq\{\{\vp_{ij}\}\in\real{3\times n_c\times n_b}\mid\vp_{ij}\in\cP_{c,ij},\ \forall i\in\cI_c, j\in\cI_b\},\\
\capgreek{\Delta}_{c1}&\defeq\{\{\delta_{ij}\}\in\real{n_c\times n_b}\mid \delta_{ij}\in\{0,1\}\ \forall i\in\cI_c, \forall j\in\cI_b\},\\
\capgreek{\Delta}_{c2}&\defeq\{\{\delta_{ij}\}\in\real{n_c\times n_b}\mid \sum_{i\in\cI_c}\delta_{ij}=1, \forall j\in\cI_b\}.
\end{aligned}
}
\end{equation}
\end{definition}

\begin{lemma}
    For every $i\in\cI_c, j\in\cI_b$, $\cP_{c,ij}$ is a polyhedron defined by
    \begin{equation}
    \begin{aligned}
        \cP_{c,ij}&:=\{\vp\in\real{3}\mid \mA_i\vp+\vb_{ij}\geq0\}\text{, and}\\
        \vb_{ij}&=(\vb_i-\va_i r_j),
    \end{aligned}
    \end{equation}
where $\va_i(l)=\norm{\mA_i(l,:)}$ is the norm of the $l$-th row of $\mA_i$.
\end{lemma}

Using these sets we can equivalently restate \eqref{eq:obstacl_explicit} as
\begin{equation}\label{eq:obstacl_explicit_2}
\begin{aligned}
\text{There exist }\vp_{c}=\{\vp_{ij}\}\in\cP_{c},\ \capgreek{\delta}_c=\{\delta_{ij}\}\in \capgreek{\Delta}_{c1}\intersect\capgreek{\Delta}_{c2} \\
\textrm{ such that }
    \vc_j=\sum_{i\in\cI_c} \vp_{ij}\delta_{ij},\ \forall j\in\cI_b.
\end{aligned}
\end{equation}

Intuitively, the variables $\{\delta_{ij}\}$ are binary \textit{switch} variables. For every collision body $j$, $\delta_{ij}=1$ activates the constraint $\cS(\vp_{ij},r_j)\subseteq\cC_i$ for only \emph{one} convex polyhedron $i$, ignoring the constraints $\cS(\vp_{i'j},r_j)\subseteq\cC_{i'}$ for other polyhedra.



We then define the set of variables $\vz_{c}=\{\vz_{ij}\}$ and consider the \emph{mixed-integer embedding} of the variables $\vp_c,\capgreek{\delta}_c$ in the set
\begin{equation}\label{eq:collision_bilinear}
\begin{aligned}
    \cW_{c}{:=}\big{\{}(\vp_c,\capgreek{\delta}_c,\vz_{c})\mid  \vp_{c}&\in\cP_{c},\capgreek{\delta}_c\in\capgreek{\Delta}_{c1},  \\
    \vz_{ij}&=\vp_{ij}\delta_{ij},\forall i\in\cI_c, j\in\cI_b\big{\}}.
\end{aligned}
\end{equation}

As noted in Remark \ref{rem:fk_any_point}, the translations of the attached collision bodies can be expressed using linear functions $\{l_j(\mY,\mY_\tau)\mid \forall j\in\cI_b\}$:
\begin{equation}
    \vc_j=l_j(\mY,\mY_\tau),\forall j\in\cI_b.
\end{equation}
Let $\mW{:=}\{(\vp_c,\capgreek{\delta}_c,\vz_c)\mid \vp_c,\vz_c\in\real{3\times n_c\times n_b}, \capgreek{\delta}_c\in\real{n_c\times n_b}\}$. We can then formulate the collision-aware IK problem as the following nonconvex optimization problem.

\begin{prob}[5][Obstacle-aware IK]\label{prob:ik_obs}
\begin{subequations}\label{eq:ik_obs}
    \begin{align}
      &\min_{\mY,\mY_\tau,\mW} && f(\mY,\mY_\tau)\label{eq:ik_obs_target_cr}\\
      &\subjectto && \mY,\mY_\tau\in\cY \label{eq:ik obs Y set}\\
      & &&\mW\in\cW_c\label{eq:ik_obs_set}\\
      & && \capgreek{\delta}_c\in\capgreek{\Delta}_{c2}\label{eq:ik_obs_y_sum}\\
      & && l_j(\mY,\mY_\tau)=\sum_i^{n_c}\vz_{ij},\forall j\in\cI_b\label{eq:ik_obs_link_cb_cr}
    \end{align}
  \end{subequations}
\end{prob}
The objective function \eqref{eq:ik_obs_target_cr} minimizes the mismatch between the end-effector and a target pose $(\mR_{ee},\vt_{ee})$. The constraint \eqref{eq:ik obs Y set} encodes the robot kinematic constraints in \eqref{eq:IK1 u}. Constraint \eqref{eq:ik_obs_set} and \eqref{eq:ik_obs_y_sum} enforce that each collision body is assigned to (and thus constrained within) exactly one polyhedron, namely $\vp_{c}\in\cP_{c}$ and $\capgreek{\delta}_c\in \capgreek{\Delta}_{c1}\intersect\capgreek{\Delta}_{c2}$. Finally, constraint \eqref{eq:ik_obs_link_cb_cr} attaches the collision bodies to the robot via forward kinematics.

\subsection{Relaxation of bilinear variables}
\begin{figure}[htb]
    \centering
    \includegraphics[width=0.98\linewidth]{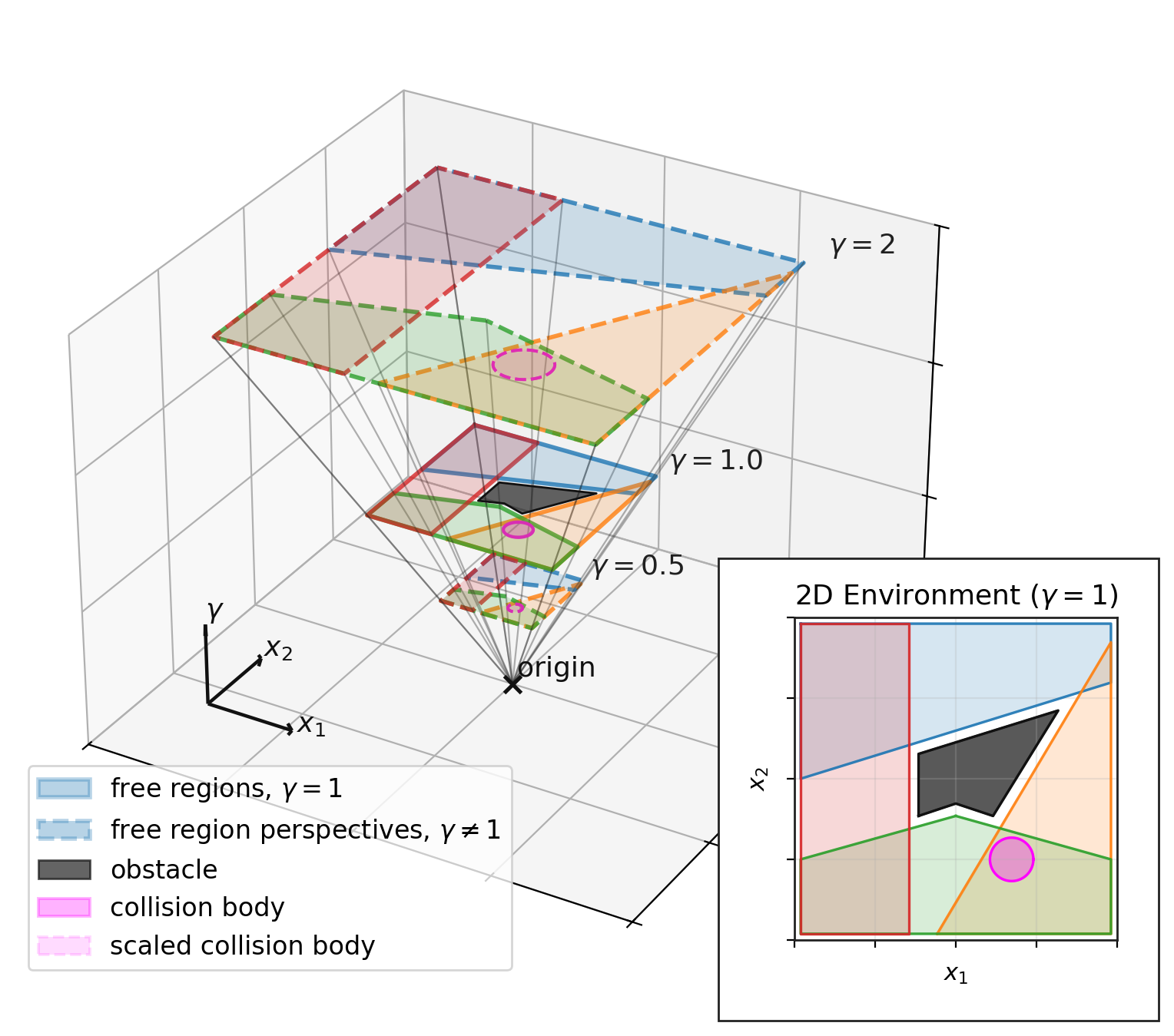}
    \caption{Perspective formulation in 2-D: a collision body (magenta disk) lies in one colored convex polygon $\mA_i\vp+\vb_i\geq0$; lifting with $\gamma$ yields a scaled polytope constraint $\mA_i\vp+\vb_i\gamma\geq0$ that is used to relax the bilinear variables brought by the collision avoidance constraint.}
    \label{fig:perspective vis}
\end{figure}

The constraints in \eqref{eq:collision_bilinear} include integers $\{\delta_{ij}\}$ and the nonlinear $\{\vz_{ij}\}$. To account for this, we introduce a convex relaxation by first relaxing the binary variable $\delta_{ij}$ to a continuous variable and then applying a perspective relaxation technique from \cite{marcucci2024shortest} to convexify the bilinear constraint $\vz_{ij}=\vp_{ij}\delta_{ij}$.
First, we relax $\cW_c$ to a continuous set $\tilde{\cW}_c$, given by
\begin{equation}\label{eq:collision_bilinear_y_cont}
\begin{aligned}
    \tilde{\cW}_{c}{:=}\big{\{}(\vp_c,\capgreek{\delta}_c,\vz_{c})\mid & \vp_{ij}\in\cP_{c,ij},\\
    &\delta_{ij}\in\capgreek{\Delta}_{c1,\textrm{cont}},  \\
    &\vz_{ij}=\vp_{ij}\delta_{ij},\forall i\in\cI_c, j\in\cI_b\big{\}},
\end{aligned}
\end{equation}
where $\capgreek{\Delta}_{c1,\textrm{cont}}$ is defined as
\begin{equation}\label{eq:y_c1_cont}
    \begin{aligned}
        \capgreek{\Delta}_{c1,\textrm{cont}}:=\{\delta\mid \vc\transpose \delta +\vd\geq0\}, \text{ where}\\
        \vc=\bmat{1&-1},\vd=\bmat{0&1}\transpose.
    \end{aligned}
\end{equation}
This is equivalent to enforcing $\delta_{ij}\in[0,1]$, but we keep the half-space form so that the perspective relaxation applies directly. Compared to \eqref{eq:collision_bilinear}, \eqref{eq:collision_bilinear_y_cont} allows every switch variable $\delta_{ij}$ to reside on the continuous interval $[0,1]$.

Next, we define the \emph{perspective} of $\cP_{c,ij}$ (for each $i,j$) as
{
\begin{equation}
    \tilde{\cP}_{c,ij}:=\{(\vp,\gamma)\mid \gamma\geq 0,\ \mA_i\vp+\vb_{ij} \gamma\geq 0\}.
\end{equation}
}
The perspective is a convex set in the variables $(\vp,\gamma)$, and it reduces to the original set $\cP_{c,ij}$ when $\gamma=1$, as visualized in Figure \ref{fig:perspective vis}. 

We use a set-based relaxation of a bilinear constraint, following \cite[section 7.1]{marcucci2024shortest}, which we summarize below.
\begin{lemma}\label{lem:bilinear relax}
    Consider a nonconvex set of the form \begin{equation}
        \cW:=\{(\vp,\capgreek{\delta},\mZ)\mid \vp\in\cP,\ \capgreek{\delta}\in\capgreek{\Delta},\ \mZ=\vp\capgreek{\delta}\transpose\},
    \end{equation}
    where $\cP:=\{\vp\mid \mA\vp+\vb\geq0\},\quad\capgreek{\Delta}:=\{\capgreek{\delta}\mid \vc_l\transpose \capgreek{\delta}+d_l\geq 0,\ \forall l\in\cI\}$ are closed convex sets, and $\cI$ is an index set for the rows of $\capgreek{\delta}\in\capgreek{\Delta}$. Define the perspective $\tilde{\cP}:=\{(\vp,\gamma)\mid \gamma\geq0,\mA\vp+\vb\gamma\geq0\}$. We have 
    \begin{multline}
    \label{eq:bi_convex_relax_lemma}
        \cW\subseteq\bar{\cW}:=\{(\vp,\capgreek{\delta},\mZ)\mid (\mZ\vc_l+d_l\vp,\ \vc\transpose_l\capgreek{\delta}+d_l)\in\tilde{\cP},\\
        \forall l\in\cI\}.
    \end{multline}
\end{lemma}

Let $\cW_{ij}$ denote the nonconvex set 
\begin{equation}
    \cW_{ij}:=\{(\vp,\delta,\vz)\mid \vp\in\cP_{c,ij},\ \delta\in\capgreek{\Delta}_{c1,\textrm{cont}},\ \vz = \vp \delta\}.
\end{equation}
Then, we apply the perspective reformulation by substituting \eqref{eq:y_c1_cont} into \eqref{eq:bi_convex_relax_lemma} of Lemma \ref{lem:bilinear relax} to get $\cW_{ij}\subseteq \bar{\cW}_{ij}$, where
\begin{equation}
\begin{aligned}
    \bar{\cW}_{ij}{:=}\big\{(\vp,\delta,\vz)\mid &(\vz,\delta)\in\tilde{\cP}_{c,ij},\\&(\vp-\vz,1-\delta)\in\tilde{\cP}_{c,ij}\big{\}}.
\end{aligned}
\end{equation}
This yields the relaxed set $\bar{\cW}_c\supseteq \tilde{\cW}_c\supseteq \cW_c$:
\begin{equation}\label{eq:bilinear_relax}
\begin{aligned}
    \bar{\cW}_c{:=}\big{\{}&(\vp_{c},\capgreek{\delta}_{c},\vz_{c})\mid(\vp_{ij},\delta_{ij},\vz_{ij})\in\bar{\cW}_{ij}\ \forall i\in\cI_c, \forall j\in\cI_b\big{\}}.
\end{aligned}
\end{equation}

Observe that $\bar{\cW}_c$ is linear in $\mW$. The relaxed obstacle-avoiding IK problem can be formulated as
\begin{prob}[6][Relaxed obstacle-aware IK]\label{prob:ik_obs_init}
\begin{subequations}\label{eq:ik_obs_relax}
    \begin{align}
      &\min_{\mY,\mY_\tau,\mW} && f(\mY,\mY_\tau)\label{eq:ik_obs_target}\\
      &\subjectto && \mY,\mY_\tau\in\bar{\cY} \label{eq:ik obs Y}\\
      & &&\mW\in\bar{\cW}_c\label{eq:ik_obs_bilinear}\\
      & && \capgreek{\delta}_c\in\capgreek{\Delta}_{c2}\label{eq:ik_obs_relax_y_sum}\\
      & && l_j(\mY,\mY_\tau)=\sum_i^{n_c}\vz_{ij},\forall j\in\cI_b\label{eq:ik_obs_link_cb}
    \end{align}
  \end{subequations}
\end{prob}
Compared to Problem \ref{prob:ik_obs}, Problem \ref{prob:ik_obs_init} relaxes the nonconvex set $\cY$ to $\bar{\cY}$ and relaxes the nonconvex set $\cW_c$ to $\bar{\cW}_c$, 
making Problem \ref{prob:ik_obs_init} convex. 
Relative to Problem \ref{prob:IK2b}, it introduces the additional decision variable $\mW$ and the constraints \eqref{eq:ik_obs_bilinear}--\eqref{eq:ik_obs_link_cb}. 
Solving Problem \ref{prob:ik_obs_init} yields a solution $(\mY^0,\mY_\tau^0,\mW^0)$, which must then be projected onto the nonconvex set defined by $\rank(\mY)=\rank(\mY_\tau)=1$ and $\{\delta_{ij}\}\in\capgreek{\Delta}_{c1}$, while remaining feasible in the constraints of Problem \ref{prob:ik_obs_init}. If the subsequent rank and integrality recovery succeeds, then an exact obstacle-free IK solution is obtained. 
In practice, this projection can be carried out by solving Problems \ref{prob:IK3} and \ref{prob:IK4} (within Algorithm \ref{alg:ik}) with the additional variable $\mW$ and the constraints \eqref{eq:ik_obs_bilinear}--\eqref{eq:ik_obs_link_cb}.

\begin{remark}
  The rank-minimization procedure in Section~\ref{sec:opt} drives $\mY$ and $\mY_\tau$ toward rank-1 solutions; however, it does not generally guarantee integrality of the assignment variables $\delta_{ij}$. To further encourage $\{\delta_{ij}\}$ to approach the binary set $\capgreek{\Delta}_{c1}$, one may additionally maximize the following term in the objectives in \eqref{eq:ik3obj} and \eqref{eq:adaptive rank min obj}
  \begin{equation}\label{eq:ymax term}
    \begin{aligned}
      \delta_{\mathrm{max}} &:= \sum_{j=1}^{n_b} \delta^k_{i_j^{\star}j}, \quad \text{where} \\
      i_j^{\star} &\in \arg\max_{i\in\cI_c} \delta^{k-1}_{ij}.
    \end{aligned}
  \end{equation}
  Maximizing $\delta_{\mathrm{max}}$ promotes the entries of $\{\delta_{ij}\}$ that were largest (among $i\in\cI_c$) in the previous iteration, thereby iteratively pushing the assignment variables toward binary values. Other solutions (e.g., the use of a concave regularizer) could be used. In practice, however, the results in Section~\ref{sec:sawyer collision} show that $\{\delta_{ij}\}$ already becomes nearly binary even without including $\delta_{\mathrm{max}}$. We nevertheless present this term for completeness.
\end{remark}

\section{Incremental Motion Generation}\label{sec:motion}
In this section, we present a local method for generating a continuous sequence of constraint-satisfying robot configurations that tracks a prescribed end-effector path. This is not a complete path-planning approach; rather, it addresses a multi-configuration inverse kinematics problem in which consecutive configurations are required to remain close to one another. We assume that the end-effector positions are given in advance and construct the motion incrementally, generating each configuration based on the preceding ones. A full path-planning or trajectory-optimization framework would require additional components, such as a sampling-based planner and a smoothing procedure, which are beyond the scope of this paper. Our more modest objective here is to show that continuity constraints can be incorporated into the IK problem.

For simplicity, we consider robots with only revolute joints in this section; the prismatic joints can be accounted for with additional variables and constraints.

Consider a three-dimensional path in the robot task space, represented by a sequence of target poses
\begin{equation}
\cT_{\mathrm{path}} := \{ (\mT^k_{ee},\mR^k_{ee}) \mid k = 1, \dots, n_{\mathrm{path}} \}.
\end{equation}

The corresponding joint-space motion is described by a sequence of rotation matrices
\begin{equation}
\cR_{\mathrm{path}} := \{ \mR_k \mid \mR_k=\Mat{\mR_1^k,\dots,\mR_{n_r}^k},k = 1, \dots, n_{\mathrm{path}} \},
\end{equation}
which defines a continuous path in the robot’s configuration space that tracks $\cT_{\mathrm{path}}$.

The path $\cR_{\mathrm{path}}$ is constructed incrementally by solving a sequence of inverse kinematics problems, each initialized from the previous solution. Continuity of the resulting motion is enforced by introducing explicit constraints on the incremental change between successive configurations. We begin by formalizing this continuity constraint.
\subsection{Continuity constraint}
Given a configuration $\mR_k$, the motion planning task requires computing a subsequent configuration $\mR_{k+1}$ that differs only slightly from $\mR_k$. For a prescribed scalar bound $r_b>0$, this continuity requirement can be expressed as
\begin{equation}\label{eq:continuity bound}
\norm{\mR_i^{k+1}-\mR_i^{k}}_F \le r_b,\quad \forall i \in \cV_r .
\end{equation}
Analogous to joint-angle limits, the constraints in \eqref{eq:continuity bound} can be encoded as linear constraints on $\mR_{k+1}$. To this end, we approximate the Frobenius-norm ball by a set of linear inequalities defined by a bounding polyhedron. This yields the linear continuity constraint
\begin{equation}\label{eq:continuity}
\mA_{\mathrm{continuity}}\vec(\mR) \le \vb_{\mathrm{continuity}} .
\end{equation}
\subsection{Incremental construction}
As an initial step, we compute a feasible configuration by solving the inverse kinematics problem for the first pose on the end-effector path $\cT_{\mathrm{path}}$. Subsequently, the configuration sequence $\cR_{\mathrm{path}}$ is generated incrementally by repeatedly solving the IK problem using Algorithm~\ref{alg:ik}, with the continuity constraint \eqref{eq:continuity} incorporated into the optimization problem at step~\ref{step:update}, where $\vec(\mR)$ is replaced by $g(\mY)$.

\section{Performance Improvements}
To improve computational performance, we introduce several additional modifications to the proposed method.

\subsection{Restart}\label{sec:restart}
Since the proposed rank-minimization scheme guarantees only local convergence, the iterates $(\mY^k,\mY_{\tau}^k)$ may converge to a point whose rank remains greater than one. To address this issue, we introduce a line search in Algorithm~\ref{alg:reproj}, which moves $(\mY^k,\mY_{\tau}^k)$ to a different point in the feasible set $\bar{\cY}$, from which the rank minimization procedure can then restart.

\begin{algorithm}[h]
  \caption{Restart}\label{alg:reproj}
  \hspace*{\algorithmicindent} \textbf{Input} $\mY,\mY_\tau$ a small step size $\eta>0$\\
  \hspace*{\algorithmicindent} \textbf{Output} $\mY',\mY'_{\tau}$
  \begin{algorithmic}[1]
    \State For some $t>0$, find matrices $\mM,\mM_\tau$ such that $\mY+t\mM,\mY_\tau+t\mM_\tau\in \bar{\cY}$
    \State $n\gets 0$
    \While{$\mY+(t+(n+1)\eta)\mM$, $\mY_\tau+(t+(n+1)\eta) \mM_\tau$ $\in \bar{\cY}$}
        \State $n\gets n+1$
    \EndWhile
    \State $\mY'\gets \mY+(t+n\eta)\mM$, $\mY'_\tau\gets\mY_\tau+(t+n\eta)\mM_\tau$
  \end{algorithmic}
\end{algorithm}

The boundary of the PSD cone is composed of singular matrices, thus the rank-1 solutions are on the boundary of $\bar{\cY}$.
Algorithm~\ref{alg:reproj} therefore perturbs $(\mY^k,\mY^k_\tau)$ along a feasible direction and continues advancing until the next step would leave $\bar{\cY}$. As a result, the final feasible point $(\mY',\mY'_\tau)$ lies close to the boundary of $\bar{\cY}$. Although this procedure does not guarantee movement toward a rank-1 point, it provides a new initialization within $\bar{\cY}$ from which Algorithm~\ref{alg:ik} can be restarted. In practice, we find that this restart strategy is effective for recovering optimal solutions. Additional details are provided in Section~\ref{sec:baxter}.

\begin{remark}
The $\mM$, $\mM_\tau$ in our current implementation are found by solving a feasibility problem. Other implementations could project random directions, or build directions based on the eigenvectors of the current solution. The current implementation has already provided significant improvements; further improvements are left as future work.
\end{remark}
\subsection{Reducing the number of variables using unit quaternions}\label{sec:quaternion}
In this subsection, we show that the formulation in Section~\ref{sec:so3relax} can be reduced in size by representing rotations with the unit quaternion $\vq=\Mat{q_r & q_x & q_y & q_z}^\trans$.
To this end, we introduce the decision variable
\begin{equation}
    \begin{aligned}
        \mQ = \vq\vq^\trans
        = \Mat{
        q_r^2 & q_rq_x & q_rq_y & q_rq_z\\
        \ast & q_x^2 & q_xq_y & q_xq_z\\
        \ast & \ast & q_y^2 & q_yq_z\\
        \ast & \ast & \ast & q_z^2
        }
        \in \IR^{4\times 4}.
    \end{aligned}
\end{equation}
By construction, $\mQ$ is positive semidefinite, has rank one, and satisfies $\trace(\mQ)=1$. Therefore, the same rank-minimization strategy used in Algorithm~\ref{alg:ik} can be applied to enforce recovery of the rank-1 structure.
A unit quaternion can be converted to a rotation matrix through the map \cite[Section~2.6]{siciliano2009modelling}
\begin{equation}
    \begin{aligned}
        \mR_q {=} \Mat{1-2(q_y^2+q_z^2) & 2(q_xq_y-q_zq_r) &2(q_xq_z+q_yq_r)\\
        2(q_xq_y+q_zq_r)&1-2(q_x^2+q_z^2) &2(q_yq_z-q_xq_r)\\
        2(q_xq_z-q_yq_r) &2(q_yq_z+q_xq_r) &1-2(q_x^2+q_y^2)}
    \end{aligned}
\end{equation}
Since each entry of $\mR_q$ is linear in the entries of $\mQ$, every rotation variable in the IK problem can be replaced by a linear function of $\mQ$. Accordingly, we may assign one matrix $\mQ_i$ to each rotation and reformulate Problem~\ref{prob:IK1} in terms of $(\mQ_i,\mY_\tau)$. This reduces the size of the matrix variable associated with each rotation from $7\times 7$ to $4\times 4$, leading to a significant reduction in the run time of off-the-shelf solvers. 

\subsection{Reducing the number of convex polyhedra for collision avoidance}\label{sec:reduce_convex_poly}
The number of candidate convex polyhedra for each collision body can be reduced through a preprocessing step. For a given collision body $\cS_j$, some of the options $\cC_i$ might be infeasible; if this information is available, we can consider the corresponding binary $\delta_{ij}$ to be zero, and remove it from the optimization problem. We then propose using the certification properties of our convex relaxation to identify some of these cases.

The constraint \eqref{eq:obstacl_explicit} need not be satisfiable for every convex polyhedron $i$. We therefore solve the following convex feasibility problem to identify the admissible polyhedra for each body. For each convex polyhedron $\cC_i$ and collision body $\cS(\vp_j,r_j)$, we solve the following feasibility problem.
\begin{prob}[7][Polyhedron--body feasibility check]\label{prob:ik_obs_feas_check}
\begin{subequations}\label{eq:ik_obs_feas}
    \begin{align}
      &\text{find } &&{\mY,\mY_\tau,\vp}\\
      &\subjectto && f_{t,ee}(\mY,\mY_\tau)=\omat\text{, }f_{r,ee}(\mY,\mY_\tau)=\omat\label{eq:ik_obs_feas_target}\\
      & &&\mY,\mY_\tau\in\bar{\cY} \label{eq:ik obs feas Y}\\
      & && l_j(\mY,\mY_\tau)=\vp_{j}\label{eq:ik_obs_feas_link_cb}\\
      & &&\cS(\vp_{j},r_j)\subseteq\cC_i
    \end{align}
  \end{subequations}
\end{prob}

A feasible solution to Problem~\ref{prob:ik_obs_feas_check} gives a robot configuration that reaches the target pose while placing collision body $\cS_j$ inside $\cC_i$. These constraints can be expressed in the same form as \eqref{eq:IK2b u}. If the problem is infeasible, then, by Remark~\ref{rem:infeasibility certify}, the corresponding IK constraints cannot be satisfied with body $j$ assigned to polyhedron $i$. In that case, the variables $\vp_{ij}$, $\delta_{ij}$, and $\vz_{ij}$, together with their associated constraints, can be removed to reduce the problem size.

In practice, many candidate polyhedra can be eliminated for each collision body, substantially improving the efficiency of the SDPs solved in Algorithm~\ref{alg:ik}. However, the feasibility check itself introduces additional computational overhead, although the pairwise checks can be parallelized. Consequently, its net effect on performance depends on the specific problem instance. 

\begin{remark}
\label{rem:infeasibility if no feasible for j body}
If Problem~\ref{prob:ik_obs_feas_check} is infeasible for every convex polyhedron $i$ for a given collision body $j$, then the IK problem itself is infeasible. Indeed, collision body $j$ cannot be placed in any candidate polyhedron, so \eqref{eq:obstacl_explicit} cannot be satisfied. In this case, either the target pose is unreachable or the chosen convex polyhedra do not provide sufficient free space for that body.
\end{remark}

\section{Results}
\subsection{Dual-Arm Baxter}\label{sec:baxter}
We first evaluate IKSPARK on the dual-arm humanoid robot Baxter by Rethink Robotics. The robot consists of two 7-DOF revolute arms mounted on a fixed torso, with joint angle limits on each arm. To model a collaborative box-carrying task, we represent the two arms as a single closed kinematic chain by rigidly constraining the grippers to lie on a common line at a fixed separation. 
The goal is then to compute robot configurations that realize prescribed end-effector poses.

In contrast to most traditional IK solvers, which are designed primarily for open kinematic chains, our method handles the closed chain directly without decomposing it into separate subtrees. This is achieved by adding the linear constraint described in Remark~\ref{rem:closedloop} to Problem~\ref{prob:IK2b}.
\begin{table}[htb]
    \centering
    \resizebox{0.48\textwidth}{!}{
    \begin{tabular}{ccc}
    \toprule
        Variable Type & Variable Size & Number of Rows (Equality/Inequality)\\
        \midrule
        Rotations&$\mY\in\IR^{7\times7\times n_r}$ & $137/6112$\\
        Quaternions&$\mQ\in \IR^{4\times 4\times n_r}$& $92/6112$\\
        \bottomrule
        \toprule
        Variable Type & $err(\mR_{ee})$ & $err(\mT_{ee})$\\
        \midrule
        Rotations& $0$ & $3.87\cdot 10^{-9}$\\
        Quaternions&$1.62\cdot 10^{-16}$& $9.60\cdot 10^{-9}$\\
        \bottomrule
        
    \end{tabular}
    }
    \caption{Problem sizes and results when solving the Dual-arm Baxter example using two different types of variables}
    \label{tab:problem compare}
\end{table}

\begin{figure}[htb]
  \centering
  \includegraphics[width=0.98\linewidth]{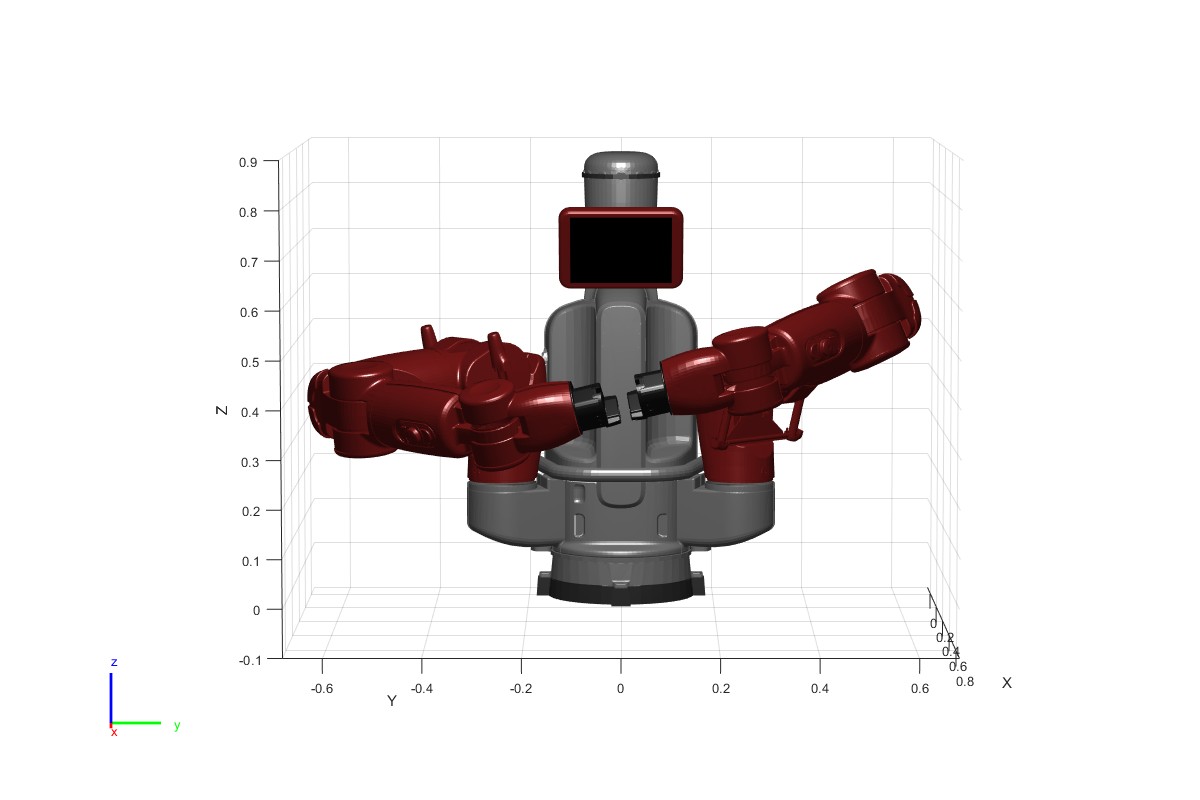}
  \caption{An example posture solved for Baxter, where the two arms are modeled as one closed kinematic chain.}
  \label{fig:baxter1pose}
\end{figure}

\begin{figure*}[htb]
  \subfloat[$\lambda_1$, rotation matrices]{
    \begin{minipage}[c][1\width]{
        0.23\textwidth}
      \centering
      \includegraphics[width=1.1\textwidth]{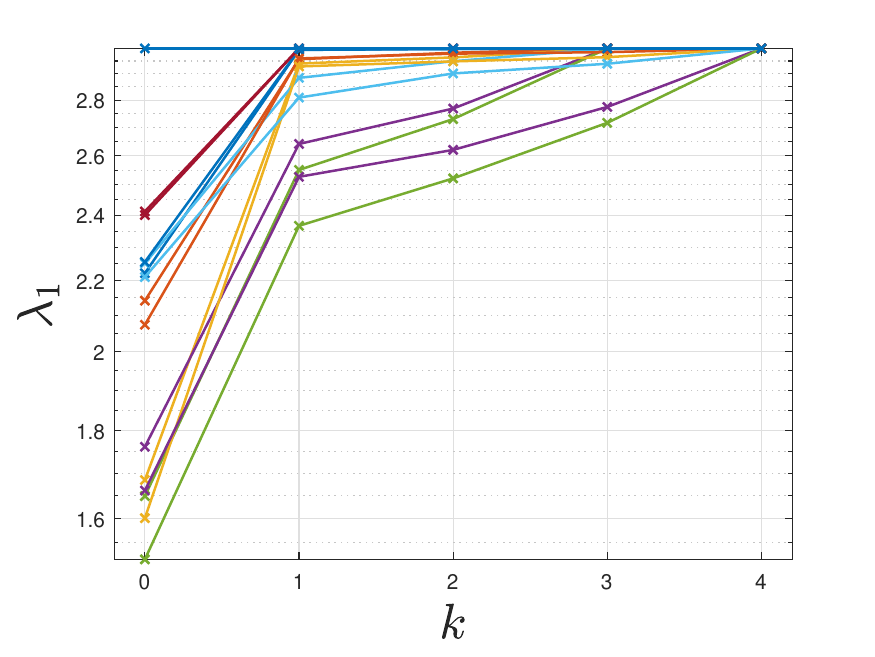}\label{fig:lambda1}
    \end{minipage}}
  \subfloat[$\lambda_1$, quaternions]{
    \begin{minipage}[c][1\width]{
        0.23\textwidth}
      \centering
      \includegraphics[width=1.1\textwidth]{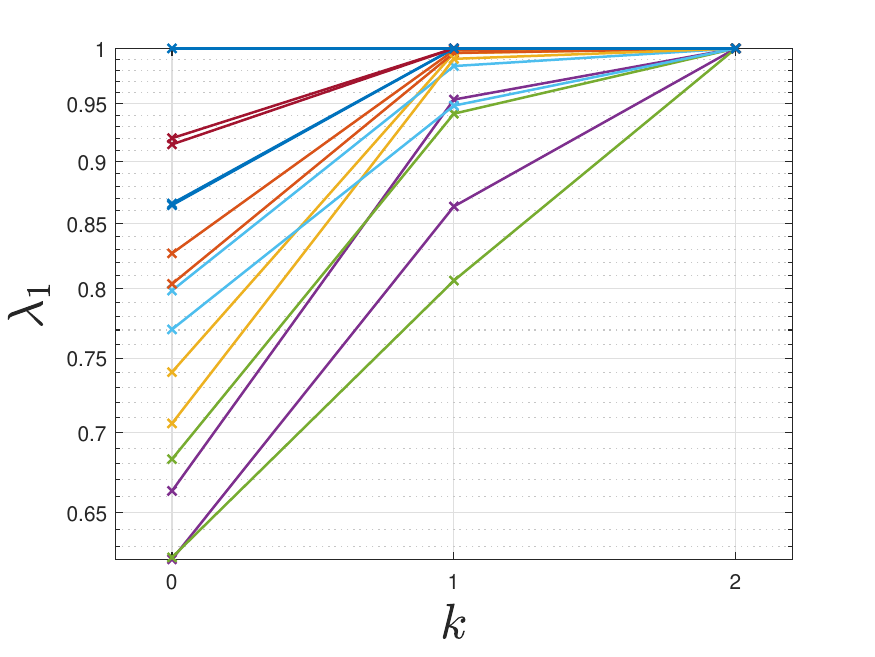}\label{fig:lambda1_quat}
    \end{minipage}}
  \subfloat[Eigenvalues, rotation matrices]{
    \begin{minipage}[c][1\width]{
        0.23\textwidth}
      \centering
      \includegraphics[width=1.1\textwidth]{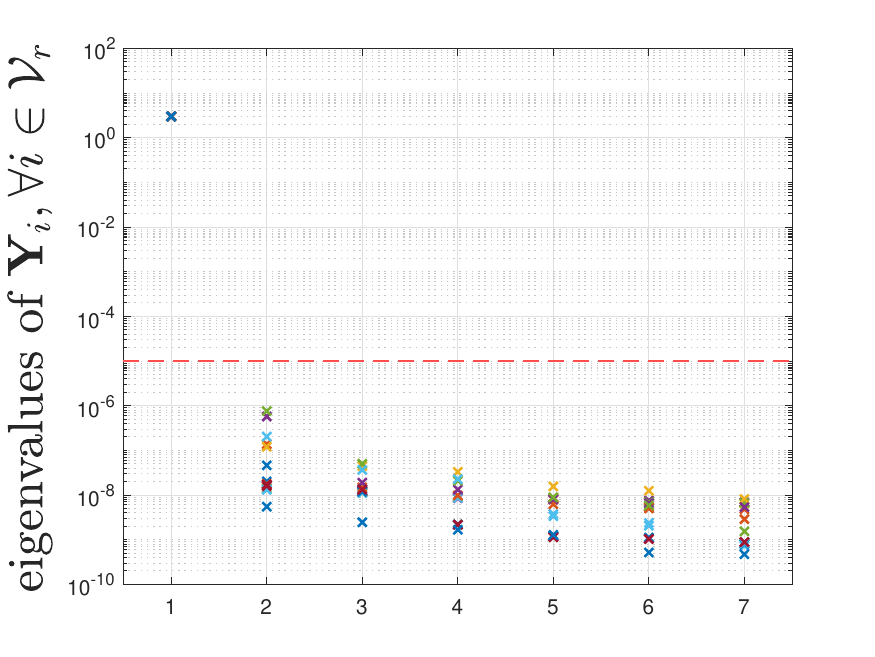}\label{fig:eigens}
    \end{minipage}}
    \subfloat[Eigenvalues, quaternions]{
    \begin{minipage}[c][1\width]{
        0.23\textwidth}
      \centering
      \includegraphics[width=1.1\textwidth]{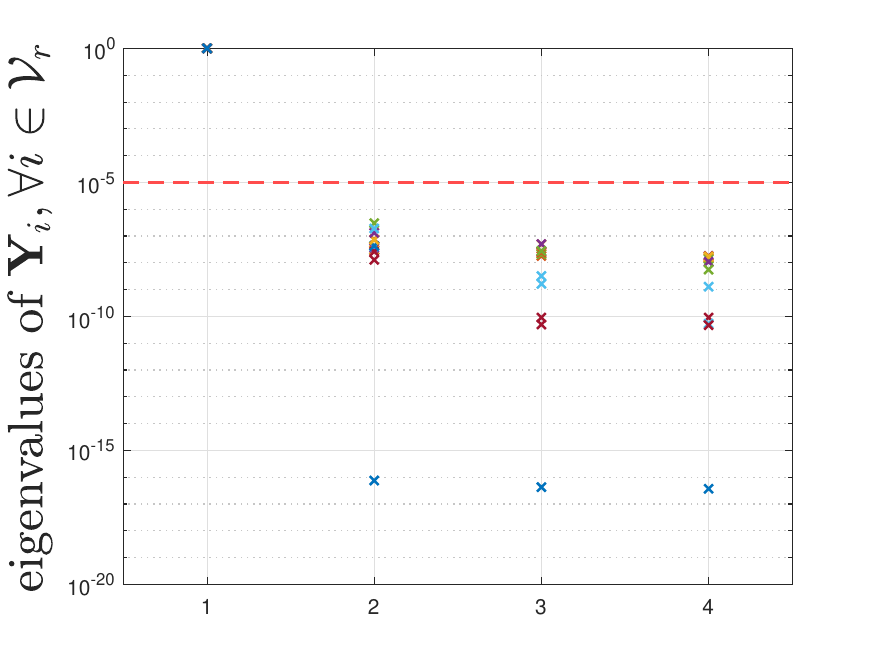}\label{fig:eigens_quat}
    \end{minipage}}
  \caption{Some computational results of the solutions for poses in Fig. \ref{fig:baxter1pose}. Figures \ref{fig:lambda1} and \ref{fig:lambda1_quat} show the changes of the largest eigenvalues $\lambda_1$ of each $\mY^k_{i}$ and $\mQ^k_{i}$ over iteration $k$ when different types of variables are used, where each line corresponds to one matrix. Figures \ref{fig:eigens} and \ref{fig:eigens_quat} compare the eigenvalues of each $\mY_i$ and $\mQ_i$ in the solution, where all eigenvalues except the largest one are below the tolerance $\epsilon_1$ (red dashed line).}
  \label{fig:result_1pose}
\end{figure*}

\begin{table*}[htb]
  \centering
    \resizebox{0.98\textwidth}{!}{
  \begin{tabular}{c cc cc cc}
    \toprule
    Method &Success ct., \%$^\dagger$ &Avg. time &$\overline{err}(\mR_{ee})$ & $\overline{err}(\mT_{ee})$ &$\max(\norm{\mR_{i}-P(\mR_{i})}_F)$ & maximal $e_2$\\ [0.5ex] 
    \midrule
    IKSPARK(rotations) & 376(+22)/431, 87.2\%(+5.1\%)=92.3\% & 0.6399 s& $1.7276\cdot 10^{-17}$&$2.6958\cdot 10^{-8}$&$9.0475\cdot 10^{-6}$ & $2.2130\cdot 10^{-6}$\\
    IKSPARK(quaternions) & 382(+18)/431, 88.6\%(+4.2\%)=92.8\% & 0.2889 s & $1.4879\cdot 10^{-16}$ & $6.8376\cdot 10^{-9}$& $2.1187\cdot 10^{-5}$ & $1.5729\cdot 10^{-8}$\\
    BFGS &388/431, 90.0\%&0.1966 s &$1.3452\cdot10^{-8}$&$5.9359\cdot 10^{-9}$& - &- \\
    Drake IK (10 attempts) & 362/431, 84.0\%&0.0448 s & $3.8046\cdot 10^{-5}$ & $2.9066\cdot 10^{-6}$ & - &- \\
    Drake IK (1 attempt) & 32/431, 7.4\%&0.0084 s & $2.2814\cdot 10^{-5}$ & $2.2391\cdot 10^{-6}$ & - &- \\
    \bottomrule
  \end{tabular}
  }
  \footnotesize{$^\dagger$The denominator is the total number of goals subtracted by the number of cases where infeasibility is certified. The additional success rate of IKSPARK with the restart process is shown in parentheses.}
\caption{Performance of IKSPARK on 500 IK goals without collision constraints, compared with two numerical solvers.}\label{tab:errors}
\end{table*}

\begin{figure}[htb]
    \centering
    \includegraphics[width=0.88\linewidth]{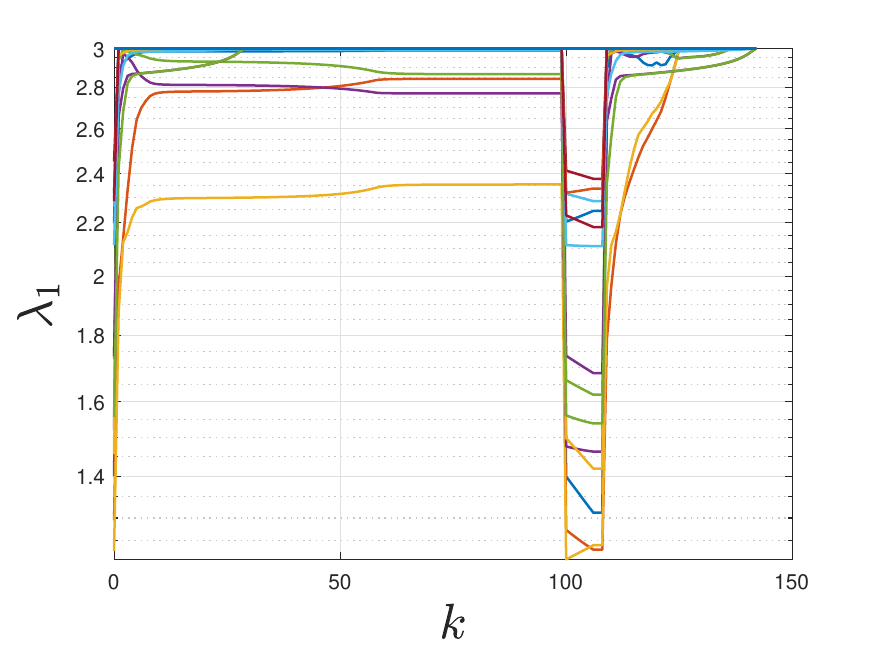}
    \caption{The change of the largest eigenvalues where a sub-optimal solution from Algorithm \ref{alg:ik} ($k=1,\dots,100$) is projected to another point in the relaxed set using Algorithm \ref{alg:reproj} ($k =101,\dots,108$) and restarted again with Algorithm \ref{alg:ik} ($k=109,\dots,142$).}
    \label{fig:restart}
\end{figure}

\begin{figure}[htb]
  \centering
  \includegraphics[width=0.78\linewidth, trim=0cm 3cm 0cm 3cm, clip]{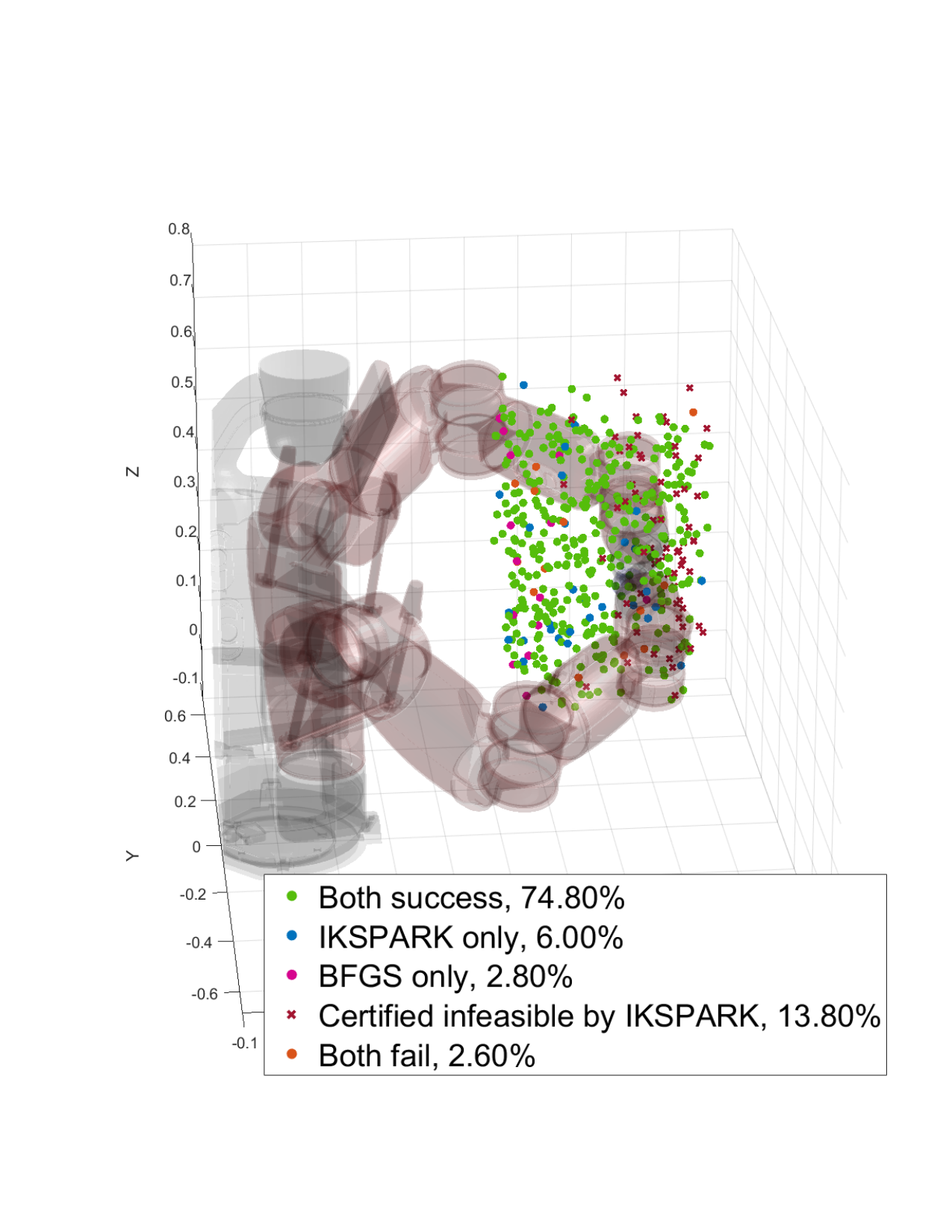}
  \caption{Performance of the proposed method, taking the better result across the two variable formulations, and a traditional BFGS gradient-projection method on 500 end-effector poses for the dual-arm Baxter robot.}
  \label{fig:compare3d}
\end{figure}

\begin{figure}[htb]
    \centering
    \includegraphics[width=0.98\linewidth]{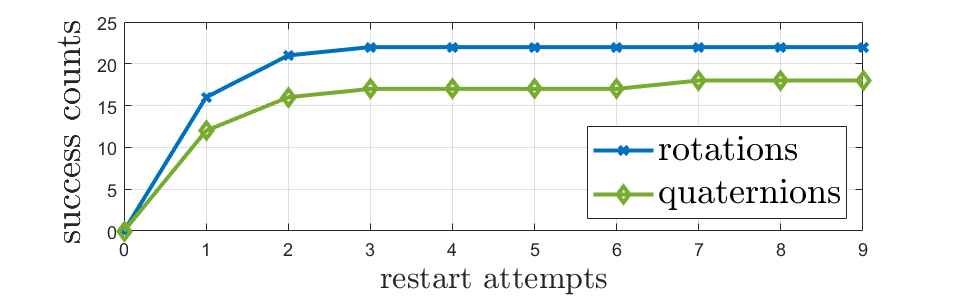}
    \caption{The restart (Algorithm \ref{alg:reproj}) is applied to the failed results in the 500 poses. The figure shows the cumulative number of successes after different numbers of attempts. The majority of the gains are obtained with three restarts or less.}
    \label{fig:restart success rate}
\end{figure}

We test IKSPARK through two simulations, one using the rotation formulation $\mY$ and one using the quaternion formulation $\mQ$. Figure \ref{fig:baxter1pose} shows a solution to a given $\mT_{goal}$ and $\mR_{goal}$ using IKSPARK. In this example, the total number of free links is $n_r=15$, which defines the size of the problem shown in Table \ref{tab:problem compare}. Problem \ref{prob:IK3} is solved in step \ref{step:update} of Algorithm \ref{alg:ik}. 
The end-effector is set as the midpoint of the two grippers and is treated as a link of the robot assigned with the reference frame $\{\mR_{ee},\mT_{ee}\}$. The errors of the end-effector pose, $err(\mR_{ee})=\norm{\vec(\mR_{ee})-\vec(\mR_{goal})}_2$ and $err(\mT_{ee})=\norm{\mT_{ee}-\mT_{goal}}_2$ are compared for the two variable selections in Table \ref{tab:problem compare}.
We verified that all the poses satisfy the imposed constraints in Problem \ref{prob:IK1} along with the translation relation \eqref{eq:translation relation}. Figure \ref{fig:result_1pose} shows some results regarding the computation process where \ref{fig:lambda1} and \ref{fig:lambda1_quat} show the change of the largest eigenvalue, $\lambda_1$, of each $\mY^k_{i}$ and $\mQ^k_{i}$ during the rank minimization process.
We observed that values of $\lambda_1$ increase iteratively, eventually reaching the maximum value given by the trace constraint. Figures \ref{fig:eigens} and 
\ref{fig:eigens_quat} present the eigenvalues of every $\mY_{i}$ in the final solution, where all eigenvalues except $\lambda_1$ are below the tolerance $\epsilon_1$. This shows that each $\mY_{i}$ and $\mQ_i$ in the solution is approximately a rank-1 matrix. These results show that in this example, the solver successfully solves the IK problem.

The proposed rank-minimization algorithm guarantees only local convergence. In practice, the solver may therefore fail to recover a rank-1 solution and instead stall at matrices of higher rank. In such cases, the restart procedure \ref{sec:restart} can be used to improve the result.
To illustrate this behavior, we take a suboptimal solution at which Algorithm~\ref{alg:ik} terminates because it cannot find a suitable update direction ($\norm{\mU^k}_F<\epsilon_2$), and use it as input to Algorithm~\ref{alg:reproj}, thereby obtaining a new point in the relaxed feasible set. We then reapply the rank-minimization procedure in Steps~\ref{step:startwhile}--\ref{step:endwhile} of Algorithm~\ref{alg:ik}. The evolution of the eigenvalues during this process is shown in Fig.~\ref{fig:restart}. Initially, rank minimization converges to a collection of higher-rank matrices. After the restart step, 
the solver is able to continue and recover a rank-1 solution.

To test the performance of IKSPARK on multiple different targets, we implement it on a set of random end-effector poses. We build this set by randomly sampling 500 points in a space $\mT_{goal} = \Mat{x,y,z}^\trans\in \cT_{goal}$, where $x\in [0.4,0.75]$, $y\in [-0.2,0.2]$, and $z\in [0.2,0.7]$. For each point, we assign a randomly generated orientation $\mR_{goal}=\mR_z(\alpha)\mR_y(\beta)\mR_x(\gamma)\in\cR_{goal}$, where $\alpha\in [0,\pi/2]$, $\beta\in [0,\pi]$, and $\gamma\in [-\pi/2,0]$. These poses are selected based on the mutual reachable space of the arms, but are not guaranteed to have feasible IK solutions. 
For comparison, we evaluated two widely used IK solvers on the same problem set: the BFGS-based IK solver provided by MATLAB’s \texttt{generalizedInverseKinematics} class and Drake’s \texttt{InverseKinematics} module in Python.
It is important to note that the BFGS solver can only handle open kinematic chains. As a result, for a shared end-effector pose of the two arms, it must be applied separately to each arm. In addition, both the BFGS and Drake solvers require an initial guess for every query, whereas our method does not. In our simulations, the initial guess was set to the zero joint configuration for both BFGS and Drake.
We use SNOPT to solve the nonlinear optimization problems in the Drake IK solver, and we allow 10 attempts to restart the solver with different initial guesses if it fails to find a solution.
The \cite{mosek} SDP solver is employed to solve the SDP problems within our method. 
To improve performance, the restart algorithm is applied to the cases where IKSPARK fails to find a solution; in such cases we give 10 restart attempts. For this test, we do not consider the collisions.

The results of IKSPARK and BFGS are visualized and compared in Fig.~\ref{fig:compare3d}, where the sampled target poses are colored according to which methods succeed. For our approach, a problem is counted as successfully solved if a solution is found using either of the two variable formulations.
Moreover, as noted in Remark~\ref{rem:infeasibility certify}, infeasibility of Problem~\ref{prob:IK2b} certifies infeasibility of the corresponding IK problem. Using this criterion, we certify that 69 of the 500 sampled problems are infeasible, which, as expected, none of the methods succeeds in such cases.
The solvers are compared for their performance in Table \ref{tab:errors}, including success rates and for successful solutions: the average time covering only the time consumed in the SDP solver and the average errors of the end-effector poses. On average, the rank minimization process requires 7.67 iterations when using the rotation variables and 3.64 iterations under the quaternion formulation. Some other results of our method are also listed. This includes maximal $\norm{\mR_{i}-P(\mR_{i})}_F$, which is the maximal value of all Frobenius norms of the difference between computed $\mR_{i}$ and its projection $P(\mR_{i})$ on $\SO{3}$ (see \cite{umeyama1991least}), for all $i\in\cV_r$ in the successful solutions. This shows how close to the $\SO{3}$ manifold the computed rotations are. 
Another result is the maximal value among all of the second-largest eigenvalues of every $\mY_{i}$ in the successful solutions. This shows how close to a rank-1 matrix each $\mY_{i}$ is. 
BFGS and Drake IK use minimal, non-convex parametrizations of $\SO{3}$, hence the last two metrics are not applicable.
Figure \ref{fig:restart success rate} shows how the restart algorithm improves the success rate with increasing number of restart attempts for failed cases.

Table~\ref{tab:errors} shows that IKSPARK achieves a higher success rate and comparable solution accuracy relative to those of the BFGS solver, although its runtime is generally higher. When the quaternion-based formulation is used, the reduced variable size leads to a noticeable speedup, making the runtime comparable to that of BFGS. The last two columns of Table~\ref{tab:errors} further show that our method consistently recovers rank-1 solutions and rotation matrices that lie on $\SO{3}$, thereby validating the proposed rank-minimization approach. Figure~\ref{fig:compare3d} also demonstrates that the proposed method can solve instances for which the benchmark solver fails.
In general, the failure cases across the compared methods happen at the edges of the feasible workspace, but we did not find any discernible pattern distinguishing IKSPARK and the benchmark methods.

We then test the solver on a different problem set obtained by translating all $\cT_{goal}$ poses by $2$ in the $x$-axis direction. By construction, all these poses are beyond the reach of the robot end-effector. We use our solver to test the infeasibility of these problems 
consisting of the relaxed constraints $(\mY,\mY_\tau)\in\bar{\cY}$ together with 
$f_{t,ee}=\omat$ and $f_{r,ee} = \omat$. If this relaxed feasibility problem is infeasible, then the original IK problem is infeasible for that target. As a result, the solver detects infeasibility for all of the $500$ target poses, matching our theoretical expectations.

We evaluate the adaptive rank-minimization scheme on the same set of infeasible poses
by applying Algorithm~\ref{alg:ik} with  Problem~\ref{prob:IK4} in Step~\ref{step:update}. The results are reported in Table~\ref{tab:alt results}. We first test several choices in which $c^{(k)}$ is kept constant across iterations. Smaller values of $c^{(k)}$ can lead to faster convergence, but they may also cause the solver to fail to find an iterate satisfying \eqref{eq:min c lin} at some step. Larger values of $c^{(k)}$, by contrast, generally require more iterations and tend to produce a larger increase in the cost. They also more frequently lead to termination with only limited improvement. Finally, we consider the adaptive update rule in \eqref{eq:adaptive c}. This strategy improves the success rate, although it typically results in a larger~$\Delta \bar{f}$, which is the average cost increase during the iterative rank minimization. Figure~\ref{fig:adaptive_solns} visualizes the found configuration of a target instance.

\begin{table}[htb]
  \centering
\resizebox{0.48\textwidth}{!}{
  \begin{tabular}{cccccc}
    \toprule
    Variable Type &$c^{(k)}$ &Success \% &Avg. time &Avg. iterations & $\Delta \bar{f}$\\ [0.5ex]
    \midrule
    Rotations & 0.2&29.6\%&0.6429(s) &7.07 & +0.0391\\
    \midrule
    Quaternions & 0.05&62.2\%&0.3623(s) &4.41 & +0.0342\\
    \bottomrule
    \toprule
    Rotations & 0.4&55.0\%&0.9548(s) &10.95 & +0.0809\\
    \midrule
    Quaternions & 0.1&80.20\%&0.4020(s) &5.07 & +0.0521\\
    \bottomrule
    \toprule
    Rotations & 0.8&28.8\%&3.6441(s) &47.36 & +0.1404\\
    \midrule
    Quaternions & 0.2&67.6\%&0.4886(s) &7.06 & +0.0607\\
    \bottomrule
    \toprule
    Rotations & Adaptive &99.6\%&1.6226(s) &7.83 & +0.1239\\
    \midrule
    Quaternions & Adaptive &97.0\%&0.5170(s) &5.33 & +0.0905\\
    \bottomrule
  \end{tabular}
  }
  \caption{Performance of IKSPARK solving closest matching configurations for 500 out-of-reach targets}\label{tab:alt results}
\end{table}

\begin{figure}[htb]
  \centering
  \includegraphics[width=0.98\linewidth]{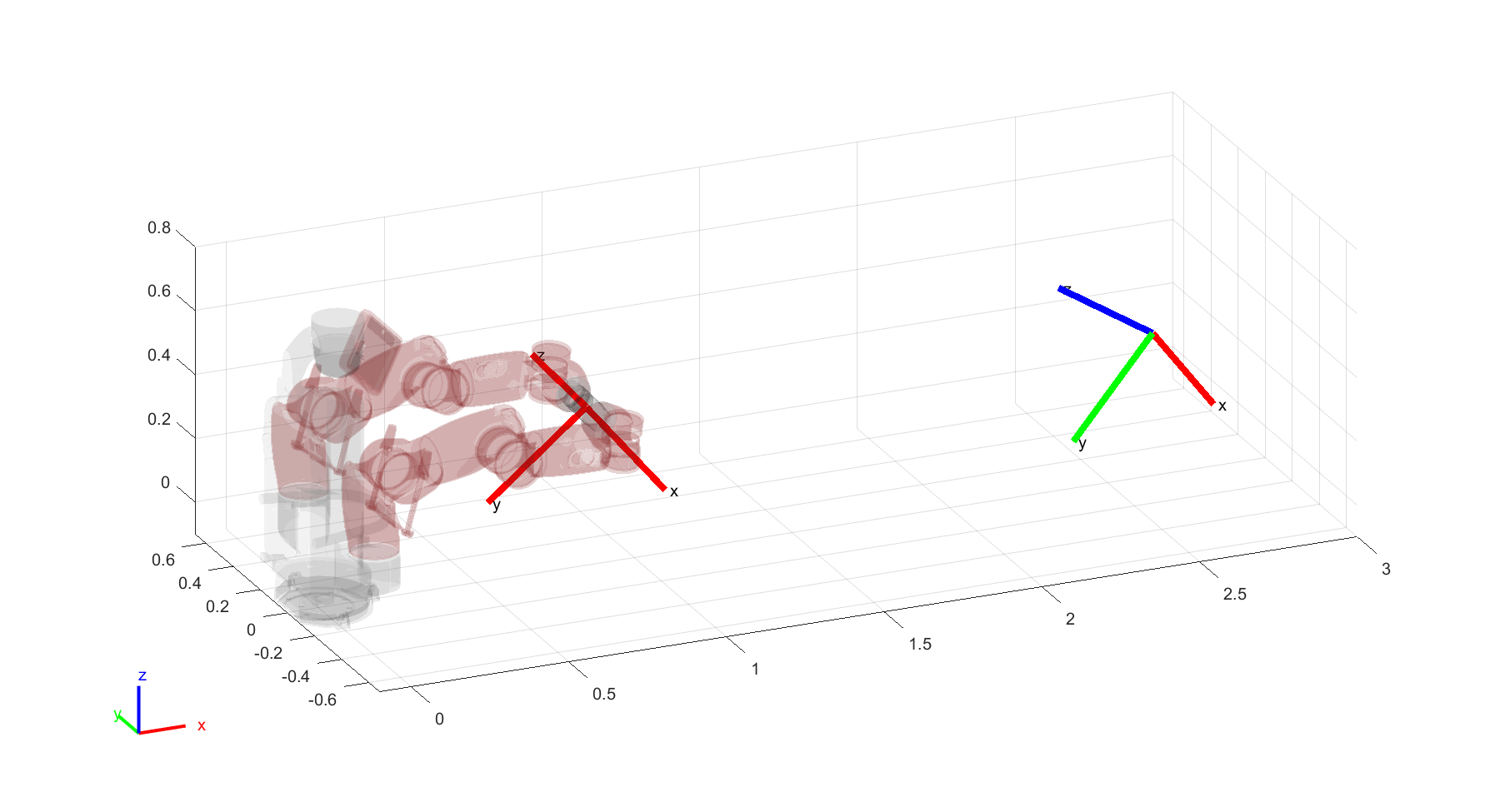}
  \caption{Visualization of one instance from Table \ref{tab:alt results}. The configuration is found under the settings of \(c^{(k)}=0.2\) and the variable type of rotations. The associated end-effector pose, shown in red, is displayed together with the target pose on the right. The resulting cost (squared offset between the end-effector and the target) is \(3.7044\).}
  \label{fig:adaptive_solns}
\end{figure}

\subsection{Stewart platform with prismatic joints}
We show in this subsection that our solver can find solutions for robots with complex closed kinematic chains and prismatic joints. As an example, we consider a classical Stewart platform \cite{stewart1965platform},  a parallel robot with two rigid bodies connected by 6 legs equipped with 6 actuated prismatic joints, as shown in Figure~\ref{fig:stewart platform}. The reference frame for the end-effector is attached rigidly in the center of the top surface, and the limits $\tau_{l}$ and $\tau_u$ are set as $0.0001$ and $1$. 
The legs are attached to the body through non-actuated spherical joints. 

\begin{figure}[htb]
   \centering
   \includegraphics[width=0.88\linewidth]{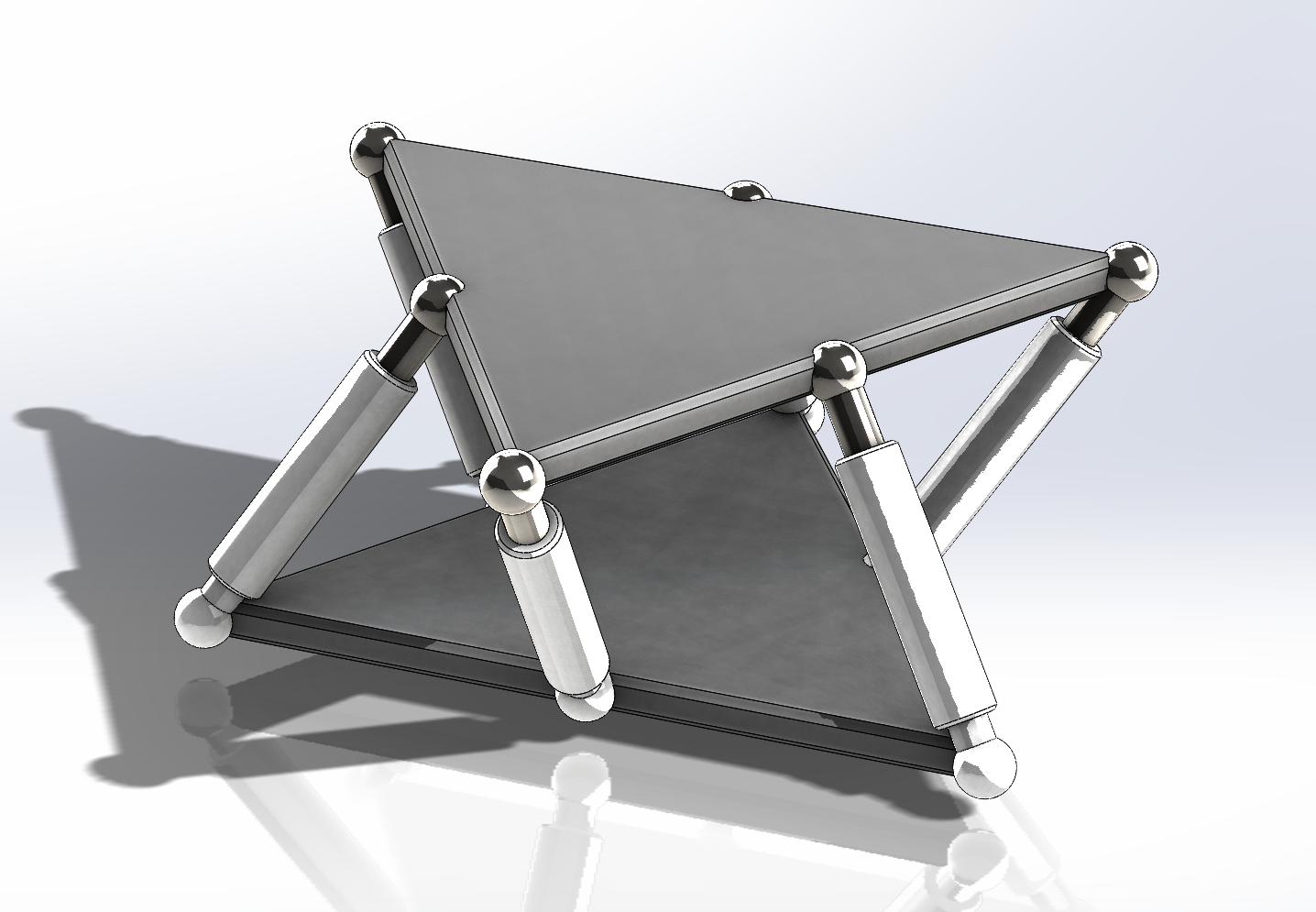}
   \caption{A Stewart platform}
   \label{fig:stewart platform}
\end{figure}
\begin{table*}[htb]
  \centering
  \begin{tabular}{ccccccc}
    \toprule
    Leg\# & 1 &2 &3 &4 &5 & 6\\ [0.5ex]
    \midrule
    Error& $0.073\cdot 10^{-6}$&$-0.713\cdot 10^{-6}$ &$0.267\cdot 10^{-6}$&$-1.048\cdot 10^{-6}$ &$-1.242\cdot 10^{-6}$ & $0.089\cdot 10^{-6}$\\
    \bottomrule
  \end{tabular}
  \caption{The Stewart platform in \cite{dietmaier1998stewart} is re-modeled with its 6 fixed-length legs replaced with prismatic joints. The above lists the average difference between the solved prismatic joint extensions and the fixed leg lengths when the end-effector poses of 40 known configurations are used as targets.}\label{tab:leg difference}
\end{table*}

We first select the shape of the robot with a geometrical parameter from \cite{griffis1993method}, which is used as an example in \cite{porta2009linear}, see Table \ref{tab:stewart geo} (left side) for details. We run IKSPARK for 100 end-effector poses randomly sampled in the space
    $\{\Mat{x,y,z}^\trans\mid x\in [0.2,0.8],\quad y\in [-0.3,0.3],\quad z\in [0.8,1.05]\}$ for translations, and $\{\mR_z(\alpha)\mR_y(\beta)\mR_x(\gamma)\mid \alpha\in [-\frac{\pi}{3},\frac{\pi}{3}],\beta\in [-\frac{\pi}{12},\frac{\pi}{12}],\gamma\in [-\frac{\pi}{12},\frac{\pi}{12}]\}$ for the rotations.
    Our solver is able to find solutions for all of the poses, with an average execution time of $0.2998$ seconds and average number of $15.83$ iterations.

We perform another test with another shape for the Stewart platform given in \cite{dietmaier1998stewart}, the parameters of which are presented in Table \ref{tab:stewart geo} (right side). This robot is known to have 40 different configurations for a given set of $6$ leg lengths. With the legs treated as prismatic joints, and using the 40 corresponding end-effector poses as input to IKSPARK, we expect that the extensions found for the prismatic joints match the given fixed values. The average difference between the extensions of the solved prismatic joints and the fixed leg length is in the order of numerical tolerances, as shown in Table \ref{tab:leg difference}, thus validating the precision of the IK solver.

\subsection{Sawyer 7R robot arm in a cluttered environment}\label{sec:sawyer collision}
\begin{figure*}[h]
   \centering
   \subfloat[Configuration solved using IKSPARK]{
    \begin{minipage}{
        0.49\textwidth}
      \centering
      \includegraphics[width=0.98\linewidth]{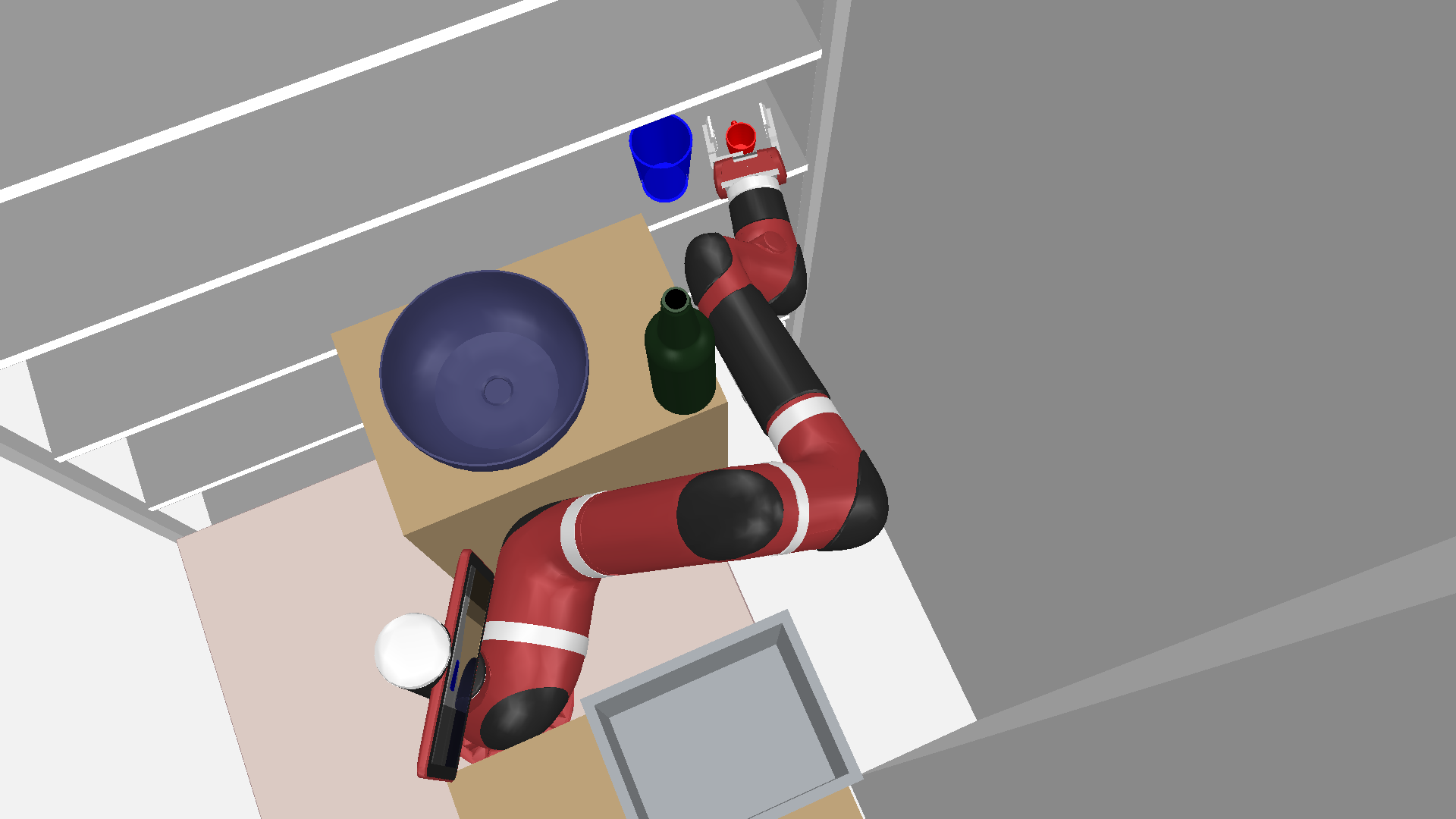}\label{fig:sawyer_ik}
    \end{minipage}}
  \subfloat[Collision bodies and free convex polyhedra]{
    \begin{minipage}{
        0.49\textwidth}
      \centering
     \includegraphics[width=0.98\linewidth]{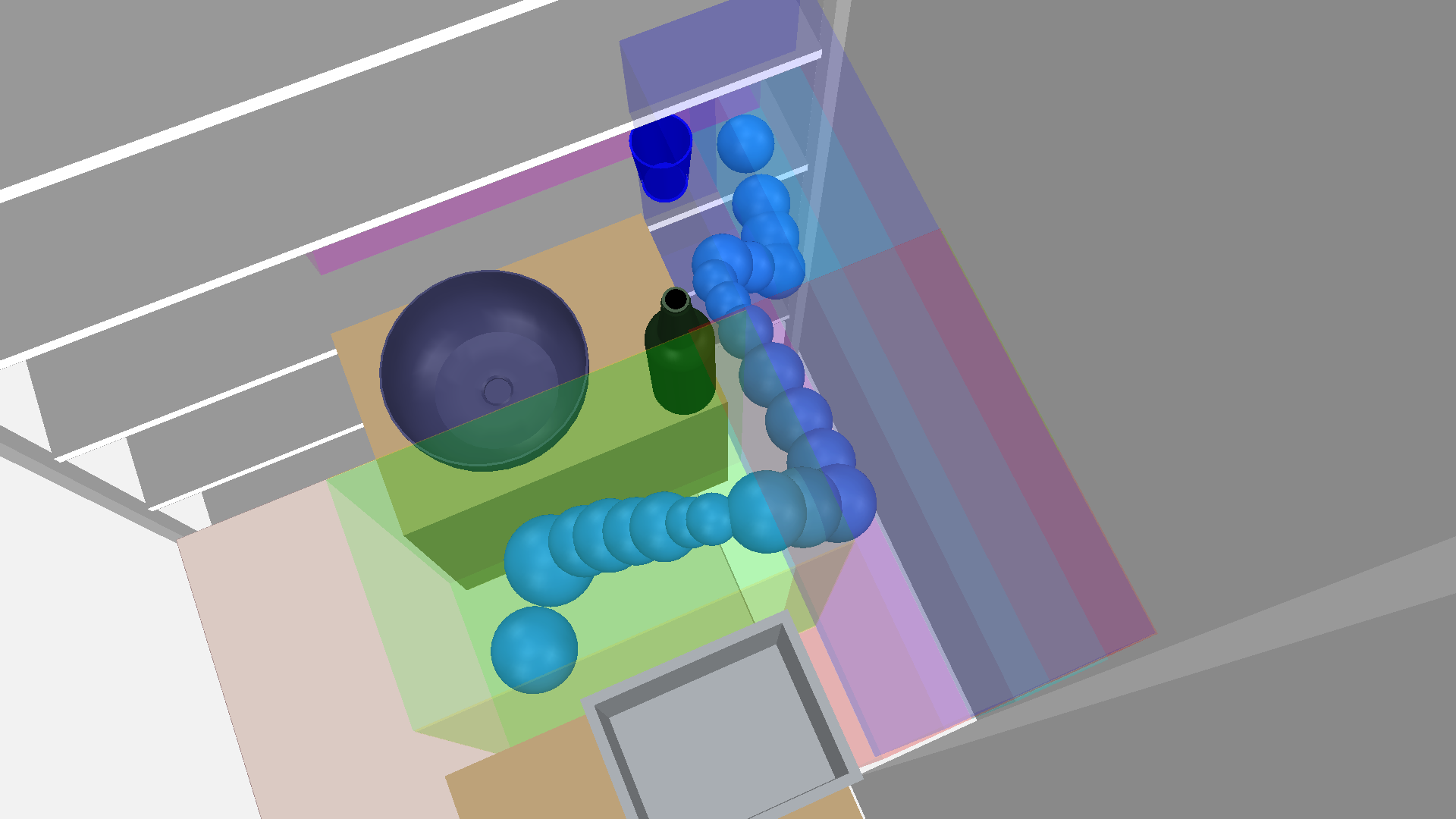}\label{fig:sawyer_ik_set_view}
    \end{minipage}}
   \caption{Sawyer grasping a mug from the shelf.}
   \label{fig:sawyer_obs_ik}
\end{figure*}

\begin{figure}[h]
  \centering
  \includegraphics[width=0.95\linewidth]{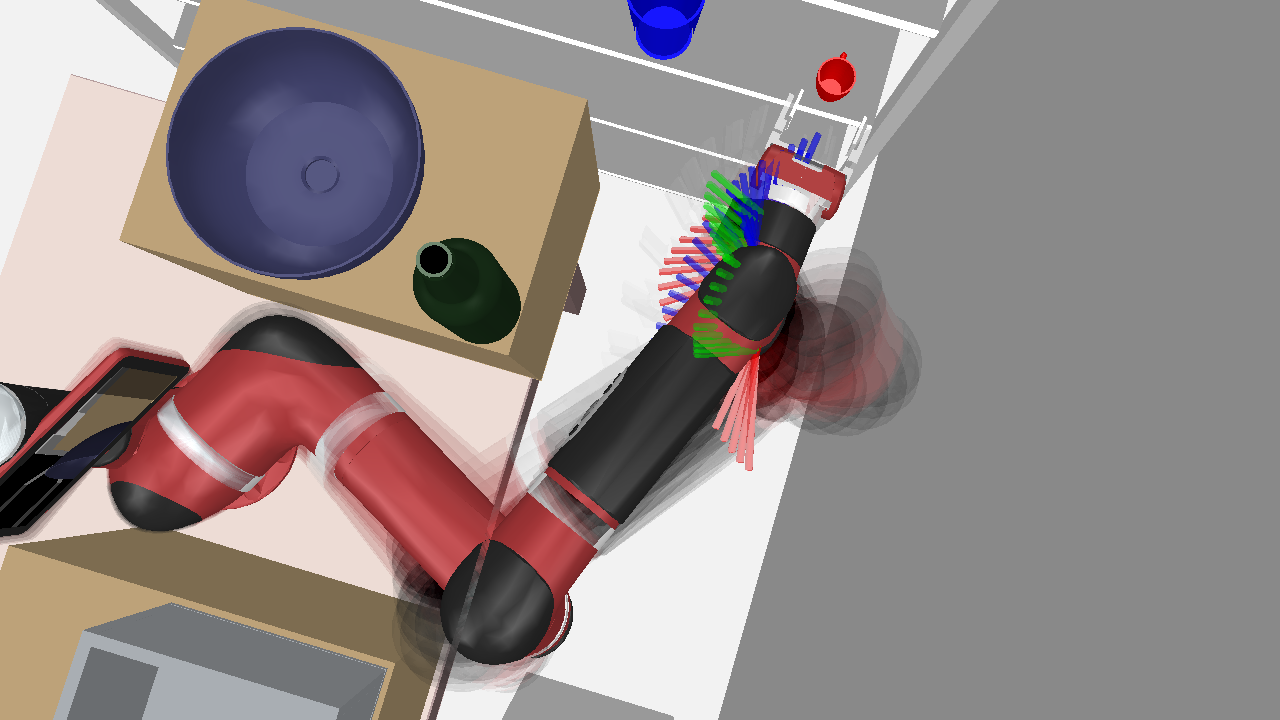}
  \caption{A motion of the Sawyer arm of the mug grasping task solved by IKSPARK.}
  \label{fig:sawyer_motion}
\end{figure}


\subsubsection{Grasping task demonstration}

We demonstrate that IKSPARK can compute obstacle-avoiding configurations in cluttered scenes using the shelf-grasping task in Figure \ref{fig:sawyer_obs_ik}, where a mug must be grasped from a shelf. The platform is a 7-DOF Rethink Robotics Sawyer arm.
Figure \ref{fig:sawyer_ik} shows a feasible posture in which the arm threads through a narrow passage to reach the mug. As illustrated in Figure \ref{fig:sawyer_ik_set_view}, collision bodies are modeled as spheres rigidly attached to the robot, and the free space is represented as the union of six overlapping cuboids. In this example, the mug is treated as part of the free space, while an additional collision body is placed on the gripper to avoid contact with the adjacent blue cup and shelf boards.

In a modified environment with slightly more free space, we use linear interpolation to obtain a trajectory of end-effector poses (indicated by the red, green, and blue axes) from the initial configuration to the final grasping configuration. 
We use IKSPARK to compute complete joint configurations from the end-effector poses while enforcing motion continuity as described in Section \ref{sec:motion}. The result is shown in Figure \ref{fig:sawyer_motion}. The resulting motion traverses the narrow passage without colliding with any obstacles.

\subsubsection{Randomly generated obstacle-aware IK problems}
IKSPARK is tested in two different sets of obstacle-rich environments. We first discuss the environments, then show how optimization problems are built followed by results.
\paragraph{Random environments setup.}
The first set of environments contains randomly generated obstacles and end-effector targets. Specifically, for each environment, we first sample a random Sawyer configuration and use its end-effector pose as the IK target. We then place 20--30 random obstacles while ensuring that none intersect the robot. This construction guarantees that each obstacle-aware IK instance is feasible.
An example random environment is shown in Figure \ref{fig:sawyer_random_env}. 

\paragraph{Workcell environment with random poses.}
The second set of environments includes fixed obstacle placements but random end-effector target poses, simulating the manipulation tasks of the arm in a ``workcell'', as shown in Figure \ref{fig:sawyer_fixed_env}. We generate 600 feasible end-effector target poses across the workcell interior by randomly sampling configurations of Sawyer and choosing the end-effector poses of the ones that do not collide with the obstacles.

\paragraph{Collision bodies.}
To model the robot collision geometry, we rigidly attach different numbers of spheres to the robot, yielding collision models with different levels of geometric fidelity. Figure \ref{fig:collision_models} visualizes these models. 
Higher-density models provide more accurate geometric approximations, but also introduce more variables and greater computational cost. 
In practice, the user can choose among these models to balance accuracy and efficiency. 

\paragraph{Decomposition strategies.} 
We provide different strategies for decomposing the free space for the two sets of environments. 
For each of the random-obstacle environments, we decompose the free space into $n_c$ convex cuboids by applying an inflation algorithm to randomly sampled seed points. For this step, we consider either IRIS or a customized cuboid inflation strategy (see \ref{appendix:inflation} for details). After inflation, duplicated polyhedra are removed. We then reduce the resulting problem size using the preprocessing method described in Section \ref{sec:reduce_convex_poly}.
If, during preprocessing, the condition in Remark \ref{rem:infeasibility if no feasible for j body} is detected, we declare the problem unsuccessfully built and proceed to the next environment. In this case, the generated convex polyhedra do not provide sufficient coverage of the free space. Coverage can be improved, for example, by increasing the number of sampled seed points or by biasing the seed distribution toward the robot.
For the latter strategy, we first solve an IK problem without considering obstacles and then sample seed points within a radius $r_s$ of the center of each collision body in the resulting configuration. This encourages the generated convex polyhedra to cover the region around the robot and, more specifically, the manipulator workspace. We compare the resulting sampling and inflation strategies in Table \ref{tab:build_results}.
We use the ``biased seeds + cuboid inflation'' strategy for the problem building of the random environments for its better performance.

We use a much \emph{simpler} decomposition strategy for the workcell environment.
For every target pose, the problem uses the same convex polyhedra, thus avoiding the need for repeated sampling and inflation.
Using only four manually selected axis-aligned cuboids, the problem is successfully built for all of the 600 target poses. Incidentally, this shows that the problem-building process can be efficient when the polyhedra are well designed.

\begin{table*}[htb]
  \centering
  \resizebox{\textwidth}{!}{
  \begin{tabular}{cccccccc}
    \toprule
    Scenario & Method & \makecell{Success ct. (\%), \\solving} & \makecell{ Success ct., \\ building} & \makecell{Avg. solving \\time (s)} &\makecell{Avg. building \\time (s)} & Avg. pos. err.$^{\dagger\dagger}$ & Avg. rot. err.$^{\ddagger}$\\
    \midrule
    \multirow{8}{*}{\makecell{Random\\obstacles\\random\\targets}} 
    & IKSPARK (Low-Density) & 379/500 (75.8\%) &495/500 & 0.6512 (17.60 iter.) & 3.68 & $2.25\cdot 10^{-7}$ & $5.63\cdot10^{-7}$ \\
    & IKSPARK (Mid-Density) & 338/500 (67.6\%) &490/500 & 0.9170 (21.12 iter.) & 7.88 & $2.24\cdot10^{-7}$ & $5.65\cdot10^{-7}$  \\
    & IKSPARK (High-Density) & 314/500 (62.8\%) &473/500 & 1.1229 (20.36 iter.) & 17.08 & $2.23\cdot10^{-7}$ & $5.91\cdot10^{-7}$\\
    & IKSPARK (Skip VR) & 282/500 (56.4\%) & - & 0.7806 (20.27 iter.) & 0.12 & $2.14\cdot10^{-7}$ & $5.36\cdot10^{-7}$\\
     & SNOPT & 298/500 (59.6\%) & - & 0.0039 & - & $2.85\cdot10^{-9}$ & $2.14\cdot10^{-5}$\\
     & IPOPT & 332/500 (66.4\%) & - & 0.0325 & - & $2.52\cdot10^{-9}$ & $8.51\cdot10^{-5}$ \\
     & NLOPT & 85/500 (17.0\%) & - & 0.0035 & - & $1.23\cdot10^{-7}$ & $7.50\cdot10^{-4}$\\
     & IKSPARK$^{\dagger}$ + IPOPT & 483/500 (96.6\%) & -& $0.9170+0.0093$ & - & $1.81\cdot10^{-9}$ & $6.77\cdot10^{-5}$\\
    \midrule
    \multirow{8}{*}{\makecell{Fixed\\workcell\\random\\targets}}
    & IKSPARK (Low-Density) & 499/600 (83.2\%) & 600/600 & 0.5066 (12.14 iter.) & 3.49 & $1.75\cdot10^{-7}$ & $4.27\cdot10^{-7}$\\
    & IKSPARK (Mid-Density) & 416/600 (69.3\%) & 600/600 & 0.6449 (15.22 iter.) & 6.74 & $2.12\cdot10^{-7}$ & $4.75\cdot10^{-7}$\\
    & IKSPARK (High-Density) & 409/600 (68.2\%) & 600/600 & 0.8284 (16.84 iter.) & 18.06 & $1.82\cdot10^{-7}$ & $4.61\cdot10^{-7}$ \\
    & IKSPARK (Skip VR) & 286/600 (47.7\%) & - & 0.6384 (14.78 iter.) & 0.12 & $1.89\cdot10^{-7}$ & $5.02\cdot10^{-7}$\\
     & SNOPT & 139/600 (23.2\%) & - & 0.0043 & - & $6.71\cdot10^{-10}$ & $1.02\cdot10^{-5}$ \\
     & IPOPT & 243/600 (40.5\%) & - & 0.1214 & - & $1.42\cdot10^{-9}$ & $8.97\cdot10^{-5}$ \\
     & NLOPT & 87/600 (14.5\%) & - & 0.0044 & - & $1.35\cdot10^{-7}$ & $7.64\cdot10^{-4}$\\
     & IKSPARK$^{\dagger}$ + IPOPT & 515/600 (85.8\%) & - & $0.6449 + 0.0154$ & - & $4.42\cdot10^{-9}$ & $7.57\cdot10^{-5}$\\
    \bottomrule
  \end{tabular}
  }
  \footnotesize{$^{\dagger}$Solved with mid density collision model to warm start IPOPT $\mid$ $^{\dagger\dagger}$Euclidean norm (m) $\mid$ $^{\ddagger}$Norm of angle of relative rotation (radians)}
  \caption{Performance of different IK methods in environments with random obstacles and fixed obstacles with different random end-effector targets. The results of IKSPARK with collision models of different densities and with/without variable reduction are shown. The solution time does not include problem setup time. For IKSPARK, the sum of solver time for solving the SDPs is used.}\label{tab:sawyer results combined}
  \label{tab:collision ik compare}
\end{table*}

\begin{table}[htb]
    \centering
    \resizebox{0.48\textwidth}{!}{
    \begin{tabular}{cccc}
    \toprule
        Build Method & $n_c^\dagger$ & Success ct. $(\%)$ & Avg. Build Time \\
        \midrule
        Unbiased Seeds + IRIS & 10 &280/500 (56.0\%) & 19.70 (s) \\
        Biased Seeds + IRIS & 10 &427/500 (85.4\%) & 17.50 (s) \\
        Biased Seeds + Cuboid Inflation & 3.98 &490/500 (98.0\%) & 7.88 (s) \\
        \bottomrule
    \end{tabular}
    }
    \footnotesize{$^\dagger$The average number of polyhedra after removing duplicates.}
    \caption{Results of problem construction for the same random environment with obstacles using different sampling and inflation strategies. The results are based on mid-density robot collision geometry.}
    \label{tab:build_results}
\end{table}

\begin{figure*}[htb]
  \centering
  \subfloat[Random environment]{
    \begin{minipage}{
        0.49\textwidth}
      \centering
      \includegraphics[width=1\linewidth]{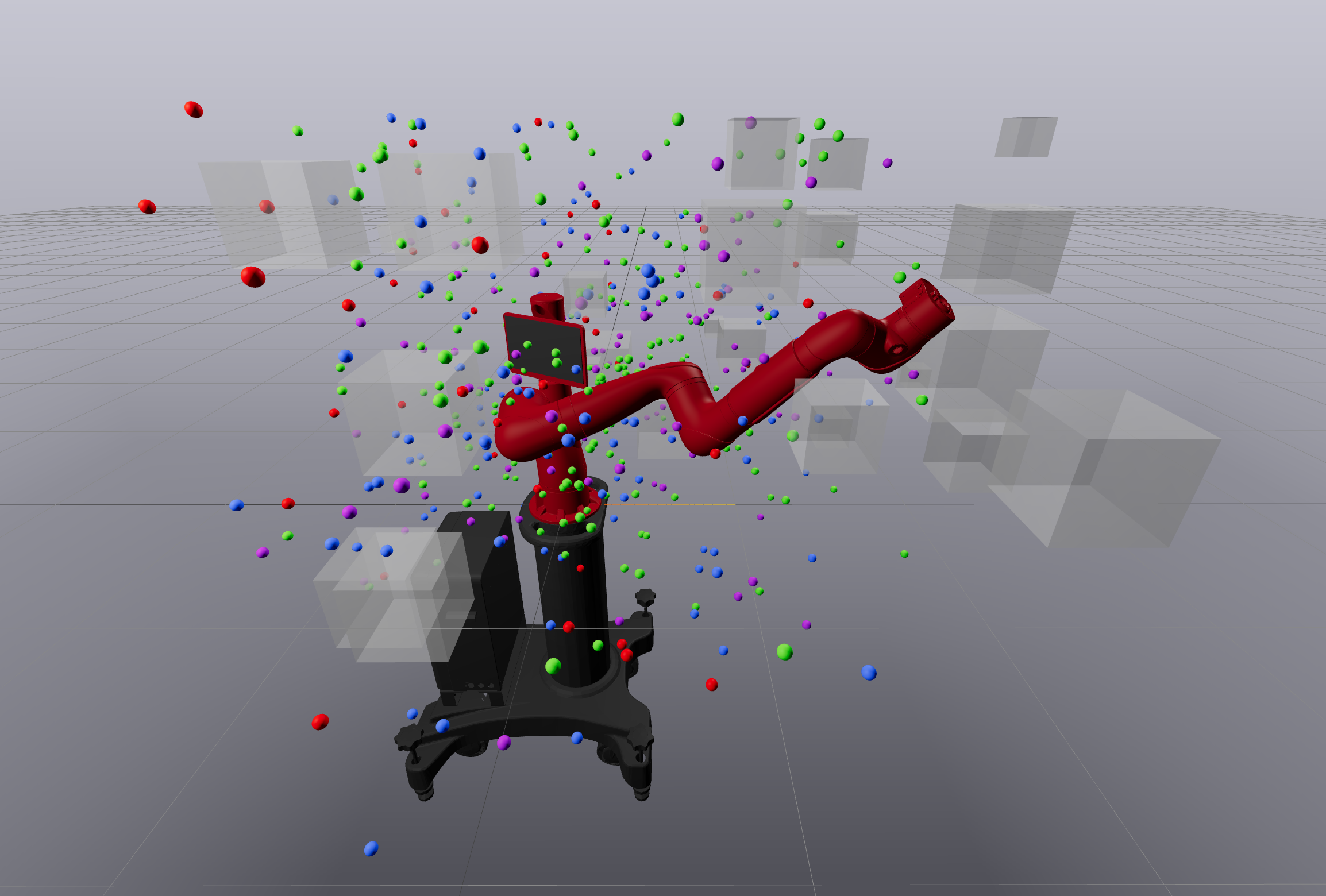}\label{fig:sawyer_random_env}
    \end{minipage}}
  \subfloat[Fixed ``workcell" environment]{
    \begin{minipage}{
        0.49\textwidth}
      \centering
      \includegraphics[width=1\linewidth]{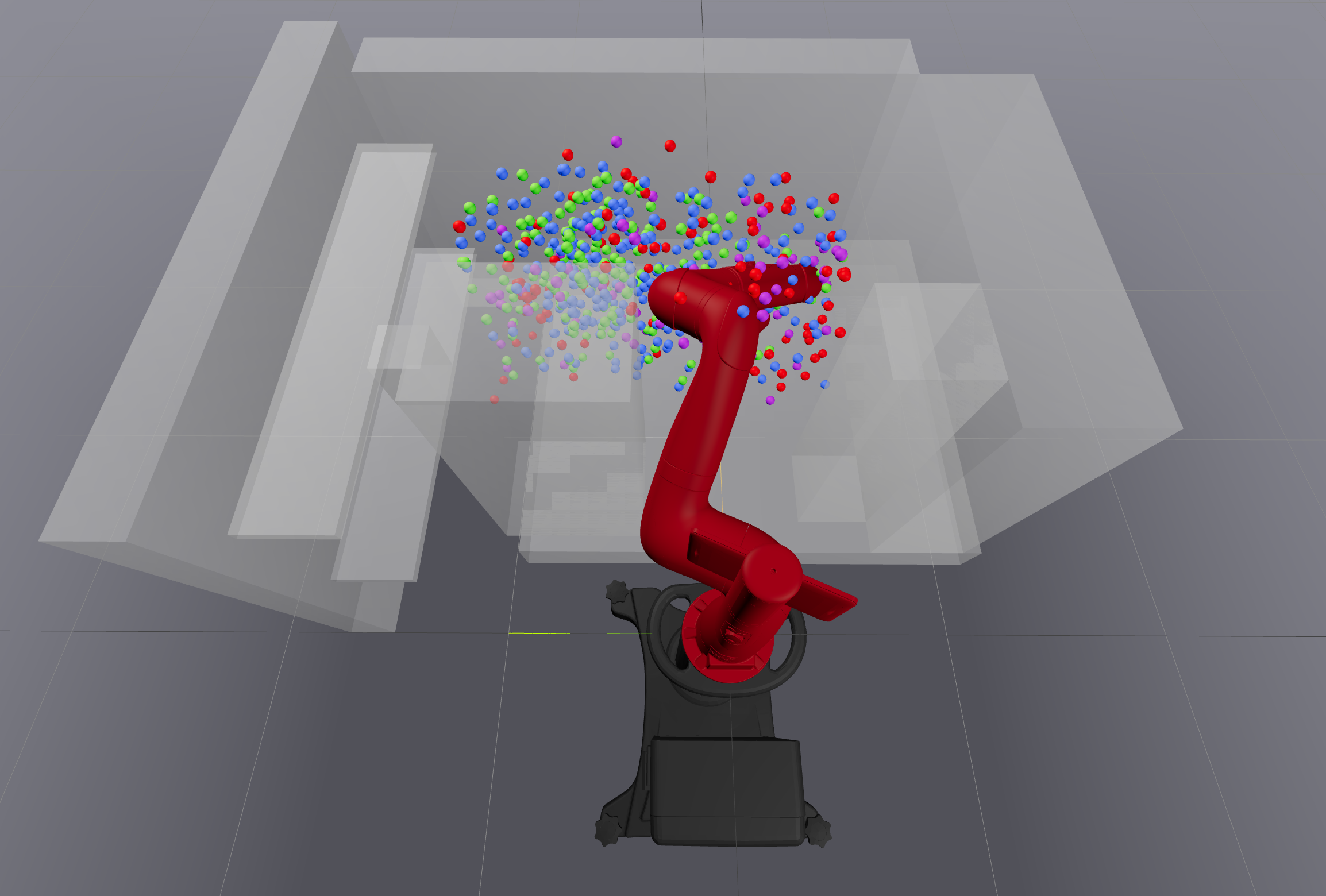}\label{fig:sawyer_fixed_env}
    \end{minipage}}
  \caption{Two different environments for the Sawyer arm. End-effector target positions are color-coded by outcome: green denotes success for both methods, blue denotes success only for IKSPARK (with mid-density collision model), purple denotes success only for IPOPT, and red denotes failure for both methods.}
  \label{fig:two envs sawyer}
\end{figure*}

\paragraph{Results.}
\begin{figure*}[htb]
  \centering
  {
    \rotatebox{90}{\tiny{\quad\makecell{\qquad Random Environment,\\ \qquad Low-Density}}}
    \includegraphics[width=0.82\linewidth, trim=6cm 0cm 6cm 0cm, clip]{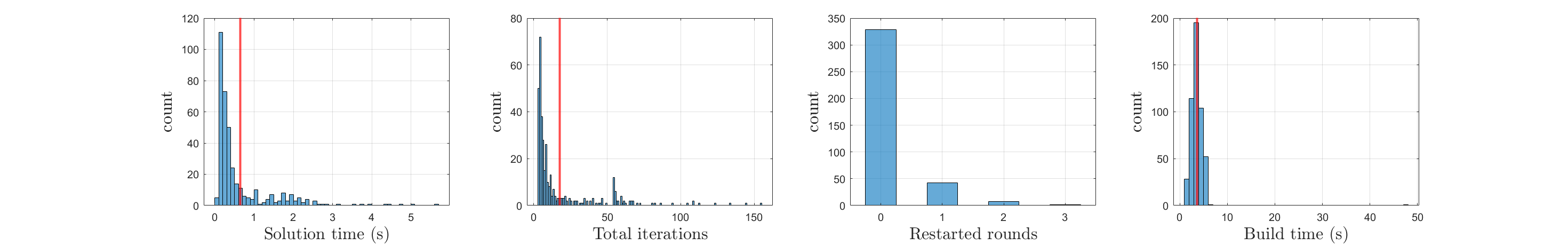}

    \rotatebox{90}{\tiny{\quad\makecell{\qquad Random Environment,\\ \qquad  Mid-Density}}}
    \includegraphics[width=0.82\linewidth, trim=6cm 0cm 6cm 0cm, clip]{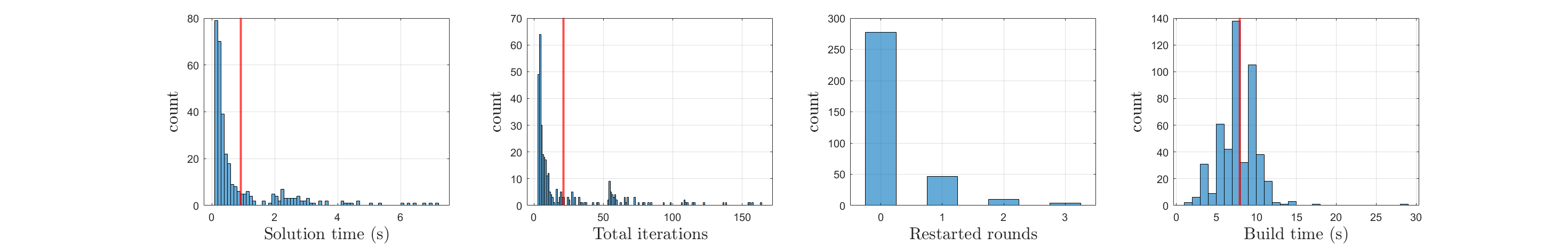}

    \rotatebox{90}{\tiny{\quad\makecell{\qquad Random Environment,\\ \qquad  High-Density}}}
    \includegraphics[width=0.82\linewidth, trim=6cm 0cm 6cm 0cm, clip]{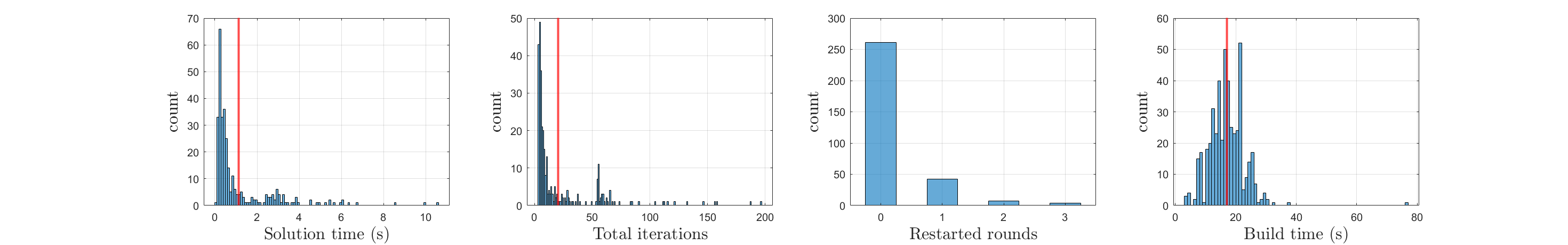}

    \rotatebox{90}{\tiny{\quad\makecell{\qquad Random Environment,\\ \qquad Mid-Density, skip VR}}}
    \includegraphics[width=0.82\linewidth, trim=6cm 0cm 6cm 0cm, clip]{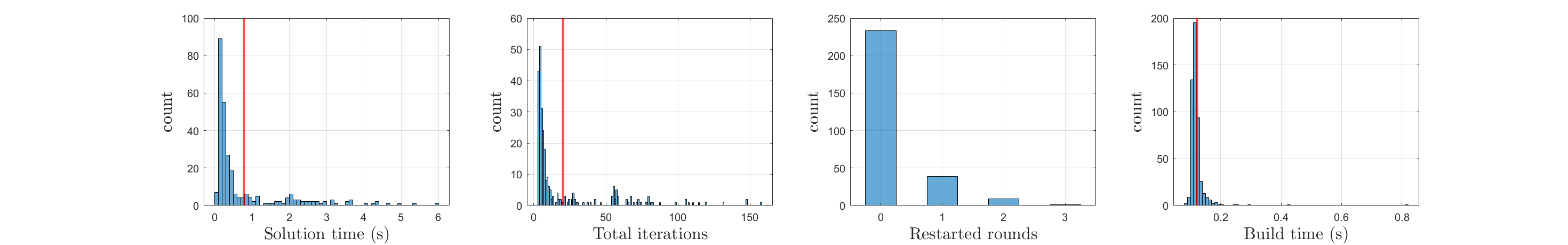}

    \rotatebox{90}{\tiny{\quad\makecell{\qquad Workcell Environment,\\ \qquad Low-Density}}}
    \includegraphics[width=0.82\linewidth, trim=6cm 0cm 6cm 0cm, clip]{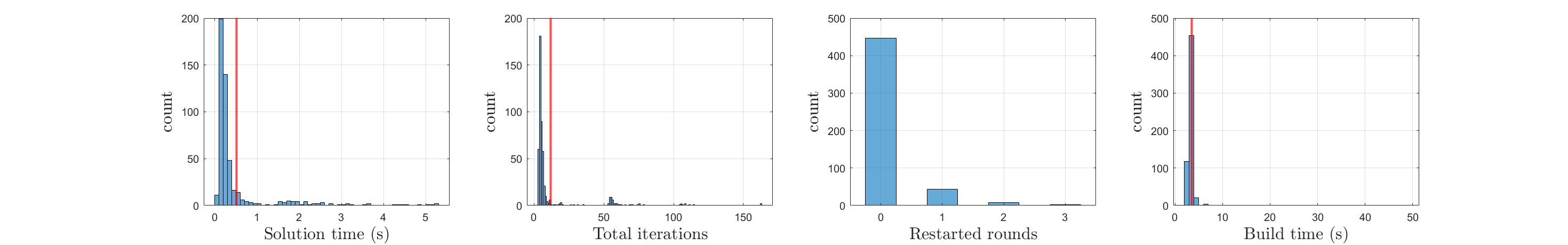}

    \rotatebox{90}{\tiny{\quad\makecell{\qquad Workcell Environment,\\ \qquad Mid-Density}}}
    \includegraphics[width=0.82\linewidth, trim=6cm 0cm 6cm 0cm, clip]{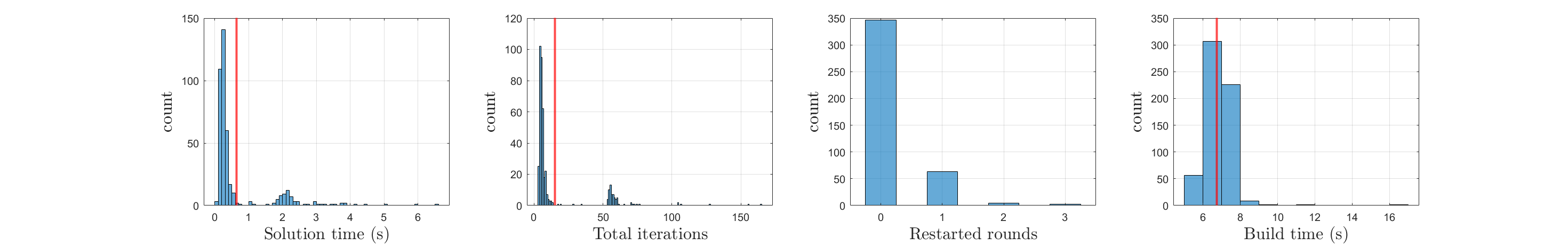}

    \rotatebox{90}{\tiny{\quad\makecell{\qquad Workcell Environment,\\ \qquad High-Density}}}
    \includegraphics[width=0.82\linewidth, trim=6cm 0cm 6cm 0cm, clip]{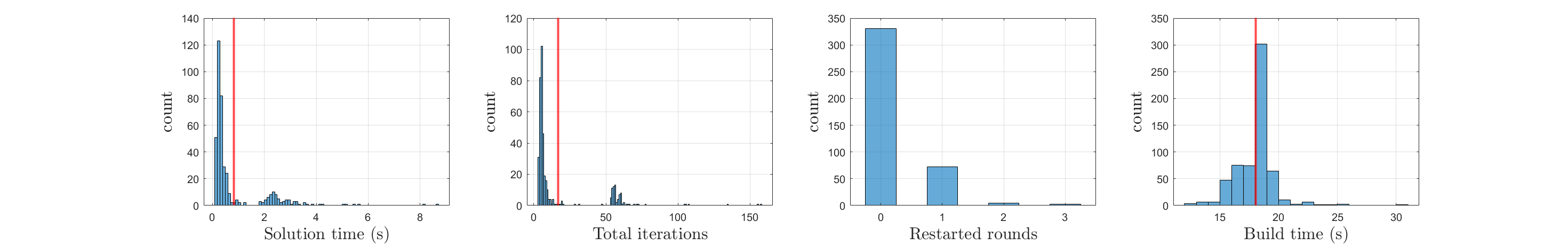}

    \rotatebox{90}{\tiny{\quad\makecell{\qquad Workcell Environment,\\ \qquad Mid-Density, skip VR}}}
    \includegraphics[width=0.82\linewidth, trim=6cm 0cm 6cm 0cm, clip]{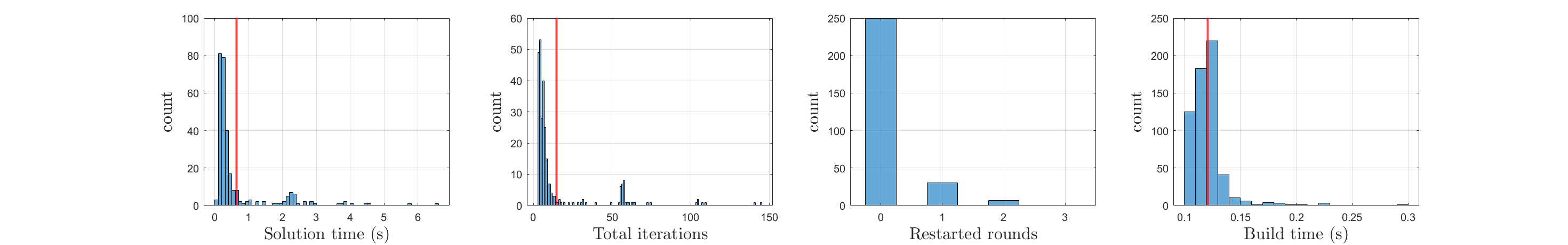}
  }
  \caption{Performance statistics of IKSPARK for solving IK with different settings in Table \ref{tab:collision ik compare}. The columns show the time taken for solving the SDPs, the rank-minimization iterations, the number of restarts, and the time for building the problem and performing the feasibility checks for all pairs of collision bodies and convex polyhedra (for ``skip VR'' results, the feasibility checks are skipped).}
  \label{fig:time_plots}
\end{figure*}

IKSPARK is evaluated on two problem sets with different collision-geometry settings and compared with Drake’s \texttt{InverseKinematics} module. In Drake, the obstacle-aware IK problem is formulated as a nonconvex optimization problem and solved using nonlinear programming solvers such as SNOPT \cite{gill2005snopt}, IPOPT \cite{wachter2006ipopt}, and NLOPT \cite{johnson2014nlopt}. We disable the self-collision constraints and use the zero angles as the initial guess of the Drake solvers. We report the time taken for solving the nonlinear optimization problems, and count a solution as successful if the solver returns with a status of ``success'' and the resulting collision bodies (see Figure \ref{fig:density_urdf}) are not in collision with the obstacles. 

For IKSPARK, we use quaternion-based variables and apply Algorithm \ref{alg:ik} with Problem \ref{prob:IK3} added with \eqref{eq:ik_obs_bilinear}--\eqref{eq:ik_obs_link_cb} as the rank minimization update; we allow up to three restarts, each with a limit of 50 iterations, for each problem. A solution is counted as successful if it is rank-1 (up to tolerance $\epsilon_1=10^{-5}$), and the collision bodies are not in collision with the obstacles. The time consumed for solving the SDPs are recorded, along with the iterations taken in the rank-minimization process. 
We also test the performance of IKSPARK using mid-density collision models and skipping the variable reduction step described in Section \ref{sec:reduce_convex_poly}, shown as ``IKSPARK (Skip VR)'' in the results. 

Table~\ref{tab:collision ik compare} shows that, in random obstacle environments, IKSPARK achieves success rates comparable to those of Drake’s SNOPT and IPOPT solvers. In fixed workcell environments, however, IKSPARK attains a substantially higher success rate. Figure \ref{fig:two envs sawyer} visualizes the target positions and the corresponding outcomes, while Figure \ref{fig:time_plots} summarizes the performance statistics of IKSPARK in both random and fixed environments. Although the nonlinear-programming-based solvers are computationally faster than IKSPARK, all methods achieve small end-effector position and orientation errors in successful cases.

Table~\ref{tab:collision ik compare} also reports the number of environments in which the polyhedral approximation provides sufficient obstacle coverage to successfully construct the IKSPARK optimization problem, reported as ``success ct., building.'' Furthermore, as shown in the rows labeled ``IKSPARK+IPOPT,'' initializing Drake’s IPOPT solver with the IKSPARK solution results in a higher success rate than either individual method.

When the variable-reduction step is skipped, all generated convex polyhedra are retained for every collision body, and the success rate of IKSPARK decreases. In this setting, the larger problem size appears to increase the frequency with which the solver fails to converge within the iteration limit, rather than significantly increasing the average solution time for successful cases. This suggests that the variable-reduction step improves robustness and overall success rate, whereas skipping it may reduce runtime at the cost of more frequent failures.

Another notable observation is that, in all successful IKSPARK solutions in this study, the binary variables in the obstacle-avoidance constraints are nearly integral. Specifically, the final solutions show $\min_j(\max_i \delta_{ij}) \geq 0.9990$. Since $\sum_{i=1}^{n_c} \delta_{ij}=1,\forall j$, this implies that the remaining nonmaximal values satisfy $\delta_{ij}\leq 0.001,\forall j$. Although these variables are generally fractional in the initial relaxed solution, they become effectively binary during the rank-minimization process. We attribute this behavior to the specific implementation of the underlying MOSEK solver rather than to a property specific to IKSPARK, although more investigation in this sense is needed.

\subsection{Shadow Dexterous Hand grasping}
\begin{figure*}[h]
  \centering
  \subfloat[Collision geometry, front view]{%
    \includegraphics[width=0.332\textwidth]{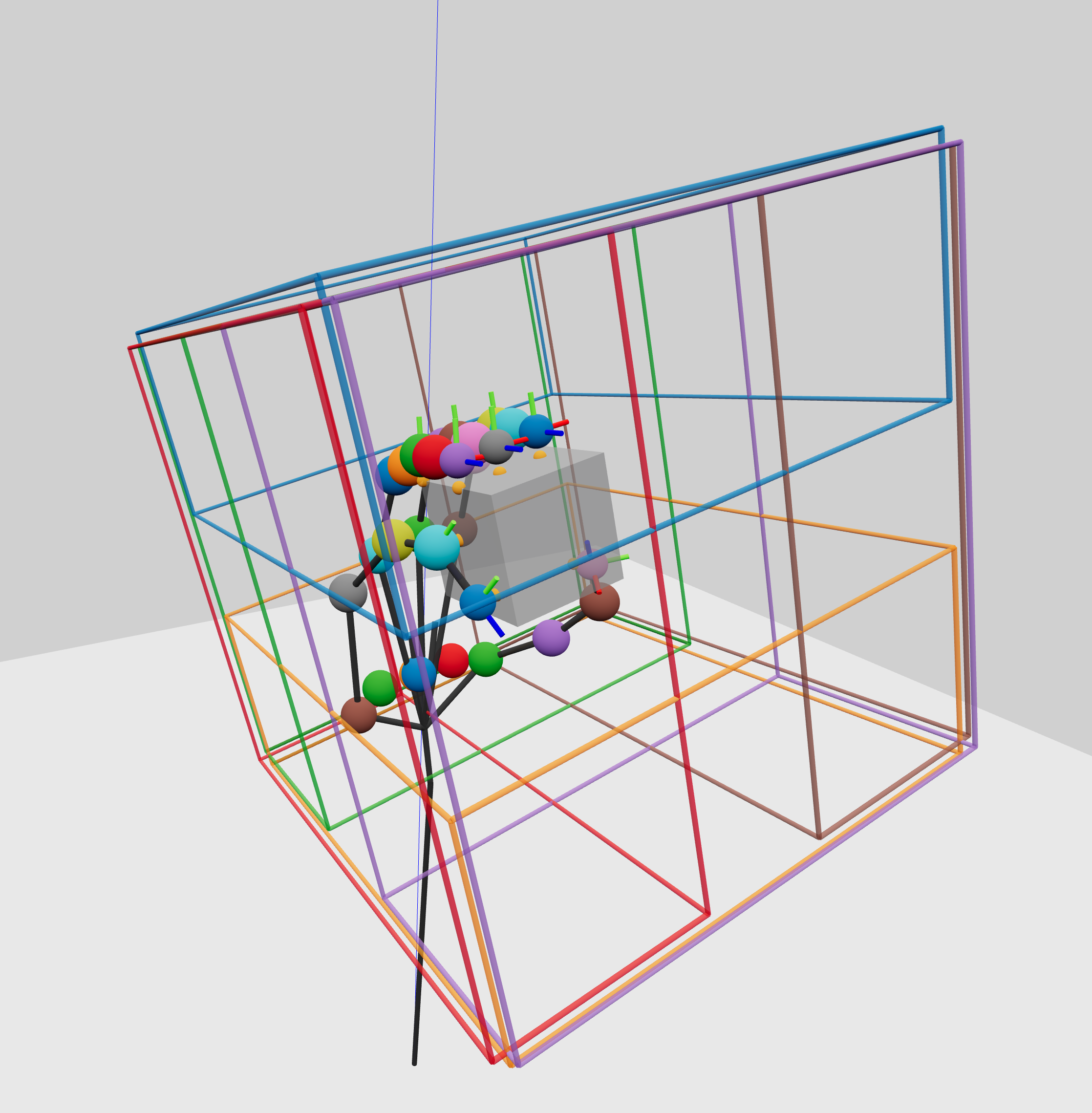}%
    \label{fig:hand_grasp_demo_s1}}
  \subfloat[Grasping posture]{%
    \includegraphics[width=0.248\textwidth]{shadow_hand_cube.png}%
    \label{fig:hand_grasp_demo_0}}
  \subfloat[Collision geometry, rear view]{%
    \includegraphics[width=0.30\textwidth]{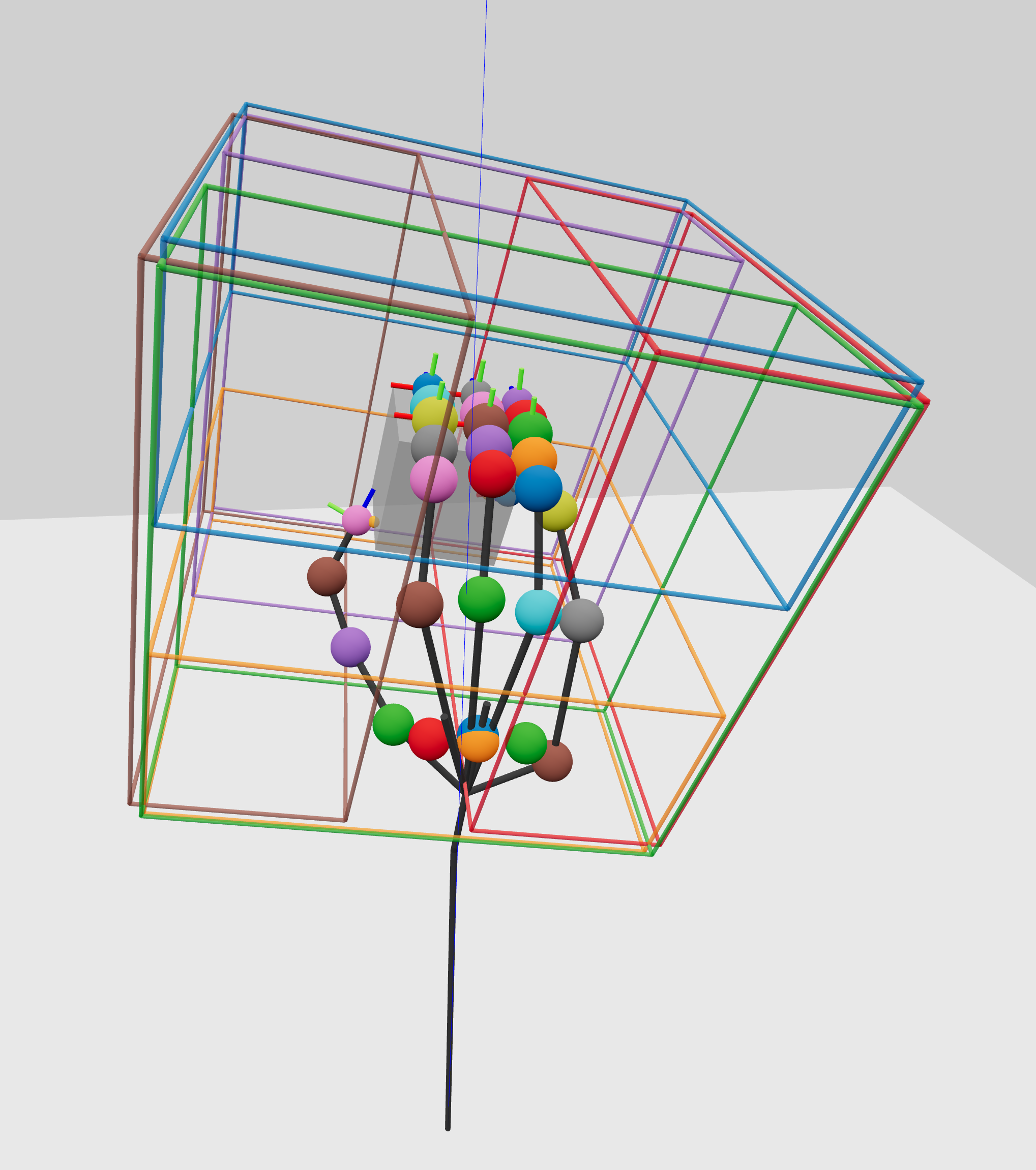}%
    \label{fig:hand_grasp_demo_s2}}
  \caption{IKSPARK finds a grasping posture for the Shadow Dexterous Hand by enforcing assigned finger-face contacts on the cube while avoiding collision, with free space modeled as cuboids (colored wire frames) and the hand collision geometry approximated by spheres.}
  \label{fig:shadow-hand}
\end{figure*}

We demonstrate that IKSPARK can be applied to grasp planning, involving both closed kinematic chains and contacts. Specifically, we consider the Shadow Dexterous Hand and the grasp posture shown in Figure \ref{fig:shadow-hand}: in this example, the 24-DOF hand is required to make contact with the cube while avoiding unintended collision or interpenetration elsewhere. The joint limits are enforced for all 24 revolute joints.

To enforce contact, we assign each finger to a face of the cube and impose, for each finger-face pair, a convex constraint requiring the designated contact point on the finger link to lie on that face. For collision avoidance, we model the free region as a union of cuboids around the cube and approximate the hand collision geometry by spheres constrained to remain within these cuboids, as illustrated in Figures~\ref{fig:hand_grasp_demo_s1} and~\ref{fig:hand_grasp_demo_s2}. 

Constructing the optimization problem takes $15.78$ seconds, including $15.61$ seconds for solving the feasibility problems \eqref{eq:ik_obs_feas} for all pairs. Algorithm \ref{alg:ik} then takes $3.16$ seconds to compute the grasp posture shown in Figure \ref{fig:hand_grasp_demo_0}.

\section{Conclusions}
This paper presented IKSPARK, an obstacle-aware inverse kinematics solver that reformulates inverse kinematics as a convex semidefinite program with additional rank-1 constraints through the introduction of new decision variables. To handle the nonconvexity of the rank-1 constraints, we proposed two rank-minimization schemes that maximize the largest eigenvalues under constant traces of the decision variables. 
The resulting unified framework accommodates diverse joint types, structural constraints, and obstacle avoidance requirements. 
The solver can certify infeasibility of IK problems by solving the convex relaxation of the reformulated problem.
We demonstrated the effectiveness of IKSPARK on various robots, including a dual-arm Baxter robot, a Stewart platform with prismatic joints, a Sawyer arm in cluttered environments, and a Shadow Dexterous Hand performing grasping tasks.

\subsection{Limitations}
At present, the main limitation of IKSPARK is its computational cost, which is typically higher than that of nonlinear programming solvers. The primary bottleneck is the efficiency of semidefinite programming solvers, both in the feasibility checks required for problem construction and in the optimization problems solved in Algorithm~\ref{alg:ik}. Improving SDP solution methods would therefore directly enhance the practical efficiency of the approach.
The restart scheme proposed in Algorithm \ref{alg:reproj} can improve the success rate of IKSPARK, but it also increases the overall runtime. In practice, the user can choose the number of restarts and the iteration limit for each restart to balance the success rate and runtime.
For fixed workspaces, the cost of problem construction may be further reduced by accelerating the feasibility checks. In particular, precomputed lookup tables or learning-based predictors could be used to estimate the feasibility of each collision-body-polyhedron pair, reducing the need to solve SDPs exhaustively for all pairs.

A further limitation is that the current formulation does not incorporate self-collision avoidance, which is important for many practical applications.

\subsection{Future work}
For future work, we plan to extend IKSPARK to handle self-collision avoidance and to explore more efficient algorithms for the rank-minimization step. It might be possible to extend our formulation to collision bodies formed by convex hulls of spheres and points. We also aim to embed IKSPARK within a trajectory optimization framework to solve motion planning problems with complex constraints such as the ones arising in robot dynamics.






\bibliographystyle{ieee}
\bibliography{main}

\appendices
\makeatletter
\renewcommand{\thesection}{Appendix~\Alph{section}}
\makeatother

\begin{table*}[h]
\centering
\caption{Geometric parameters of the Stewart platforms}\label{tab:stewart geo}
\begin{tabular}{l|ll|lll}
\toprule
{} &  \multicolumn{2}{c|}{Griffis/Duffy} & \multicolumn{3}{c}{Dietmaier}\\
i   & $A_i$   & $B_i$  & $A_i$ & $B_i$ &$l_i$\\
\midrule
1   &  $(0,0,0)$ & $(0,0,0)$ & $(0,0,0)$ & $(0,0,0)$ & 1\\
2   &  $(c,s,0)$ & $(-c,s,0)$ &  $(1.107915, 0, 0)$  & $(0.542805, 0, 0)$ &$0.645275$\\
3   &  $(2c,2s, 0)$  &  $(c,s,0)$  & $(0.549094, 0. 756063, 0)$ & $(0.956919, - 0.528915, 0)$ &$1.086284$\\
4   &  $(1+c,s,0)$  &  $(3c,s,0)$   & $(0. 735077, - 0.223935, 0.525991)$  & $(0.665885, - 0.353482, 1.402538)$ &$1.503439$\\
5   &  $(2, 0,0)$  &  $(2c,0,0)$ & $(0.514188, - 0.526063, - 0.368418)$  & $(0.478359, 1.158742, 0.107672)$ &$1.281933$\\
6   &  $(1 , 0, 0)$  &  $(c,-s,0)$ & $(0.590473, 0.094733, - 0.205018)$  & $(- 0.137087, - 0.235121, 0.353913)$ &$0.771071$\\
\bottomrule
\end{tabular}\\
\footnotesize{The parameters $c=\cos(\pi/3)$ and $s = \sin(\pi/3)$.}
\end{table*}
\section{Proof for Proposition \ref{prop:prismatic trans Y_tau}}\label{appendix:proof_prismatic_trans}
We start with the following lemma and proof.
\begin{lemma}\label{lem:rank1 Ytau sqrt}
    Any rank-1 $\mY_{\tau i}$ satisfying \eqref{eq:linearconstraintsYtau} can be written as
    \begin{equation}\label{eq:Ytaui s1 s2}
        \begin{aligned}
            \mY_{\tau i} &= \Mat{s_1\sqrt{t}\vy\\ s_1\sqrt{(1-t)}\vy\\ s_2\sqrt{t}\\ s_2\sqrt{1-t}}\Mat{s_1\sqrt{t}\vy\\ s_1\sqrt{(1-t)}\vy\\ s_2\sqrt{t}\\ s_2\sqrt{1-t}}^\trans,\\
            s_1\text{, }s_2&=\pm 1.
        \end{aligned}
    \end{equation}
  where $\trace(\vy\vy^\trans)=1$, $t\in [0,1]$.
\end{lemma}
\begin{proof}
    Since \(\mY_{\tau i}\succeq 0\) and \(\operatorname{rank}(\mY_{\tau i})=1\),
there exists a vector
\[
\capgreek{\xi} =
\begin{bmatrix}
\vx\\ \vz\\ \alpha\\ \beta
\end{bmatrix},
\qquad \vx,\vz\in\mathbb{R}^3,\quad \alpha,\beta\in\mathbb{R},
\]
such that
\[
\mY_{\tau i}=\capgreek{\xi}\capgreek{\xi}^\trans .
\]
We now translate the linear constraints in \eqref{eq:linearconstraintsYtau} into constraints on
\(\capgreek{\xi}\). From the trace constraint \ref{itm:ytau1} and the block-trace constraints \ref{itm:ytau2},
\[
\|\vx\|^2+\|\vz\|^2+\alpha^2+\beta^2=2,
\]
and
\[
\|\vx\|^2=\alpha^2,\qquad \|\vz\|^2=\beta^2.
\]
Therefore,
\[
\alpha^2+\beta^2=1.
\]
Let
\[
t:=\alpha^2.
\]
Then \(t\in[0,1]\) and \(\beta^2=1-t\). Moreover, the constraint
\(\mY_{\tau i}(7,8)\ge 0\) gives
\[
\alpha\beta\ge 0.
\]
Hence \(\alpha\) and \(\beta\) have the same sign, allowing zeros, so
there exists \(s_2\in\{-1,1\}\) such that
\[
\alpha=s_2\sqrt{t},
\qquad
\beta=s_2\sqrt{1-t}.
\]

It remains to show that \(\vx\) and \(\vz\) can be written using the same
unit vector. The constraint 
$\mY_{\tau i}(4:6,7)=\mY_{\tau i}(1:3,8)$ 
is equivalent to
\[
\alpha \vz=\beta \vx.
\]

First suppose \(t\in(0,1)\). Then \(\alpha\neq0\) and \(\beta\neq0\).
Thus
$\frac{\vx}{\alpha}=\frac{\vz}{\beta}$.
Define
\[
\vr:=\frac{\vx}{\alpha}=\frac{\vz}{\beta}.
\]
Since \(\|\vx\|^2=\alpha^2\), we have \(\|\vr\|=1\). Hence
\[
\vx=\alpha \vr=s_2\sqrt{t}\,\vr,
\qquad
\vz=\beta \vr=s_2\sqrt{1-t}\,\vr.
\]
Choosing any \(s_1\in\{-1,1\}\) and setting \(\vy=s_1s_2\vr\) gives the desired representation.

Now consider the boundary case \(t=0\). Then \(\alpha=0\) and
\(|\beta|=1\). From \(\|\vx\|^2=\alpha^2\), we get \(\vx=\omat\). Also
\(\|\vz\|^2=\beta^2=1\), so \(\vz\) is a unit vector. Choose
\(s_2=\operatorname{sign}(\beta)\), choose any \(s_1\in\{-1,1\}\), and
set \(\vy=s_1 \vz\). Then
\[
\vx=s_1\sqrt{0}\vy,
\quad
\vz=s_1\sqrt{1}\vy,
\quad
\alpha=s_2\sqrt{0},
\quad
\beta=s_2\sqrt{1}.
\]
Thus the desired representation also holds when \(t=0\).

Finally, consider the boundary case \(t=1\). Then \(|\alpha|=1\) and
\(\beta=0\). From \(\|z\|^2=\beta^2\), we get \(\vz=\omat\). Also
\(\|\vx\|^2=\alpha^2=1\), so \(\vx\) is a unit vector. Choose
\(s_2=\operatorname{sign}(\alpha)\), choose any \(s_1\in\{-1,1\}\), and
set \(\vy=s_1 \vx\). Then
\[
\vx=s_1\sqrt{1}\vy,
\quad
\vz=s_1\sqrt{0}\vy,
\quad
\alpha=s_2\sqrt{1},
\quad
\beta=s_2\sqrt{0}.
\]
Hence the representation holds for \(t=1\) as well.

Combining the cases \(t\in(0,1)\), \(t=0\), and \(t=1\), the result
follows.
\end{proof}
We now prove Proposition \ref{prop:prismatic trans Y_tau}. 
For the ``if'' part, using Lemma \ref{lem:rank1 Ytau sqrt}, we have that any rank-1 $\mY_{\tau i}$ satisfying \eqref{eq:linearconstraintsYtau} can be written as \eqref{eq:Ytaui s1 s2}. Evaluating the l.h.s of \ref{itm:ytau7} gives us $s_1s_2 t\vy+s_1s_2 (1-t)\vy=s_1s_2\vy$. When $\rank(\mY_{i})=1$ and $\mY_i$ satisfies \eqref{eq:A structure}, by Proposition \ref{prop:SO3}, the r.h.s of \ref{itm:ytau7} equals $\mR^{(3)}_i$ and \ref{itm:ytau7} becomes $s_1s_2\vy=\mR^{(3)}_i$. Therefore $\mY_{\tau i}(1:3,7)=s_1s_2 t\vy=t \mR^{(3)}_i$ and  \eqref{eq:prismatic_trans_Y_tau} becomes \eqref{eq:prismatic_trans}. 
  For the ``only if'' part, given rotation $\mR_i$, translation $\mT_i$ and scalar $\tau_i$ that satisfy \eqref{eq:prismatic_trans}, we can use \eqref{eq:Y_itau definition} to construct a rank-1 $\mY_{\tau i}\succeq 0$ that satisfies \eqref{eq:linearconstraintsYtau}. And by Proposition \ref{prop:SO3} we have $\mY_i\succeq 0$ satisfies \eqref{eq:A structure} and $\rank(\mY_{i})=1$.
\section{Full expression of $f(\mY,\mY_\tau)$}\label{sec:full f}
The lifted objective can be written as
\begin{multline}\label{eq:appendix_full_f}
f(\mY,\mY_\tau):=\left\|\mA_r\vec(\mY)-\vb_r\right\|_2^2\\
+\left\|\mA_t\vec(\mY)+\mB_t\vec(\mY_\tau)-\vb_t\right\|_2^2.
\end{multline}

The matrices and vectors in \eqref{eq:appendix_full_f} are given by
\[
\mA_r=\mE_{ee}\mG,
\quad
\vb_r=\vec(\mR_{goal}),
\]
\begin{equation}
    \begin{aligned}
        \mA_t=
\sum_{(i,j)\in\ave_{fk}\cap(\cE_r\cup \cE_s)}({}^i\mT_j^\trans&\otimes\ident_3)\mG_i\mS_i\\
&+\sum_{(i,j)\in\ave_{fk}\cap\cE_p}\tau_l\,\mC_3\mS_i,
    \end{aligned}
\end{equation}
\[
\mB_t=
\sum_{(i,j)\in\ave_{fk}\cap\cE_p}(\tau_u-\tau_l)\mC_\tau\mS_i^\tau,
\quad
\vb_t=\mT_{goal}-\mT_{base}.
\]

Here, $\vec(\mY)$ and $\vec(\mY_\tau)$ denote the stacked vectors
\[
\vec(\mY)=
\begin{bmatrix}
\vec(\mY_{i_1})\\
\vdots\\
\vec(\mY_{i_{n_r}})
\end{bmatrix},
\quad
\vec(\mY_\tau)=
\begin{bmatrix}
\vec(\mY_{\tau k_1})\\
\vdots\\
\vec(\mY_{\tau k_{n_p}})
\end{bmatrix},
\]
where $\{i_1,\dots,i_{n_r}\}=\cV_r$ and $\{k_1,\dots,k_{n_p}\}=\cV_p$.

For each $i\in\cV_r$, define
\[
\mG_i=
\begin{bmatrix}
\mC_1\\
\mC_2\\
\mC_3
\end{bmatrix}\in\IR^{9\times 49},
\quad
\vec(\mR_i)=\mG_i\vec(\mY_i),
\]
where
\[
\mC_1\vec(\mY_i)=\mY_i(1\!:\!3,7),\qquad
\mC_2\vec(\mY_i)=\mY_i(4\!:\!6,7),
\]
and
\[
\mC_3\vec(\mY_i)=
\begin{bmatrix}
\mY_i(2,6)-\mY_i(3,5)\\
\mY_i(3,4)-\mY_i(1,6)\\
\mY_i(1,5)-\mY_i(2,4)
\end{bmatrix}.
\]
The matrix $\mG=\operatorname{blkdiag}(\mG_{i_1},\dots,\mG_{i_{n_r}})$ is the block-diagonal map satisfying
\[
g(\mY)=\mG\vec(\mY).
\]

For each $i\in\cV_p$, define
\[
\mC_\tau\vec(\mY_{\tau i})=\mY_{\tau i}(1\!:\!3,7).
\]

Finally, $\mS_i$ and $\mS_i^\tau$ are block selection matrices that extract $\vec(\mY_i)$ from $\vec(\mY)$ and $\vec(\mY_{\tau i})$ from $\vec(\mY_\tau)$, respectively:
\[
\mS_i\vec(\mY)=\vec(\mY_i),
\quad
\mS_i^\tau\vec(\mY_\tau)=\vec(\mY_{\tau i}).
\]

  \section{Proving local convergences}
\subsection{Proof for Proposition \ref{prop:local converge eig max}}\label{appendix:proof_eig_max_converge}
  Consider another version of Problem \ref{prob:IK2c} (we refer it as Problem \label{prob:IK2d}\hyperref[prob:IK2d]{2d}) where the constraint \eqref{eq:ik2cargmin} is replaced with $f_{t,ee}(\mY,\mY_\tau)=\omat$ and $f_{r,ee}(\mY,\mY_\tau)=\omat$. 
  Every optimal solution to Problem \hyperref[prob:IK2d]{2d} is also optimal to Problem~\ref{prob:IK2c} 
  because the new constraints imply that $\mY$, $\mY _\tau$ is the minimizer of the convex function $f$. By Lemma \ref{lem:lambda1convex}, the objective function of this problem is convex in $\mY$ and $\mY_\tau$, respectively. As a result, Problem \hyperref[prob:IK2d]{2d} is a maximization of a convex function over a convex set.
  Algorithm \ref{alg:ik} can be seen as a gradient approach to Problem \hyperref[prob:IK2d]{2d}. Since $\bar{\cY}$ is bounded, when $k\rightarrow +\infty$, we have 
  $(\mY,\mY_\tau)\rightarrow \partial \bar{\cY}$ 
  and $\tilde{\mY}^*,\tilde{\mY}^*_\tau$ is a local maximizer.
  To see why, for any point $\mY,\mY_\tau$ in the neighborhood $N(\tilde{\mY}^*,\tilde{\mY}^*_\tau)$ such that $\mY,\mY_\tau\in\bar{\cY}$, $f_{t,ee}(\mY,\mY_\tau)=\omat$, and $f_{r,ee}(\mY,\mY_\tau)=\omat$, it holds that $\lambda_1(\tilde{\mY}^*)\geq \lambda_1(\mY)$ and $\lambda_1(\tilde{\mY}^*_\tau)\geq \lambda_1(\mY_\tau)$ because by contradiction if there were a $\hat{\mY},\hat{\mY}_\tau\in N(\tilde{\mY}^*,\tilde{\mY}^*_\tau)$ and $\hat{\mY}=\mY^{k-1}+\hat{\mU}_k,\hat{\mY}_\tau=\mY^{k-1}_\tau+\hat{\mU}^k_\tau$ such that $\lambda_1(\hat{\mY})\geq \lambda_1(\tilde{\mY}^*),\lambda_1(\hat{\mY}_\tau)\geq \lambda_1(\tilde{\mY}^*_\tau)$, then the fact that 
  \begin{multline}
      \{\mU^k,\mU^k_\tau\}{\in}\argmax\big{(}\sum_{i\in\cV_r} Z(\mU^k_{i},\mV^{k-1,(1)}_{i})\\
      {+}\sum_{j\in\cV_p} Z(\mU^{k}_{\tau j},\mV^{k-1,(1)}_{\tau j})\big{)}
  \end{multline}
  would not hold. 

  \subsection{Proof for Proposition \ref{prop:local converge eig max adaptive}}\label{appendix:proof_adaptive_converge}
    We show that in each of the two cases, as $k$ increases, the local convergence holds true.
    
    For the case when the counter $p$ is a finite number, we can find a $c_p<1$ such that Problem \ref{prob:IK4} is feasible.
    According to Theorem \ref{thm:V convergence}, $\lim_{k\rightarrow\infty}w(\mY^k)=\lim_{k\rightarrow\infty}w(\mY^k_\tau)=0$ when \eqref{eq:lin approx c Y} and \eqref{eq:lin approx c Ytau} are satisfied, meaning that $\lim_{k\rightarrow\infty}\sum_{i\in \cV_r}\lambda_1(\mY^k_i)=3n_r$ and $\lim_{k\rightarrow\infty}\sum_{i\in\cV_p}\lambda_1(\mY^k_{\tau i})=2n_p$. As a result, $\{\tilde{\mY}^*,\tilde{\mY}_\tau^*\}$ becomes a maximizer of \eqref{eq:ik2c_obj}. It is a local maximizer because for any $\mY,\mY_\tau$ in the neighborhood $N(\tilde{\mY}^*,\tilde{\mY}^*_\tau)$ that satisfies all the constraints in Problem \ref{prob:IK4}, it holds that $\lambda_1(\tilde{\mY}^*)\geq \lambda_1(\mY)$ and $\lambda_1(\tilde{\mY}^*_\tau)\geq \lambda_1(\mY_\tau)$. 

    For the case when $p\rightarrow\infty$, it holds that 
    \aleq{
    w(\mY^{k-1}) {-}\sum_i^{n_r}\langle\nabla \lambda_{1}(\mY^{k-1}_{i}),\mY^{k}_i-\mY^{k-1}_{i}\rangle> c_pw(\mY^{k-1})\label{eq:W complement}
    }
    for every $p\in\mathbb{N}$.
    We then claim that $\mY^{k-1}$ is a local minimum of $w(\mY)$. We prove this claim by way of contradiction. Assume that $\mY^{k-1}$ is \emph{not} a local minimum; by definition, then, there exist an arbitrarily small neighborhood $N(\mY^{k-1})$, a point $\hat{\mY}^k\in N(\mY^{k-1})$, and a sufficiently small $0<\epsilon<w(\mY^{k-1})$ such that the cost can be improved by $\epsilon$, i.e.
    \begin{equation}
        \begin{aligned}
            w(\mY^{k-1}) -\sum_i^{n_r}\langle\nabla \lambda_{1}(\mY^{k-1}_{i}),\hat{\mY}^{k}_i-\mY^{k-1}_{i}\rangle\\
    \leq w(\mY^{k-1})-\epsilon=\hat{c}_pw(\mY^{k-1}),\label{eq:W not local min}
        \end{aligned}
    \end{equation}
    where $\hat{c}_p=\frac{w(\mY^{k-1})-\epsilon}{w(\mY^{k-1})}<1$. With this choice of $\hat{c}_p$, however, \eqref{eq:W not local min} contradicts \eqref{eq:W complement}, proving our claim that $\tilde{\mY}^*,\tilde{\mY}^*_\tau$ is a local maximizer of \eqref{eq:ik2c_obj}.
    Moreover, since we are minimizing a concave function $w(\mY)$, this local optimum needs to be on the boundary.

\section{Inflation strategy for polyhedron generation} \label{appendix:inflation}
Given seed points $\{s_i\}_{i=1}^N \subset \mathbb{R}^3$, workspace bounds $(\ell,u)$, and axis-aligned obstacle boxes $\{(\ell^o_k,u^o_k)\}_{k=1}^M$, we construct collision-free cuboids by expanding each seed along coordinate axes. Each cuboid is initialized at $s_i$ and iteratively enlarged by moving its faces outward while maintaining feasibility with respect to workspace containment and obstacle avoidance. A line search (via bisection) determines the maximal admissible expansion per face. The process terminates when no further expansion is possible.

\begin{algorithm}[htb]
\caption{Axis-Aligned Cuboid Inflation}
\begin{algorithmic}[1]
\Require $\{s_i\}_{i=1}^N$, $(\ell,u)$, $\{(\ell^o_k,u^o_k)\}_{k=1}^M$
\Ensure $\{(\ell_i,u_i)\}_{i=1}^N$

\State For all $i$: $\ell_i \gets s_i,\; u_i \gets s_i$; mark all faces active
\Repeat
    \State $\texttt{progress} \gets \texttt{false}$
    \For{each $i$ and each active face $f$}
        \State \parbox[t]{180pt}{$\alpha \gets \max\{\alpha \in [0,\Delta] : \text{expanding face } f \text{ by }\alpha \text{ is feasible}\}$}
        \If{$\alpha > 0$}
            \State \parbox[t]{140pt}{update $(\ell_i,u_i)$ along face $f$; $\texttt{progress} \gets \texttt{true}$}
        \Else
            \State \parbox[t]{140pt}{deactivate face $f$}
        \EndIf
    \EndFor
\Until{not $\texttt{progress}$}
\State \Return $\{(\ell_i,u_i)\}_{i=1}^N$
\end{algorithmic}
\end{algorithm}

\section{Collision models}
The collision geometries used for computing obstacle-aware IK are presented in Figure~\ref{fig:collision_models}.
\begin{figure}[hbt]
  \centering
  \subfloat[Low Density ($n_b=5$)]{%
    \includegraphics[width=0.78\columnwidth]{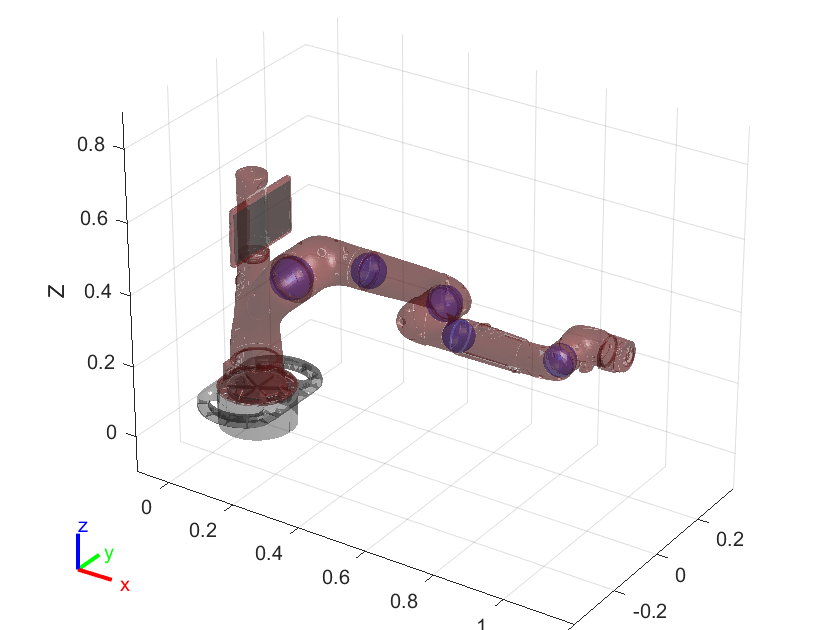}%
    \label{fig:density_1}}\\[2pt]
  \subfloat[Mid Density ($n_b=10$)]{%
    \includegraphics[width=0.78\columnwidth]{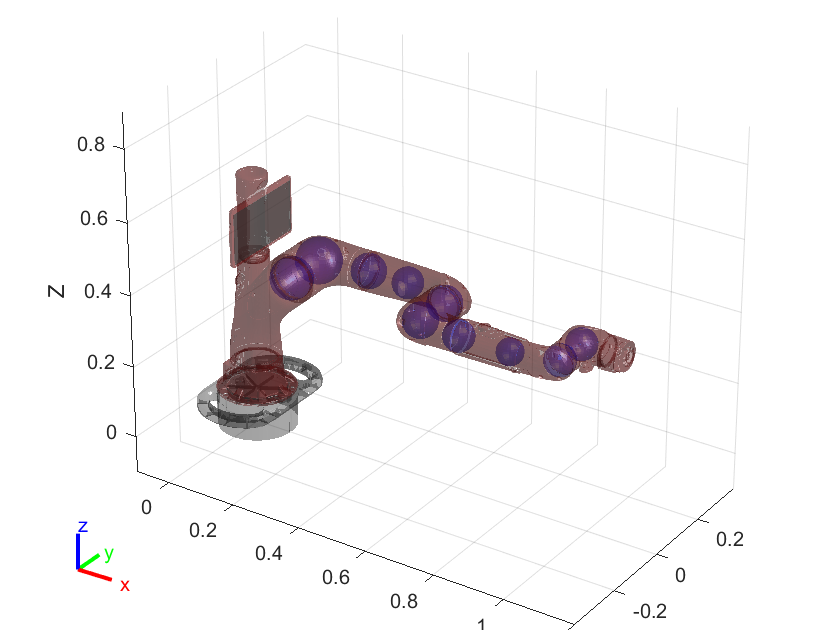}%
    \label{fig:density_2}}\\[2pt]
  \subfloat[High Density ($n_b=22$)]{%
    \includegraphics[width=0.78\columnwidth]{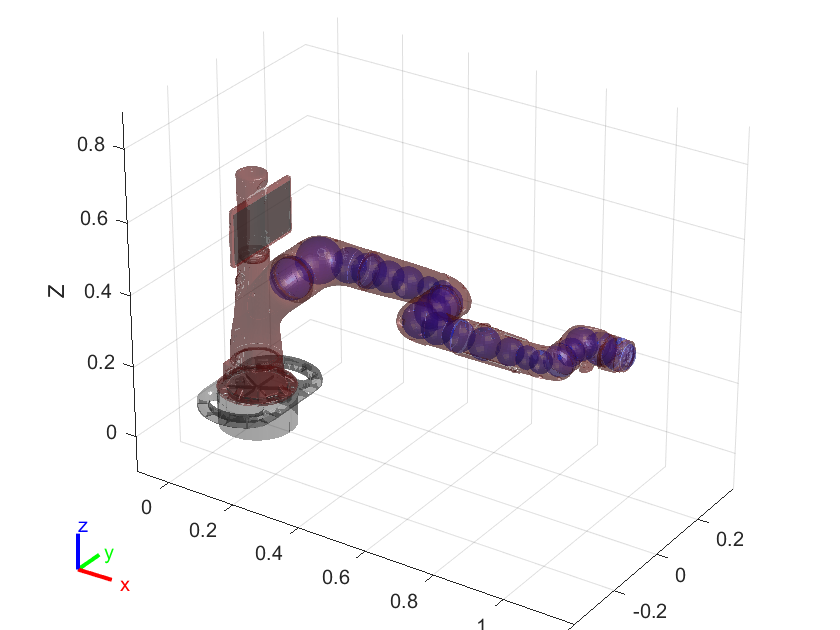}%
    \label{fig:density_3}}\\[2pt]
  \subfloat[Drake Collision]{%
    \includegraphics[width=0.78\columnwidth]{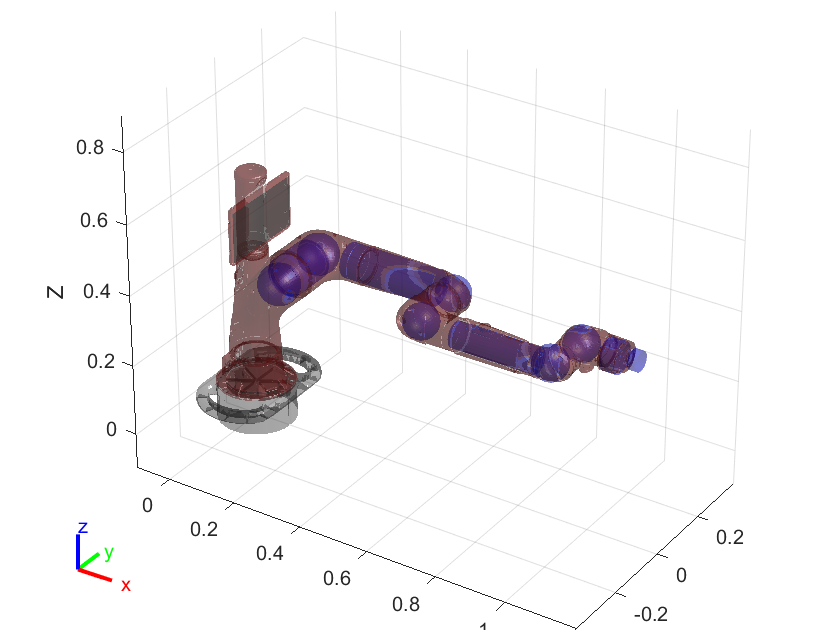}%
    \label{fig:density_urdf}}
  \caption{Different collision geometry models of Sawyer.}
  \label{fig:collision_models}
\end{figure}

\end{document}